\documentclass[11pt]{article}

\usepackage{amsmath}
\usepackage{graphicx}
\usepackage{authblk}
\usepackage{algorithm}
\usepackage{amsthm}
\usepackage{amsfonts}
\usepackage{amssymb}
\usepackage{algpseudocode}
\usepackage{bbm}
\usepackage{bm}
\usepackage{booktabs}
\usepackage{caption}
\usepackage{color}
\usepackage{enumerate}
\usepackage{enumitem}
\usepackage{float}
\usepackage{mathtools} 
\usepackage{hyperref}
\hypersetup{hidelinks}
\usepackage{natbib}
\usepackage{comment}
\usepackage{longtable}
\usepackage{rotating}
\usepackage{xcolor}
\usepackage{fullpage}
\usepackage{tabularx}
\usepackage{url}
\usepackage[section]{placeins}

\newtheorem{lemma}{Lemma}
\newtheorem{corollary}{Corollary}
\newtheorem{proposition}{Proposition}
\newtheorem{theorem}{Theorem}
\theoremstyle{definition}

\newtheorem{assumption}{Assumption}
\newtheorem{remark}{Remark}
\newtheorem{definition}{Definition}

\newcommand{\compactquantilegraphic}[1]{%
  \begingroup
  \setlength{\textwidth}{\dimexpr 2\textwidth/3\relax}%
  #1%
  \endgroup
}

\newcommand{\compactclassificationgraphic}[1]{%
  \begingroup
  \setlength{\textwidth}{\dimexpr 3\textwidth/5\relax}%
  #1%
  \endgroup
}

\algrenewcommand\algorithmicrequire{\textbf{Input:}}
\algrenewcommand\algorithmicensure{\textbf{Output:}}
\DeclareMathOperator*{\argmin}{arg\,min}

\begin{document}
\title{Fair Regression under Approximate Demographic Parity}
\author[1]{Weiwei Wang}
\author[1]{Xianyang Zhang\thanks{Corresponding author: zhangxiany@stat.tamu.edu.}}
\author[1]{Raymond K.~W.~Wong}
\affil[1]{Department of Statistics, Texas A\&M University}
\date{} 

\maketitle

\begin{abstract}
Exact demographic parity requires the same prediction distribution across
sensitive groups. In regression, this can be too restrictive and reduce
predictive accuracy. We instead bound every pairwise Wasserstein distance
between group-specific prediction distributions by a chosen tolerance,
allowing the groups to differ while keeping fairness tunable.

On the real line, the prediction and constrained transport problems have the
same infimum. If the transport problem attains this value and suitable Monge
maps exist, a deterministic transport-form predictor realizes it. Under a
standard one-dimensional condition, the problem becomes a bounded quantile
optimization that is convex when the conditional expected loss is convex in
prediction. Under squared loss with two positively weighted groups, the
$W_1$, $W_2$, and $W_\infty$ constraints yield soft-thresholding, shrinkage,
and clipping. Under the stated conditions, we establish rates for the
known-cost plug-in estimator. When the cost
is unavailable, we describe an observed-loss computational proxy. Synthetic
and California 2018 ACS studies illustrate the tunable fairness--accuracy
trade-off and distinguish calibration feasibility from held-out fairness and
accuracy.
\end{abstract}

\noindent%
{\it Keywords:} Empirical risk minimization, Fairness, Kantorovich barycenter problem, Optimal transport, Wasserstein distance

\section{Introduction}

Algorithmic fairness asks how predictive models can avoid reinforcing
systematic disadvantages associated with sensitive attributes such as gender or marital status.
This question is especially important in high-stakes settings,
including credit scoring, hiring, criminal justice, and healthcare
\citep{mehrabi2021survey, caton2024fairness}. One widely used group-fairness
criterion is \emph{demographic parity} (DP). It requires the prediction
distribution to be the same in every sensitive group. In probabilistic terms,
the prediction must be independent of group membership.

Optimal transport studies how to move one probability distribution to another
at minimum cost. It provides natural tools for aligning prediction
distributions across groups. In regression under \emph{strict} DP, all groups
must share one prediction distribution. For squared loss, earlier work
identifies the optimal common distribution as a Wasserstein barycenter
\citep{chzhen2020fair}. Recently, \citet{feng2026fair} extend this transport framework
to more general loss functions.

Strict DP has a clear mathematical structure, but it can be overly restrictive in
practice. Requiring identical prediction distributions may reduce predictive
accuracy. This fairness-accuracy trade-off is often called the ``cost of
fairness''. For example, \citet{menon2018cost} study this trade-off in binary classification.

In this paper, we study fair regression under \textbf{Approximate Demographic Parity}.
For $p\in[1,\infty]$, write $W_p$ for the order-$p$ Wasserstein distance, a
distance between probability distributions.
See Section
\ref{sec:wasserstein_constraint} for its formal definition. More specifically, we require the
$W_p$ distance between every pair of group-conditional prediction
distributions to be at most a user-chosen tolerance $\epsilon$. The choice
$\epsilon=0$ recovers strict DP, while larger values allow more flexibility.
Practitioners can therefore vary $\epsilon$ to study the trade-off between
fairness and predictive accuracy.

This relaxation changes the problem in an important way. Strict DP reduces the
fairness problem to choosing one common output distribution. Approximate DP
allows each group to have its own output distribution, provided that every
pair remains close. Since predictions are real-valued, we can compare the
group distributions rank by rank. A quantile function gives the prediction
value at each percentile. Given any proposed set of prediction values, we can
sort the values from low to high without changing their distribution. This
sorting step is called monotone rearrangement. As we show later, it also does
not increase the relevant pairwise distances. Our contribution is to use these
tools to analyze the direct pairwise constraints, derive explicit solutions
where possible, and build numerical methods for the remaining cases.

\subsection*{Our Contributions}

The paper makes three main contributions.

\begin{itemize}
    \item \textbf{Transport and quantile formulation.}
    We first show that the prediction problem and a constrained transport
    problem have the same infimum. In other words, their risks can be made
    equally small even if no optimizer attains that limit. If the transport
    problem attains its infimum and suitable optimal deterministic transport
    maps, called Monge maps, exist, then a predictor built from those maps
    realizes the common value. Under a standard one-dimensional condition that
    makes increasing transport optimal---the Spence-Mirrlees or twist
    condition---the relevant maps are nondecreasing. This leads to an
    optimization over bounded group-specific quantile functions. The
    optimization is convex whenever the conditional expected loss, which we
    call the induced cost, is convex in the prediction.

    \item \textbf{Squared-loss geometry.}
    Under the squared-loss model stated below and with two groups that each
    have positive probability, we derive in closed form the optimal adjusted
    group quantile functions for the $W_1$, $W_2$, and $W_\infty$
    constraints. These functions determine the corresponding approximately
    fair predictor: for each observation, the predictor finds its rank within
    its group and returns the adjusted quantile at that rank. To derive the
    formulas, we modify the difference between the two group quantile
    functions while preserving their weighted average, and then recover both
    adjusted quantile functions. The $W_1$
    constraint uses soft-thresholding: it sets gaps whose
    magnitude is below a common threshold to zero and reduces larger gaps by
    that threshold. The $W_2$ constraint shrinks the entire quantile-gap
    function by a common factor when the constraint is active. The $W_\infty$
    constraint clips each gap so that its magnitude does not exceed
    $\epsilon$. These formulas make the effect of each constraint directly
    visible.

    \item \textbf{Estimation and computation.}
    We construct a known-cost plug-in estimator and
    establish its convergence rates under the stated regularity conditions.
    For computation, we develop a finite-grid projected-gradient algorithm.
    At each outer iteration, a gradient step is followed by a group-weighted
    Dykstra projection that jointly enforces pairwise fairness, nondecreasing
    quantile functions, and prediction bounds. For unknown induced costs, we
    also provide an observed-loss computational proxy based on observed
    responses.
    
\end{itemize}

\paragraph{Related work.}
The closest work studies regression under strict DP. For squared loss,
\citet{chzhen2020fair} show that the optimal fair prediction distribution is a
Wasserstein barycenter. \citet{feng2026fair} develop a strict-DP framework for
more general loss functions. Both settings require every group to have the
same prediction distribution. Our setting instead allows group-specific
distributions, provided that every pair remains within the chosen Wasserstein
tolerance.

Other papers relax fairness in different ways. \citet{agarwal2019fair} give
reduction-based algorithms for two criteria. One limits differences between
each group's prediction distribution function and the overall distribution
function. The other limits prediction loss within each group.
\citet{xu2023pareto} study a fairness-utility frontier based on average
pairwise Wasserstein disparity. We instead place a hard bound on the largest
pairwise distance. \citet{chzhen2022minimax} measure improvement relative to
the unfairness of the unconstrained regression function. Our tolerance is an
absolute Wasserstein bound, and it is not scaled by that baseline unfairness.

Optimal transport has also been used to repair data and predictions.
\citet{gordaliza2019obtaining} move group-conditional data distributions
toward a Wasserstein barycenter for fair classification.
\citet{jiang2020wasserstein} use $W_1$ distances to enforce independence
between classifier outputs and sensitive information.
\citet{legouic2020projection} post-process a regression estimator into the
closest estimator satisfying strict DP. More recent work uses a demographic
parity penalty rather than a hard constraint \citep{lince2026geometry}, or
enforces parity only at selected quantiles or score thresholds
\citep{charpentier2026localized}. In contrast, we impose one absolute
tolerance on the maximum Wasserstein distance over all pairs of group
prediction distributions. Our transport arguments use standard tools from
one-dimensional optimal transport \citep{santambrogio2015optimal}, and the
$W_1$ numerical projection uses efficient projection onto an $\ell_1$ ball
\citep{duchi2008efficient}.

The remainder of this paper is organized as follows. Section \ref{sec:setup}
introduces the population problem and its optimal-transport formulation. It
then gives the squared-loss closed forms and the extensions to convex induced
costs and multiple groups. Section \ref{sec:estimation} presents the
known-cost plug-in estimator and the numerical procedures. Section
\ref{sec:theory} gives consistency and convergence rates for this estimator.
Section \ref{sec:numerical} presents regression
experiments and a classification illustration. Section \ref{sec:real_data}
applies the method to income prediction with California ACS data. Section
\ref{sec:conclusion} concludes the paper.

\section{Problem Setup and Population Characterization} \label{sec:setup}

Consider a regression problem.
Let
$X\in\mathcal X\subseteq\mathbb R^d$ contain the predictors, $Y\in\mathbb R$
be the response, and $S\in\mathcal S:=\{1,\ldots,K\}$ indicate the sensitive group. Write
$r_s=\mathbb P(S=s)$ and assume $r_s>0$.
We suppose that predictions lie in a compact
interval $\mathcal I=[a,b]$.
Their accuracy is measured by a loss
$\ell(y,q)$, where the prediction $q$ lies in $\mathcal I$.

%

We assess fairness by comparing prediction distributions across groups.
For real-valued predictions, each distribution is characterized by its
quantile function, which maps a probability level to the corresponding
prediction value. Specifically, for a distribution function $F$, the
quantile function is defined by
\[
Q(u):=\inf\{t:F(t)\ge u\},\qquad u\in(0,1).
\]
We write
\[
\mathcal Q_{\mathcal I}
:=
\{Q:(0,1)\to\mathcal I:
Q\text{ is nondecreasing and left-continuous}\}
\]
for the class of bounded quantile functions considered throughout the
paper. We extend each $Q\in\mathcal Q_{\mathcal I}$ to $[0,1]$ by setting
\[
Q(0):=\lim_{u\downarrow0}Q(u),\qquad
Q(1):=\lim_{u\uparrow1}Q(u).
\]
These limits exist because $Q$ is bounded and nondecreasing. Unless stated
otherwise, an equality or constraint between quantile functions need only
hold outside a set of ranks with total length zero. Equivalently, it holds
almost everywhere with respect to Lebesgue measure.

We next introduce the transport notation used to compare group distributions.
A coupling is a
way to pair draws from two
distributions $\mu$ and $\nu$. Formally, it is a probability measure $\gamma$
on $\mathbb R^2$. Its first coordinate has distribution $\mu$, and its second
coordinate has distribution $\nu$; these coordinate distributions are called
the marginals. We write $\Pi(\mu,\nu)$ for all such pairings. If $Z$ has
distribution $\mu$ and we apply a measurable map $T$, the distribution of
$T(Z)$ is the push-forward of $\mu$. For any measurable set $B$ of possible
output values, $(T_\#\mu)(B)=\mu(T^{-1}(B))$, where $T^{-1}(B)$ is the set of
inputs that $T$ maps into $B$.

\begin{definition}[Demographic parity]
A predictor $f:\mathcal X\times\mathcal S\to\mathcal I$ satisfies
demographic parity if the conditional law of $f(X,S)$ given $S=s$ is the same
for every $s\in\mathcal S$.
\end{definition}

In other words, the prediction and the group label are independent. We write
this as $f(X,S)\perp S$.

We first define the population target without restricting the form of the
predictor. Let $\mathcal F_{\mathcal I}$ contain all measurable predictors
from $\mathcal X\times\mathcal S$ to $\mathcal I$. This broad class includes
every transport-based adjustment used below. A smaller model class could lead to additional
error because it may not contain the best adjustment. Under strict
demographic parity, the target fair regression function is
\begin{equation}\label{FP}
\underset{f\in\mathcal F_{\mathcal I}:\ f(X,S)\perp S}{\arg\inf}
\mathbb E\{\ell(Y,f(X,S))\}.
\end{equation}

\subsection{Connection to the Kantorovich Barycenter Problem}

The transport construction begins by reducing $(X,S)$ to a one-dimensional score. For this reduction to preserve the relevant distributional information, the score must be sufficient for the conditional distribution of $Y$. We formalize this requirement as follows.

\begin{assumption}\label{assum:f_star}
There exists a measurable $f^*:\mathcal X\times\mathcal S\to\mathcal I$ such
that $Y\perp(X,S)\mid f^*(X,S)$.
\end{assumption}

{Assumption~\ref{assum:f_star} requires the scalar score $f^*(X,S)$ to retain all information in $(X,S)$ relevant to the conditional distribution of $Y$. This is stronger
than ordinary regression, which focuses only on the conditional mean. For
squared loss, however, it is enough to use
\(f^*(X,S)=\mathbb E(Y\mid X,S)\). The remaining noise adds the same amount to the
risk of every predictor and therefore does not change which predictor is the best.}

For a latent score \(z\) and prediction \(q\), the induced cost is the
conditional expected loss
\[
\mathcal C(z,q):=\mathbb E\{\ell(Y,q)\mid f^*(X,S)=z\}.
\]
Taking the conditional expectation first gives
\begin{align}\label{eq-cost}
\mathbb E\{\ell(Y,f(X,S))\}
=\mathbb E\{\mathcal C(f^*(X,S),f(X,S))\}.
\end{align}
The prediction risk can therefore be viewed as the cost of changing a latent
score into a final prediction. For group $s$, let
$\mu_s=\mathcal L(f^*(X,S)\mid S=s)$ denote the distribution of its latent
scores. Let $F_s^*$ and $Q_s^*$ be the corresponding distribution and
quantile functions.

Non-atomic means that no single latent-score value has positive probability.
\begin{assumption}\label{assum:cost_function}
Each $\mu_s$ is non-atomic, $\mathcal C$ is continuous on
$\mathcal I^2$, and every deterministic transport statement below is made
only when the corresponding Monge problem admits an optimal measurable map.
\end{assumption}

The next distinction will matter in Theorem~\ref{thm1}. Pairing probability
mass through a coupling may allow randomization. A Monge map is instead a
deterministic rule that sends each latent score to one adjusted value. Such a
map is needed only when we want to construct a deterministic predictor.

We also want higher latent scores to be paired with higher adjusted values.
The following standard one-dimensional condition gives this ordering.

\begin{assumption}\label{assum:cost_function2}
The cost is twice differentiable on $\mathcal I^2$ and satisfies the
Spence-Mirrlees condition:
$\partial^2_{zq}\mathcal C(z,q)<0$ \citep{villani2009optimal,chiappori2017multi}.
\end{assumption}
Under this condition, sorting both variables in the same order gives an
optimal pairing. This operation is called increasing rearrangement, and the
resulting transport maps are nondecreasing. Under strict demographic parity,
all groups must share one target distribution. This recovers the barycenter
formulation of \citet{feng2026fair}: the barycenter is the shared distribution
that best balances the group-specific losses.

\subsection{Relaxation to Approximate Fairness}

Approximate parity allows the group prediction distributions to differ, but
only by a chosen tolerance. Let $\nu_{f,s}$ be the prediction distribution
for group $s$. We measure the difference between two such distributions by
the order-$p$ Wasserstein distance $W_p$, which is defined formally in
Section~\ref{sec:wasserstein_constraint}. For $p\in[1,\infty]$ and
$\epsilon\ge0$, the approximate fair-regression problem is
\begin{equation}\label{CFP_cost}
\begin{aligned}
\inf_{f\in\mathcal F_{\mathcal I}}\quad&
\mathbb E\{\mathcal C(f^*(X,S),f(X,S))\}\\
\text{s.t.}\quad&W_p(\nu_{f,i},\nu_{f,j})\le\epsilon,
\qquad 1\le i<j\le K.
\end{aligned}
\end{equation}
Setting $\epsilon=0$ gives strict demographic parity.

We can state the same task directly in terms of distributions. For each
group, choose an adjusted prediction distribution and a way to pair its
latent scores with predictions from that distribution. The resulting
distributional problem is called the constrained Kantorovich problem. It
extends the corresponding Kantorovich barycenter problem: when $\epsilon=0$,
every group must choose the same distribution. For $\epsilon>0$, the group
distributions may differ, so the solution need not be a single barycenter:
\begin{equation}\label{KP}
\begin{aligned}
\inf_{\nu_1,\ldots,\nu_K\in\mathcal P(\mathcal I)}\quad&
\sum_{s=1}^K r_s\inf_{\gamma_s\in\Pi(\mu_s,\nu_s)}
\int\mathcal C(z,q)\,d\gamma_s(z,q)\\
\text{s.t.}\quad&W_p(\nu_i,\nu_j)\le\epsilon,
\qquad 1\le i<j\le K.
\end{aligned}
\end{equation}
Here $\mathcal P(\mathcal I)$ contains all probability distributions supported
on the allowed prediction range $\mathcal I$.

Theorem~\ref{thm1} answers two different questions. First, do the prediction
problem and the distributional problem have the same best achievable value?
Second, when can that value be achieved by a deterministic prediction rule?
The first conclusion needs only the base assumptions. The second also needs
the optimum and the required deterministic transport maps to exist.
For brevity, we call the approximate fair-regression
problem AFRP and the constrained Kantorovich problem CKP.

\begin{theorem}[Equality of infima and conditional attainment]\label{thm1}
Under Assumptions~\ref{assum:f_star} and \ref{assum:cost_function}, the AFRP
and CKP infima in \eqref{CFP_cost} and \eqref{KP} are equal. If the CKP
attains its infimum at $(\bar\nu_1,\ldots,\bar\nu_K)$ and, for every $s$, an
optimal Monge map $T_s$ from $\mu_s$ to $\bar\nu_s$ exists, then
\[
f_{\mathrm{fair}}^*(x,s)=T_s(f^*(x,s))
\]
attains the AFRP infimum. Without CKP or Monge attainment the theorem asserts
only equality of infima.
\end{theorem}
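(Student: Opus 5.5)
We prove the two inequalities between the infima separately and then treat attainment. Write $V_{\mathrm{AFRP}}$ and $V_{\mathrm{CKP}}$ for the values of \eqref{CFP_cost} and \eqref{KP}.

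\emph{Step 1: $V_{\mathrm{CKP}}\le V_{\mathrm{AFRP}}$.} Take any feasible $f\in\mathcal F_{\mathcal I}$ and set $Z=f^*(X,S)$. For each $s$, let $\gamma_s$ be the conditional law of $(Z,f(X,S))$ given $S=s$. Its first marginal is $\mu_s$ and its second is $\nu_{f,s}\in\mathcal P(\mathcal I)$, so $\gamma_s\in\Pi(\mu_s,\nu_{f,s})$. The constraints $W_p(\nu_{f,i},\nu_{f,j})\le\epsilon$ are exactly the CKP constraints for $\nu_s=\nu_{f,s}$. By \eqref{eq-cost}, which uses Assumption~\ref{assum:f_star}, and by conditioning on $S$,
\[
\mathbb E\{\ell(Y,f(X,S))\}=\sum_s r_s\int\mathcal C\,d\gamma_s\ge\sum_s r_s\inf_{\gamma\in\Pi(\mu_s,\nu_{f,s})}\int\mathcal C\,d\gamma .
\]
Taking the infimum over feasible $f$ gives the inequality.

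\emph{Step 2: $V_{\mathrm{AFRP}}\le V_{\mathrm{CKP}}$.} This is the main obstacle, because a coupling may split mass while a predictor must be a function of $(X,S)$. The plan is to use non-atomicity of $\mu_s$ (Assumption~\ref{assum:cost_function}) together with the classical fact that, for a non-atomic source and a continuous cost on a compact set, the infimum of the Monge problem equals the Kantorovich value. Thus for every $\eta>0$ there is a measurable $T_s:\mathcal I\to\mathcal I$ with $(T_s)_\#\mu_s=\nu_s$ exactly and $\int\mathcal C(z,T_s(z))\,d\mu_s\le \inf_{\gamma_s}\int\mathcal C\,d\gamma_s+\eta$.

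In one dimension this can be made concrete. Take a near-optimal coupling $\gamma_s$ and randomize its disintegration with the uniform variable $U_s=F_s^*(Z)$, which is uniform and measurable in $Z$ because $\mu_s$ is non-atomic. Then use a ``mixing'' map built from a space-filling reordering of $[0,1]$, or approximate $\gamma_s$ by couplings supported on graphs via partitioning $\mathcal I$ into small cells. Continuity of $\mathcal C$ on the compact $\mathcal I^2$ controls the error uniformly in the latter approach. Preserving the marginal $\nu_s$ exactly matters, because it keeps the constraint intact with no loss in $\epsilon$.

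Define $f(x,s)=T_s(f^*(x,s))$. It is measurable, takes values in $\mathcal I$, and has $\nu_{f,s}=\nu_s$, so it is AFRP-feasible. By \eqref{eq-cost} its risk is at most the CKP objective at $(\nu_s)$ plus $\eta$. Taking the infimum over CKP-feasible $(\nu_s)$ and letting $\eta\to0$ gives the inequality.

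\emph{Step 3: attainment.} If $(\bar\nu_s)$ attains the CKP infimum and each optimal Monge map $T_s$ exists, then $f^*_{\mathrm{fair}}=T_s\circ f^*$ is feasible, since $\nu_{f,s}=(T_s)_\#\mu_s=\bar\nu_s$. Its risk equals $\sum_s r_s\int\mathcal C(z,T_s(z))\,d\mu_s=V_{\mathrm{CKP}}$, which by Steps 1--2 equals $V_{\mathrm{AFRP}}$, so it attains the infimum. Nothing beyond the equality of infima is claimed otherwise.
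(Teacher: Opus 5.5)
Your proposal is correct and follows essentially the same route as the paper. For $V_{\mathrm{CKP}}\le V_{\mathrm{AFRP}}$, both use the conditional law of $(f^*(X,S),f(X,S))$ given $S=s$ as a competitor coupling. For the reverse inequality, both use non-atomicity of $\mu_s$ to get maps that reproduce the marginals $\nu_s$ exactly while approximating a near-optimal coupling; the paper cites the graph-measure approximation from Ambrosio's lecture notes, and your partition-into-cells construction, with error controlled by uniform continuity of $\mathcal C$, is a concrete one-dimensional instance of it. For attainment, both compose with the optimal maps, $T_s\circ f^*$.
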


Thus the distributional problem always identifies the best value of the
prediction problem under the stated assumptions. To obtain the displayed
deterministic predictor, the distributional optimum must be attained and each
group must have the required Monge map.

\subsection{Characterization via Quantile Functions}
On the real line, the transport rule has a simple interpretation: first find
the within-group rank of a latent score, and then return the adjusted value at the
same rank. When the optimal deterministic transport map exists,
Assumption~\ref{assum:cost_function2} makes it nondecreasing, with the form
$T_s=Q_s\circ F_s^*$. The predictor is therefore
\begin{equation}\label{eq-opt-form-approx}
f(x,s)=Q_s\{F_s^*(f^*(x,s))\}.
\end{equation}
The within-group rank is uniformly distributed:
$F_s^*(f^*(X,s))\mid(S=s)\sim\operatorname{Unif}(0,1)$. It follows that
$Q_s$ determines the final prediction distribution for group $s$.

\subsection{Wasserstein Distance Constraint}\label{sec:wasserstein_constraint}
Wasserstein distance compares two distributions by pairing their values and
measuring how far the paired values are apart. For $p\in[1,\infty)$, define
$W_p(\nu_1,\nu_2):=(\inf_{\gamma\in\Pi(\nu_1,\nu_2)}
\mathbb{E}_{(X,Y)\sim\gamma}|X-Y|^p)^{1/p}$, and define
\[
W_\infty(\nu_1,\nu_2)
:=\inf_{\gamma\in\Pi(\nu_1,\nu_2)}
\operatorname*{ess\,sup}_{(X,Y)\sim\gamma}|X-Y|.
\]
The finite-$p$ Wasserstein distance uses an average of the (powered) paired differences.
$W_\infty$ instead records the largest difference, apart from events with
probability zero. {On the real line, the best pairing compares
values at the same percentile in the two distributions. We can therefore
compute both distances from their quantile functions.} To state the following lemma, we introduce the norm notation. For a measurable function $h:(0,1)\to\mathbb{R}$, write
$\|h\|_p=(\int_0^1 |h(u)|^p\,du)^{1/p}$ for $1\le p<\infty$
and $\|h\|_\infty=\operatorname*{ess\,sup}_{u\in(0,1)}|h(u)|$,
with respect to Lebesgue measure.

\begin{lemma}\label{Wsup lemma}
For probability measures $\nu_1,\nu_2$ on $\mathcal I$ with quantiles
$Q_{\nu_1},Q_{\nu_2}$,
\[
W_p(\nu_1,\nu_2)=\|Q_{\nu_1}-Q_{\nu_2}\|_p,
\quad 1\le p<\infty,
\qquad
W_\infty(\nu_1,\nu_2)=\|Q_{\nu_1}-Q_{\nu_2}\|_\infty.
\]
\end{lemma}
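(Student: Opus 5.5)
We prove two inequalities, an upper bound obtained from the comonotone coupling and a matching lower bound valid for every coupling, and treat $1\le p<\infty$ and $p=\infty$ in parallel. Throughout, $U\sim\operatorname{Unif}(0,1)$ and $Q_{\nu_i}(U)\sim\nu_i$ (the standard quantile-transform fact, using the left-continuous generalized inverse defined above).

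\emph{Upper bound.} We use the comonotone coupling $\gamma^{\mathrm{co}}=\mathcal L(Q_{\nu_1}(U),Q_{\nu_2}(U))\in\Pi(\nu_1,\nu_2)$. Its cost is $\mathbb E|Q_{\nu_1}(U)-Q_{\nu_2}(U)|^p=\|Q_{\nu_1}-Q_{\nu_2}\|_p^p$ for finite $p$. For $p=\infty$, its essential supremum of $|X-Y|$ is exactly $\|Q_{\nu_1}-Q_{\nu_2}\|_\infty$, since the law of $U$ is Lebesgue measure. Hence $W_p\le\|Q_{\nu_1}-Q_{\nu_2}\|_p$ for all $p\in[1,\infty]$.

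\emph{Lower bound for $1\le p<\infty$.} We show that $\gamma^{\mathrm{co}}$ is optimal. The cost $c(x,y)=|x-y|^p$ is of the form $h(x-y)$ with $h$ convex, so it satisfies the submodularity inequality
\[
c(x,y)+c(x',y')\le c(x,y')+c(x',y)\qquad\text{for } x\le x',\ y\le y'.
\]
For finitely supported measures with equal atoms, a swapping argument shows that any crossing pair in a coupling can be uncrossed without increasing cost, so the monotone matching is optimal. For general $\nu_1,\nu_2$ on the compact interval $\mathcal I$, we approximate by empirical or discretized measures $\nu_i^n\to\nu_i$ weakly, so that $Q_{\nu_i^n}\to Q_{\nu_i}$ a.e. Boundedness gives $\|Q_{\nu_1^n}-Q_{\nu_2^n}\|_p\to\|Q_{\nu_1}-Q_{\nu_2}\|_p$. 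On the other side, $W_p(\nu_1^n,\nu_2^n)\to W_p(\nu_1,\nu_2)$ by the triangle inequality together with $W_p\to0$ under weak convergence on a compact set. Alternatively, we could cite the standard one-dimensional result for convex costs \citep{santambrogio2015optimal}, which is the route I would actually take for brevity.

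\emph{Lower bound for $p=\infty$.} Take any $\gamma\in\Pi(\nu_1,\nu_2)$ with $\operatorname{ess\,sup}|X-Y|=\delta$. Then $X\le Y+\delta$ a.s., so $F_{\nu_1}(t)=\mathbb P(X\le t)\ge \mathbb P(Y\le t-\delta)=F_{\nu_2}(t-\delta)$ for all $t$. Passing to generalized inverses gives $Q_{\nu_1}(u)\le Q_{\nu_2}(u)+\delta$ for every $u$. By symmetry $|Q_{\nu_1}-Q_{\nu_2}|\le\delta$ everywhere, and taking the infimum over $\gamma$ gives $\|Q_{\nu_1}-Q_{\nu_2}\|_\infty\le W_\infty$. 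Alternatively, this case follows from $W_\infty=\lim_{p\to\infty}W_p$ on compact support.

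The main obstacle is the finite-$p$ lower bound, specifically making the swapping-plus-approximation argument rigorous, including the continuity of $W_p$ and of the quantile norms. The other steps are direct.
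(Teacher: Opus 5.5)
Your proposal is correct. For finite $p$ it follows the paper's route: the paper gives no argument of its own beyond citing the one-dimensional optimality of the comonotone (quantile) coupling, \citep[Proposition~2.17]{santambrogio2015optimal}. That is exactly the result you say you would invoke, and your comonotone upper bound with the submodularity/uncrossing sketch just unpacks it.

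The genuinely different part is $p=\infty$. The paper covers $W_\infty$ with the same citation. You instead give a direct CDF-domination argument: $|X-Y|\le\delta$ a.s.\ implies $F_{\nu_1}(t)\ge F_{\nu_2}(t-\delta)$, hence $Q_{\nu_1}(u)\le Q_{\nu_2}(u)+\delta$. This argument is elementary and needs no convexity or approximation. It also gives the bound at every $u$, not only almost everywhere.

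If you want the finite-$p$ case self-contained too, your own upper bound closes the approximation step you flagged. Take $\nu_i^n=n^{-1}\sum_{k=1}^n\delta_{Q_{\nu_i}((k-1/2)/n)}$. Then $Q_{\nu_i^n}(u)=Q_{\nu_i}((\lceil nu\rceil-1/2)/n)\to Q_{\nu_i}(u)$ at every continuity point, so $W_p(\nu_i^n,\nu_i)\le\|Q_{\nu_i^n}-Q_{\nu_i}\|_p\to0$ by bounded convergence on $\mathcal I$. The equal-atom problem reduces to permutations by Birkhoff--von Neumann, and uncrossing then identifies the sorted matching as optimal. The triangle inequality for $W_p$ then passes the identity to the limit.
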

{This standard formula lets us express fairness directly as a
bound on the distance between quantile curves}
\citep[Proposition~2.17]{santambrogio2015optimal}.

We study $p\in\{1,2,\infty\}$. The $p=1$ distance measures the average
absolute gap across ranks. The $p=2$ distance measures the root-mean-square
gap. The $p=\infty$ distance bounds the gap at every rank.

Define the bounded feasible set
\[
\mathcal K_{p,\epsilon,\mathcal I}:=
\left\{(Q_1,\ldots,Q_K)\in\mathcal Q_{\mathcal I}^K:
\|Q_i-Q_j\|_p\le\epsilon\ \text{for every }i<j\right\}.
\]
This set imposes three requirements at once. Each curve must be a valid
quantile function, every prediction must lie in $\mathcal I$, and every pair
of curves must satisfy the fairness tolerance. {We can therefore
solve the distributional problem by choosing one adjusted quantile function
for each group:}
\begin{equation}\label{Final-Opt}
\inf_{\mathbf Q\in\mathcal K_{p,\epsilon,\mathcal I}}
G(\mathbf Q):=
\sum_{s=1}^K r_s\int_0^1
\mathcal C(Q_s^*(u),Q_s(u))\,du.
\end{equation}
{Assumption~\ref{assum:cost_function2} says, roughly, that lower
scores should be paired with lower predictions and higher scores with higher
predictions. This way of pairing them is the least costly. Therefore, the
problem over group distributions and \eqref{Final-Opt} have the same lowest
possible value.}
This is a convex optimization problem whenever
$q\mapsto\mathcal C(z,q)$ is convex. All integrals are finite because the
prediction range is bounded. The common index $u$ compares the same
within-group rank across groups. At rank $u$, $Q_s^*(u)$ is group $s$'s
latent-score value, and $Q_s(u)$ is the corresponding adjusted prediction
value.

Formally, for a
measurable $h:(0,1)\to\mathcal I$, let $h^\uparrow$ be the quantile function
of $h(U)$ when $U\sim\operatorname{Unif}(0,1)$.

\begin{lemma}[Simultaneous monotone rearrangement]\label{lem:rearrangement}
Let $h_1,\ldots,h_K$ be measurable, integrable, and $\mathcal I$-valued.
For every $p\in[1,\infty]$,
\[
\|h_i^\uparrow-h_j^\uparrow\|_p\le\|h_i-h_j\|_p.
\]
If $Q_s^*$ is nondecreasing and $\mathcal C$ satisfies the
Spence-Mirrlees condition, then
\[
\int_0^1\mathcal C(Q_s^*(u),h_s^\uparrow(u))\,du
\le
\int_0^1\mathcal C(Q_s^*(u),h_s(u))\,du.
\]
Moreover, $h_s^\uparrow$ is nondecreasing, left-continuous, integrable, and
has range in $\mathcal I$.
\end{lemma}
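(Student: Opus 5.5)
We prove the two inequalities separately. Both reduce to the classical fact that, on the real line, the comonotone coupling of two laws is optimal for every cost with the Spence--Mirrlees (submodularity) property. The regularity claims at the end are immediate from the definition of $h^\uparrow$.

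\textbf{First inequality.} For $1\le p<\infty$, set $U\sim\operatorname{Unif}(0,1)$ and consider the random pair $(h_i(U),h_j(U))$. Its marginals are the laws $\nu_i,\nu_j$ of $h_i(U)$ and $h_j(U)$, so it is a coupling in $\Pi(\nu_i,\nu_j)$. Hence
\[
W_p(\nu_i,\nu_j)^p\le \mathbb E|h_i(U)-h_j(U)|^p=\|h_i-h_j\|_p^p .
\]
By definition, $h_i^\uparrow$ and $h_j^\uparrow$ are the quantile functions of $\nu_i$ and $\nu_j$. Lemma~\ref{Wsup lemma} therefore gives $W_p(\nu_i,\nu_j)=\|h_i^\uparrow-h_j^\uparrow\|_p$, which yields the claim. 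For $p=\infty$ the argument is the same, with the expectation replaced by the essential supremum of $|h_i(U)-h_j(U)|$. That quantity bounds $W_\infty$ from above because $W_\infty$ is an infimum over couplings. Lemma~\ref{Wsup lemma} again identifies $W_\infty(\nu_i,\nu_j)$ with $\|h_i^\uparrow-h_j^\uparrow\|_\infty$. Note that the rearrangements are taken group by group, yet they are automatically aligned at a common rank $u$. This is why the inequality holds for all pairs simultaneously.

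\textbf{Second inequality.} Again $(Q_s^*(U),h_s(U))$ is a coupling of $\mu:=\mathcal L(Q_s^*(U))$ and $\nu:=\mathcal L(h_s(U))$. Since $Q_s^*$ is nondecreasing, it equals $Q_\mu$ almost everywhere, so $(Q_s^*(U),h_s^\uparrow(U))$ is the comonotone coupling of $\mu$ and $\nu$. It therefore suffices to show that the comonotone coupling minimizes $\int\mathcal C\,d\gamma$ over $\Pi(\mu,\nu)$. The condition $\partial^2_{zq}\mathcal C<0$ implies that $\mathcal C$ is strictly submodular: for $z<z'$ and $q<q'$,
\[
\mathcal C(z,q)+\mathcal C(z',q')-\mathcal C(z,q')-\mathcal C(z',q)=\int_z^{z'}\!\!\int_q^{q'}\partial^2_{zq}\mathcal C<0 .
\]
We then cite the standard result for continuous, bounded, submodular costs on a compact set, namely \citet[Theorem~2.9 and Proposition~2.17]{santambrogio2015optimal} or \citet{villani2009optimal}. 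It states that the monotone rearrangement coupling $(Q_\mu(U),Q_\nu(U))$ is optimal.

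\textbf{Main obstacle.} The main obstacle is making that citation airtight in our setting. In particular, $\mu$ may have atoms here because $h_s$ is arbitrary, and the result must hold without assuming the laws are non-atomic. If the citation is not accepted, we argue directly. Take an optimal coupling, which exists because $\mathcal C$ is continuous on the compact set $\mathcal I^2$. By the submodularity inequality, its support is $c$-cyclically monotone. In one dimension this forces the support to be a monotone (nondecreasing) set, and a coupling with monotone support is necessarily the comonotone one. A cleaner alternative for bounded submodular $\mathcal C$ is to write
\[
\mathcal C(z,q)=\text{(terms depending on $z$ alone)}+\text{(terms depending on $q$ alone)}-\iint \mathbf 1\{t\le z,\ v\le q\}\,\rho(dt,dv),
\]
where $\rho$ is the positive measure with density $-\partial^2_{zq}\mathcal C$, up to boundary terms at $a$. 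Then
\[
\int\mathcal C\,d\gamma=\text{const}-\int\gamma\bigl(Z\ge t,\ Q\ge v\bigr)\,\rho(dt,dv),
\]
and the Fréchet--Hoeffding upper bound shows that the comonotone coupling maximizes $\gamma(Z\ge t,Q\ge v)$ pointwise. This gives the inequality. We plan to present this second argument, since it handles atoms without extra work.

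\textbf{Regularity of $h_s^\uparrow$.} The final claims follow directly. $h_s^\uparrow$ is a quantile function, so it is nondecreasing and left-continuous. The law of $h_s(U)$ is supported in $\mathcal I$, so $h_s^\uparrow$ takes values in $\mathcal I$. Boundedness then gives integrability.
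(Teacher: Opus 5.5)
Your proof is correct and has the same skeleton as the paper's. For the distance inequality, both arguments observe that $(h_i(U),h_j(U))$ is one coupling of the two laws, while the rank-matched pair attains $W_p$ by Lemma~\ref{Wsup lemma}. For the cost inequality, both identify $(Q_s^*(U),h_s^\uparrow(U))$ as the comonotone coupling of $\mathcal L(Q_s^*(U))$ and $\mathcal L(h_s(U))$.

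The difference is in how you justify that the comonotone coupling is optimal for a submodular cost. The paper simply invokes \citet[Theorem~2]{cambanis1976inequalities} applied to $-\mathcal C$. You prove it instead. You write $\mathcal C$ as separable terms minus $\rho(\{t\le z,\,v\le q\})$ for a positive measure $\rho$, so that $\int\mathcal C\,d\gamma$ equals a constant minus $\int\gamma(Z\ge t,\,Q\ge v)\,\rho(dt,dv)$. You then note that the comonotone coupling attains the Fr\'echet--Hoeffding upper bound for every $(t,v)$ at once. This is the classical proof of the inequality the paper cites. It costs a few extra lines, but it is self-contained and handles atoms in $\mathcal L(h_s(U))$ automatically. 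Given your own doubt about the Santambrogio reference, it is the right argument to present.

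One refinement: define $\rho$ directly as the Lebesgue--Stieltjes measure of the grounded function $-\mathcal C(z,q)+\mathcal C(z,a)+\mathcal C(a,q)-\mathcal C(a,a)$, rather than through the density $-\partial^2_{zq}\mathcal C$. Its rectangle increments are nonnegative by submodularity. The argument then needs only continuity and submodularity of $\mathcal C$, with no integrability check on the mixed partial.

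Your regularity paragraph is also slightly sharper than the paper's: $h_s^\uparrow$ takes values in $\mathcal I$ at every $u\in(0,1)$, not merely almost everywhere.
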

The lemma says that sorting cannot make two group curves farther apart. Under
the stated cost condition, it also cannot increase prediction loss. The
sorted curves remain bounded, integrable quantile functions. We use these
facts to construct a valid optimizer for the pointwise $W_\infty$ problem and
to prove the multi-group squared-loss barycenter property in
Proposition~\ref{prop:barycenter_K}.

\subsection{Closed-Form Solutions for Squared Loss}\label{sec:closed_form}

The general problem \eqref{Final-Opt} usually requires numerical
optimization. With squared loss and two groups ($K=2$), the adjustment has a
simple geometry and a closed-form solution. Appendix~\ref{app:derivations}
gives the detailed derivations.

In this setting, we assume an additive error model
$Y=f^*(X,S)+\varepsilon$, where the noise $\varepsilon$ is independent of
$(X,S)$, has mean zero, and has finite variance
$\sigma^2=\operatorname{Var}(\varepsilon)$.
Under this model, $\mathcal C(z,q)=(z-q)^2+\sigma^2$. The variance term is
constant with respect to $q$. The adjustment therefore keeps the final
quantile curves as close as possible to the latent curves under squared
$L_2$ distance. Assume $r_1,r_2>0$ and $r_1+r_2=1$. The fair quantiles solve
\begin{equation} \label{TwoGroupProb}
\begin{aligned}
    &\min_{Q_1, Q_2\in\mathcal Q_{\mathcal I}}
    \left( r_1 \|Q_1^* - Q_1\|_2^2 + r_2 \|Q_2^* - Q_2\|_2^2 \right) \\
    &\text{s.t.} \quad \|Q_1-Q_2\|_p \le \epsilon.
\end{aligned}
\end{equation}
By Lemma~\ref{Wsup lemma}, the Wasserstein distance between the two adjusted
distributions, with quantiles $Q_1$ and $Q_2$, is $\|Q_1-Q_2\|_p$. We examine
$p\in\{1,2,\infty\}$. We first solve for the difference between the curves.
We then verify that the resulting curves are still bounded, nondecreasing
quantile functions.

\subsubsection{Solution Structure}
The fairness constraint acts on the difference between the two groups. It does
not directly restrict their weighted average. For candidate quantile functions
$Q_1$ and $Q_2$, define
\[
\bar Q=r_1Q_1+r_2Q_2,\qquad
\Delta=Q_1-Q_2,
\]
and let $\bar Q^*=r_1Q_1^*+r_2Q_2^*$ and
$\Delta^*=Q_1^*-Q_2^*$. A direct expansion gives
\[
r_1\|Q_1^* - Q_1\|_2^2
+
r_2\|Q_2^* - Q_2\|_2^2
=
\|\bar Q^*-\bar Q\|_2^2
+
r_1 r_2 \|\Delta^* - \Delta\|_2^2 .
\]
The identity separates the loss into a term for the weighted average and a
term for the group difference. Changing the weighted average would increase
loss without helping the fairness constraint. The unique squared-loss
solution therefore keeps $\bar Q=\bar Q^*$.
Next, we choose $\Delta=Q_1-Q_2$ as close as possible to
$\Delta^*=Q_1^*-Q_2^*$ while requiring $\|\Delta\|_p\le\epsilon$. Once the
new gap is known, the two curves are
$$ Q_1 = \bar{Q}^* + r_2 \Delta, \quad Q_2 = \bar{Q}^* - r_1 \Delta. $$

\subsubsection{Case 1: \texorpdfstring{$L_2$}{L2}-Constraint
(\texorpdfstring{$p=2$}{p=2})}
The $L_2$ rule changes the whole gap curve by one common factor. If the
original gap is too large, every value of that curve is multiplied by the same
number below one. Geometrically, this is a radial projection onto an $L_2$
ball.
Let $D = \|\Delta^*\|_2$.
\begin{itemize}
    \item If $D \le \epsilon$, the constraint is inactive, and $Q_s = Q_s^*$ (the latent predictors are already sufficiently fair).
    \item If $D > \epsilon$, the difference is linearly shrunk towards zero. The optimal difference is $\Delta = \frac{\epsilon}{D} \Delta^*$.
\end{itemize}
Thus the solution shrinks the two latent curves toward their strict-parity
barycenter while preserving their weighted average.

\subsubsection{Case 2: \texorpdfstring{$L_1$}{L1}-Constraint
(\texorpdfstring{$p=1$}{p=1})}
The $L_1$ rule subtracts the same threshold from the size of the gap at every
rank. Gaps below that threshold become zero, while larger gaps keep their
sign. This operation is called soft-thresholding. It solves the projection of
$\Delta^*$ onto the $L_1$ ball $\|\Delta\|_1\le\epsilon$ under squared
$L_2$ loss.

For a threshold $\lambda\ge0$, apply the rule to
$\Delta^*(u)=Q_1^*(u)-Q_2^*(u)$:
\begin{align*}
    \Delta(u) = \operatorname{sgn}(\Delta^*(u)) \cdot \max(0, |\Delta^*(u)| - \lambda).
\end{align*}
Choose $\lambda$ to satisfy the fairness constraint. If
$\|\Delta^*\|_1\le\epsilon$, the constraint is inactive, so $\lambda=0$ and
$\Delta=\Delta^*$. If $0<\epsilon<\|\Delta^*\|_1$, there is a unique
$\lambda>0$ such that $\int_0^1|\Delta(u)|du=\epsilon$. At exact parity,
$\epsilon=0$, the solution is $\Delta=0$; the threshold itself need not be
unique once it exceeds $\|\Delta^*\|_\infty$.

\subsubsection{Case 3: \texorpdfstring{$L_\infty$}{L-infinity}-Constraint
(\texorpdfstring{$p=\infty$}{p=infinity})}
The $L_\infty$ rule leaves a gap unchanged when its size is at most
$\epsilon$. It caps every larger positive or negative gap at
$\epsilon$ in absolute value, which corresponds to clipping. Formally, the constraint is
$|Q_1(u)-Q_2(u)|\le\epsilon$ for almost all $u\in[0,1]$, so each
$\Delta^*(u)$ is projected onto $[-\epsilon,\epsilon]$:
\begin{align*}
    \Delta(u) = \operatorname{sgn}(\Delta^*(u)) \cdot \min(\epsilon, |\Delta^*(u)|) = 
    \begin{cases} 
    \epsilon & \text{if } \Delta^*(u) > \epsilon, \\
    \Delta^*(u) & \text{if } |\Delta^*(u)| \le \epsilon, \\
    -\epsilon & \text{if } \Delta^*(u) < -\epsilon.
    \end{cases}
\end{align*}
At each rank, the allowed pairs form the band
$\{(x,y)\in\mathcal I^2:|x-y|\le\epsilon\}$. The rule moves an infeasible
pair to the edge of this band while keeping its weighted average fixed.

These calculations solve for a gap function, but a valid solution also needs
two genuine quantile functions. The next lemma shows that the recovered curves
are nondecreasing, left-continuous, integrable, and confined to $\mathcal I$.
\begin{lemma}[Monotonicity of the two-group closed-form solutions]
\label{lem:monotonicity_closed_form}
Assume that $Q_1^*,Q_2^*\in\mathcal Q_{\mathcal I}$ and that $r_1,r_2>0$
with $r_1+r_2=1$. Let
\[
    \bar Q^* = r_1 Q_1^*+r_2 Q_2^*, \qquad
    \Delta^*=Q_1^*-Q_2^*.
\]
Suppose that the adjusted difference has the form
\[
    \Delta(u)=\psi(\Delta^*(u)),
\]
where $\psi:\mathbb R\to\mathbb R$ is non-decreasing, $1$-Lipschitz, and
$\psi(0)=0$. Define
\[
    Q_1=\bar Q^*+r_2\Delta, \qquad Q_2=\bar Q^*-r_1\Delta .
\]
Then $Q_1,Q_2\in\mathcal Q_{\mathcal I}$. Consequently the three
closed-form solutions preserve monotonicity, left continuity, integrability,
and the bounded range.
\end{lemma}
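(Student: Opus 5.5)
The plan is to write $Q_1$ and $Q_2$ pointwise as functions of the pair $(Q_1^*(u),Q_2^*(u))$, and to check monotonicity, range, and left continuity from properties of $\psi$. Set $x=Q_1^*(u)$ and $y=Q_2^*(u)$. Then
\[
Q_1(u)=\Phi_1(x,y):=r_1x+r_2y+r_2\psi(x-y),\qquad
Q_2(u)=\Phi_2(x,y):=r_1x+r_2y-r_1\psi(x-y).
\]

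\textbf{Monotonicity.} Take $u<v$, and write $x\le x'$ and $y\le y'$ for the values of $Q_1^*$ and $Q_2^*$ at $u$ and $v$. Put $d=x-y$ and $d'=x'-y'$. Since $\psi$ is nondecreasing and $1$-Lipschitz, $\psi(d')-\psi(d)=\theta(d'-d)$ for some $\theta\in[0,1]$; if $d'=d$, take $\theta=0$. Then
\[
\Phi_1(x',y')-\Phi_1(x,y)=(r_1+r_2\theta)(x'-x)+r_2(1-\theta)(y'-y)\ge0,
\]
\[
\Phi_2(x',y')-\Phi_2(x,y)=r_1(1-\theta)(x'-x)+(r_2+r_1\theta)(y'-y)\ge0.
\]
Both coefficients in each line are nonnegative, so $Q_1$ and $Q_2$ are nondecreasing. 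This is the key step.

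\textbf{Range.} Fix $y$ and view $\Phi_1(\cdot,y)$ as a function of $x$; it is nondecreasing, by the same argument with $y'=y$. The question is whether the map stays within $[\min(x,y),\max(x,y)]$. For $x\ge y$ we have $d\ge0$, so $0=\psi(0)\le\psi(d)\le d$ by monotonicity and the Lipschitz bound. Hence $\Phi_1=r_1x+r_2y+r_2\psi(d)$ lies between $r_1x+r_2y$ and $r_1x+r_2y+r_2(x-y)=x$. Likewise $\Phi_2=r_1x+r_2y-r_1\psi(d)$ lies between $y$ and $r_1x+r_2y$. The case $x<y$ is symmetric, using $d\le\psi(d)\le0$. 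So both values lie in $[\min(x,y),\max(x,y)]\subseteq\mathcal I$, because $\mathcal I$ is an interval. It follows that $Q_1,Q_2$ are $\mathcal I$-valued, hence bounded and integrable.

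\textbf{Left continuity.} $\psi$ is Lipschitz, hence continuous, so $\Phi_1$ and $\Phi_2$ are continuous in $(x,y)$. Since $Q_1^*$ and $Q_2^*$ are left-continuous, $Q_1$ and $Q_2$ are left-continuous as compositions. This shows $Q_1,Q_2\in\mathcal Q_{\mathcal I}$.

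\textbf{The three closed forms.} It remains to verify that each rule has the required form $\psi$:
\begin{itemize}
\item $p=2$: $\psi(t)=ct$ with $c=\min(1,\epsilon/D)\in[0,1]$, and $\psi=0$ when $D=0$.
\item $p=1$: soft-thresholding $\psi(t)=\operatorname{sgn}(t)\max(0,|t|-\lambda)$ with $\lambda\ge0$.
\item $p=\infty$: clipping $\psi(t)=\max(-\epsilon,\min(\epsilon,t))$.
\end{itemize}
Each is nondecreasing, $1$-Lipschitz, and satisfies $\psi(0)=0$. At exact parity, $\Delta=0$ corresponds to $\psi\equiv0$, which also qualifies.

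The only mildly delicate point is the monotonicity step, because $\Delta^*$ itself need not be monotone. The mean-value representation $\psi(d')-\psi(d)=\theta(d'-d)$ with $\theta\in[0,1]$ handles this.
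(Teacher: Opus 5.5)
Your proof is correct and follows essentially the same route as the paper: monotonicity from $\psi$ being nondecreasing and $1$-Lipschitz, range from writing each adjusted value as a convex combination of $Q_1^*(u)$ and $Q_2^*(u)$, left continuity by composing with a continuous map, and a check of the three rules. The only difference is cosmetic: your difference quotient $\theta\in[0,1]$ handles in one line the case split ($a_1\ge a_2$ versus $a_1<a_2$) that the paper carries out separately for $Q_1$ and $Q_2$.
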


\begin{table}[t]
\centering
\small
\setlength{\tabcolsep}{7pt}
\renewcommand{\arraystretch}{1.2}
\caption{\textbf{Closed-form $\Delta$ updates (squared loss, $K=2$).}
Let $\Delta^*=Q_1^*-Q_2^*$ and $\Delta=Q_1-Q_2$, with constraint $\|\Delta\|_p\le\epsilon$.
Write $D=\|\Delta^*\|_2$, $\mathcal S_\lambda(t)=\mathrm{sgn}(t)(|t|-\lambda)_+$,
and $\mathrm{clip}_{[-\epsilon,\epsilon]}(t)=\min\{\epsilon,\max\{-\epsilon,t\}\}$.
For $p=1$, choose $\lambda$ so that
$\|\mathcal S_\lambda(\Delta^*)\|_1=\epsilon$ when
$0<\epsilon<\|\Delta^*\|_1$. Use $\lambda=0$ when the constraint is inactive.
Use $\Delta=0$ when $\epsilon=0$.}
\label{tab:closed_form_summary_compact2}

\begin{tabularx}{0.90\linewidth}{@{}%
  >{\centering\arraybackslash}p{0.07\linewidth}%
  X%
  >{\centering\arraybackslash}p{0.16\linewidth}@{}}
\toprule
$p$ & Optimal difference $\Delta$ & Effect \\
\midrule
$2$ & $\Delta=\Delta^*$ if $D\le\epsilon$;\quad otherwise $\Delta=(\epsilon/D)\Delta^*$. & shrink \\
$1$ & $\Delta=\mathcal S_\lambda(\Delta^*)$. & soft-thresh \\
$\infty$ & $\Delta=\mathrm{clip}_{[-\epsilon,\epsilon]}(\Delta^*)$. & clip \\
\bottomrule
\end{tabularx}
\end{table}

\subsection{Extension to General Convex Losses and Multiple Groups} \label{sec:extension}

The closed forms use both squared loss and the special geometry of two groups.
{The quantile problem above also applies to more than two groups,
provided the same assumptions hold. In particular,
Assumption~\ref{assum:cost_function2} is needed when we match scores and
predictions by rank. If the conditional expected loss is convex in the
prediction, the resulting problem is convex.}
The computation depends on how the fairness tolerance is distributed across
ranks. Under $W_\infty$, every rank has its own bound. Under finite-$p$
distances, all ranks share one overall budget.

\paragraph{Pointwise Wasserstein constraint.}
When $p=\infty$, every pair of adjusted values must be close at each rank:
\[
    |Q_i(u)-Q_j(u)|\le \epsilon
    \quad\text{for almost every }u\in(0,1),\quad i,j\in\mathcal S.
\]
We can first solve this problem separately at each rank and restore the
quantile ordering afterward. At a fixed rank $u$, let $\theta_s(u)$ be a
prediction that minimizes the loss for group $s$ without a fairness
constraint, with these values chosen measurably as $u$ varies. When
$0\le\epsilon<b-a$, fairness requires all adjusted group values to fit in one
interval of width $\epsilon$. There is a pointwise optimum of the form
\begin{equation} \label{eq:pointwise_convex}
q_s(u;c)=\operatorname{clip}_{[c,c+\epsilon]}\{\theta_s(u)\},
\qquad c\in[a,b-\epsilon],
\end{equation}
where $c=c^*(u)$ minimizes
$\sum_s r_s\mathcal C(Q_s^*(u),q_s(u;c))$. In words, choose the left endpoint
of a common interval and clip every group's preferred value into that
interval. Only the single endpoint $c$ must be optimized. If
$\epsilon\ge b-a$, every value in $\mathcal I$ already fits within the
tolerance.

When $0\le\epsilon<b-a$, the endpoint may change with the rank $u$. We need
to choose these endpoints so that they form a measurable function. The
possible endpoints lie in the compact interval $[a,b-\epsilon]$. The objective
is measurable in $u$ and continuous in $c$. A standard selection result then
guarantees that the endpoints can be chosen measurably. This result is the
measurable maximum theorem
\citep[Theorem~18.19]{aliprantisborder2006}.

After clipping, the resulting curves $q_s(u;c^*(u))$ may be out of order as
$u$ increases. We sort each curve into quantile order by increasing
rearrangement. {Sorting does not move any prediction outside the
allowed range or make any pair of curves farther apart.}
Thus the construction has two stages.
First, solve for one endpoint and clip at each rank. After all ranks have been
processed, sort each completed curve into quantile order. This is an explicit
clipping rule with a one-dimensional endpoint solve, although $c^*(u)$ need
not have a closed-form formula for general $K$.


Under squared loss, consider the class of collections satisfying the pairwise fairness constraints. The projection of the latent quantile curves onto this class under the group-weighted $L_2$ distance is unique. Moreover, this projection preserves the weighted barycenter even when $K>2$; see Proposition~\ref{prop:barycenter_K}.
A convenient stronger condition also ensures the pairwise tolerance. Specifically, require each group to lie within $\epsilon/2$ of the fixed latent weighted barycenter
$\bar Q^*=\sum_s r_s Q_s^*$
under the same Wasserstein distance. By the triangle inequality, any two groups are then at most
$\epsilon/2+\epsilon/2=\epsilon$
apart. Thus the radius $\epsilon/2$ guarantees the desired pairwise constraint. The factor $1/2$ is necessary for this argument: using radius $\epsilon$ would guarantee only a pairwise distance of at most $2\epsilon$.

\paragraph{Integral Wasserstein constraints.}
When $1\le p<\infty$, the constraint
\[
    \|Q_i-Q_j\|_p\le\epsilon
\]
uses one total budget across all ranks. We therefore cannot optimize each rank
separately. For computation, we represent each curve on a midpoint grid:
\[
u_k=(k-1/2)/M,\qquad
\mathbf q_s=(q_{s,1},\ldots,q_{s,M})^\top.
\]
The grid problem is
\begin{equation} \label{eq:discretized_prob}
\min_{\mathbf q}\quad
\frac1M\sum_{s=1}^K r_s\sum_{k=1}^M
\mathcal C(q_{s,k}^*,q_{s,k})
\quad\text{s.t.}\quad
\mathbf q\in\mathcal K_{p,M}\cap\mathcal M^K\cap\mathcal B_{\mathcal I},
\end{equation}
where
\[
\mathcal K_{p,M}=\left\{\mathbf v:
\left(M^{-1}\sum_{k=1}^M|v_{i,k}-v_{j,k}|^p\right)^{1/p}
\le\epsilon\ \forall i<j\right\}
\]
for finite $p$, with the maximum norm for $p=\infty$;
\[
\mathcal M=\{\mathbf q:q_1\le\cdots\le q_M\},\qquad
\mathcal B_{\mathcal I}=\{\mathbf q:a\le q_{s,k}\le b\ \forall s,k\}.
\]
The three sets have clear roles. The first enforces pairwise fairness. The
second keeps every group curve nondecreasing. The third keeps every prediction
inside $\mathcal I$. Thus the prediction bounds are enforced in the population
problem, the grid problem, and each numerical projection.

\section{Estimation Procedure} \label{sec:estimation}

The population problem gives one adjusted quantile function for each group.
This section explains how to estimate those functions and use them for new
predictions.

The data procedure has four steps. First, train an unconstrained predictor.
Second, use a separate calibration sample to estimate the distribution of its
scores within each group. Third, adjust the estimated group quantiles to meet
the fairness tolerance. Finally, map each new score to its within-group rank
and read the prediction from the adjusted quantile curve.

There are several ways to perform the adjustment. With squared loss and two
groups, a closed-form rule gives the full adjusted curves. More generally,
when the induced cost can be calculated, the full quantile-function problem
defines the known-cost plug-in estimator studied in
Section~\ref{sec:theory}. Under the
regularity assumptions stated there, we establish consistency and convergence
rates for this estimator. A numerical implementation represents the curves on
a finite midpoint grid. Each iteration first takes a step in the
negative-gradient direction. It then projects the grid values back onto the
fairness, ordering, and prediction-range constraints. This is
projected-gradient descent. The projection is computed by Dykstra's method,
which repeatedly projects onto the individual constraints while correcting
for earlier projections. When the induced cost cannot be calculated, an
observed-loss version instead uses the response paired with each ordered
fitted score.

\subsection{Stage 1: Unconstrained group-wise regression}
First, we estimate $f^*$ using an auxiliary training sample
$\mathcal D_m^{\mathrm{tr}}$ of size $m$, and let
\[
    \mathcal D_n^{\mathrm{cal}}=\{(X_i,S_i,Y_i):i=1,\ldots,n\}
\]
denote an independent calibration sample of size $n$. A group-aware regression
may fit separate models or may include group indicators and interactions in
one model. The theory in Section \ref{sec:theory} requires the training and calibration samples to be
independent. We call $\hat Y_i=\hat f(X_i,S_i)$ a \emph{fitted calibration
score}.
Specifically, the two steps in this stage are listed as follows.
\begin{enumerate}
    \item Train group-wise regression models $\hat f(\cdot,s)$ for $s=1,\ldots,K$.
    \item Apply the trained models to the calibration sample \(\mathcal D_n^{\mathrm{cal}}\) and compute
    \[
        \hat Y_i=\hat f(X_i,S_i),\qquad i=1,\ldots,n.
    \]
\end{enumerate}

\subsection{\texorpdfstring{\mbox{}}{}Stage 2: Empirical latent quantile estimation}
Let \(n_s=\sum_{i=1}^n\mathbbm 1\{S_i=s\}\) be the number of calibration
observations in group $s$, and let \(\hat r_s=n_s/n\) be its sample
proportion. Within each group, sort the fitted scores:
\[
    \hat Y^{(s)}_{(1)}\le \hat Y^{(s)}_{(2)}\le\cdots\le \hat Y^{(s)}_{(n_s)}.
\]
The empirical latent quantile function is
\[
    \hat Q_s^*(u)=\hat Y^{(s)}_{(\lceil n_su\rceil)},\qquad u\in(0,1].
\]
At rank $u$, $\hat Q_s^*$ returns the fitted score in position
$\lceil n_su\rceil$. Thus it is the exact inverse empirical distribution
function of the fitted scores. At a midpoint $u_k$, we use that order statistic
directly. We do not interpolate sample quantiles at this stage. Interpolation
is used only after the fairness adjustment has been computed.

The estimator is formed when every calibration group is present, that is, on
the event $\min_s n_s>0$. The positive-proportion assumption below makes this
event increasingly likely as the sample grows. The observed-loss method in
Remark~\ref{rem:unknown_cost} also keeps each score paired with its own
response while sorting. The response paired with the $k$th ordered score
$\hat Y^{(s)}_{(k)}$ is called its \emph{concomitant} and is written
$Y^{(s)}_{(k)}$. Only the fitted scores are put in increasing order; the
responses are carried along rather than sorted by their values.

\subsection{Stage 3: Fairness adjustment}
The third stage turns the empirical latent curves $\hat Q_s^*$ into adjusted
curves $\hat Q_s$ that satisfy the fairness tolerance. Squared loss with two
groups uses a closed-form plug-in rule. For a known induced cost, the
full quantile-function problem defines the estimator, and a
projected-gradient calculation provides a grid approximation. If the induced
cost is unavailable, the observed-loss calculation changes only the grid
objective. It uses the same fairness, ordering, and prediction-range
constraints.

\subsubsection{Closed-form adjustment for squared loss and two groups}
\label{sec:estimation_closed_form}
For squared error loss with $K=2$, the formulas in
Section~\ref{sec:closed_form} give a direct plug-in estimator. First, define
the empirical weighted average and the difference between the two group
curves:
\[
    \hat{\bar Q}(u)=\hat r_1\hat Q_1^*(u)+\hat r_2\hat Q_2^*(u),
    \qquad
    \hat\Delta^*(u)=\hat Q_1^*(u)-\hat Q_2^*(u).
\]
The fairness adjustment applies the appropriate projection to
$\hat\Delta^*$ and returns a new difference $\hat\Delta$. The rules for the
three Wasserstein distances are summarized in Table~\ref{tab:estimation_closed_form}.

\begin{table}[t]
\centering
\small
\setlength{\tabcolsep}{6pt}
\renewcommand{\arraystretch}{1.25}
\caption{\textbf{Plug-in fairness adjustment for squared loss and two groups.}
Let $\hat\Delta^*=\hat Q_1^*-\hat Q_2^*$ and let
$\hat\Delta=\hat Q_1-\hat Q_2$. For $W_2$, write
$\hat D=\|\hat\Delta^*\|_2$. For $W_1$, choose $\hat\lambda$ so that
$\|\mathcal S_{\hat\lambda}(\hat\Delta^*)\|_1=\epsilon$ when
$0<\epsilon<\|\hat\Delta^*\|_1$. Set $\hat\lambda=0$ when the constraint is
inactive. Use $\hat\Delta=0$ when $\epsilon=0$.}
\label{tab:estimation_closed_form}
\begin{tabularx}{0.95\linewidth}{@{}p{0.16\linewidth}X p{0.20\linewidth}@{}}
\toprule
Constraint & Fair difference $\hat\Delta$ & Interpretation \\
\midrule
$W_2$ & $\hat\Delta=\hat\Delta^*$ if $\hat D\le\epsilon$; otherwise $\hat\Delta=(\epsilon/\hat D)\hat\Delta^*$. & global shrinkage \\
$W_1$ & $\hat\Delta=\mathcal S_{\hat\lambda}(\hat\Delta^*)$. & soft-thresholding \\
$W_\infty$ & $\hat\Delta(u)=\min\{\epsilon,\max\{-\epsilon,\hat\Delta^*(u)\}\}$. & clipping \\
\bottomrule
\end{tabularx}
\end{table}

The adjusted group curves preserve the empirical weighted average:
\[
    \hat Q_1=\hat{\bar Q}+\hat r_2\hat\Delta,
    \qquad
    \hat Q_2=\hat{\bar Q}-\hat r_1\hat\Delta.
\]
After this adjustment, the last step maps each unconstrained score through its
adjusted group quantile function, as described at the end of this section.

\subsubsection{Known-cost target and its grid approximation}
\label{sec:estimation_pgd}
We now consider convex induced costs beyond squared loss and settings with
more than two groups. When the induced cost is available, we replace the
unknown group weights and latent quantiles in the population problem by their
sample estimates. We then minimize the resulting objective over the full
feasible class
$\mathcal K_{p,\epsilon,\mathcal I}$ of quantile-function tuples on
$u\in(0,1)$. We call a minimizer of this full quantile-function problem the
known-cost plug-in estimator. Its quantile functions may have jumps, and the
optimization uses every rank rather than only a finite grid. Under the
regularity assumptions in Section~\ref{sec:theory}, we establish convergence
rates for this estimator. For numerical work, we approximate the same problem
on a finite midpoint grid by projected-gradient descent.

This approach requires the induced cost
\[
\mathcal C(z,q)=\mathbb E\{\ell(Y,q)\mid f^*(X,S)=z\}
\]
or at least its derivative \(\partial_q\mathcal C(z,q)\) to be available in
closed form. Examples include squared loss and canonical generalized linear
models with a known conditional model. When this conditional average cannot
be calculated, the observed-loss method uses the loss gradient from the
response that remains paired with each ordered score. This alternative is
stated in Remark~\ref{rem:unknown_cost}.

For this finite-grid approximation, fix $M$ and let
$u_k=(k-0.5)/M$, $k=1,\ldots,M$. For each group, define
\[
    \hat q_{s,k}^*=\hat Q_s^*(u_k),
    \qquad k=1,\ldots,M,
\]
and initialize
$\mathbf q_s^{(0)}=(\hat q_{s,1}^*,\ldots,\hat q_{s,M}^*)^\top$. Let
\(\mathcal K\) denote the feasible set in \eqref{Final-Opt}; its elements are
quantile-function tuples defined at every rank. The known-cost plug-in
estimator minimizes
\begin{equation}\label{eq:known_cost_objective}
    \widehat G_{n,\hat r}^C(\mathbf Q)
    :=
    \sum_{s=1}^K \hat r_s\int_0^1
    \mathcal C\!\big(\hat Q_s^*(u),Q_s(u)\big)\,du,
    \qquad \mathbf Q\in\mathcal K .
\end{equation}
The grid approximation replaces each curve by its values at the midpoint
nodes and minimizes
\begin{equation}\label{eq:disc_known_cost_plugin}
    \widehat F_n^C(\mathbf q_1,\dots,\mathbf q_K)
    :=
    \frac1M\sum_{s=1}^K \hat r_s\sum_{k=1}^M
    \mathcal C(\hat q_{s,k}^*,q_{s,k})
    \quad\text{over}\quad
    \mathcal K_{p,M}\cap\mathcal M^K\cap\mathcal B_{\mathcal I} ,
\end{equation}
with the sets defined in \eqref{eq:discretized_prob}. Here
$\mathcal K_{p,M}$ enforces every pairwise fairness bound.
$\mathcal M^K$ keeps each group vector nondecreasing, and
$\mathcal B_{\mathcal I}$ keeps each grid value in the prediction range
$\mathcal I=[a,b]$.

Group $s$ contributes to the objective in proportion to its estimated
frequency $\hat r_s$. The weighted inner product for the projection is
\[
    \langle\mathbf a,\mathbf b\rangle_{\hat r,M}
    :=\frac1M\sum_{s=1}^K\hat r_s\,\mathbf a_s^\top\mathbf b_s,
\]
for $\mathbf a_s,\mathbf b_s\in\mathbb R^M$, $s=1,\ldots,K$.
It uses the same group weights as the objective.
With this choice, the gradient component for group $s$ is
$\partial_q\mathcal C$ itself because the group weight $\hat r_s$ is already
included in the inner product. The projection below uses the same weighted
notion of distance.

For integral constraints $1\le p<\infty$, one projected-gradient iteration
has two steps. First, take a step in the negative-gradient direction for the
known-cost objective:
\begin{equation} \label{eq:known_cost_gradient_step}
    \tilde q_{s,k}^{(t+1)}
    =q_{s,k}^{(t)}-\eta_t\,\partial_q\mathcal C(\hat q_{s,k}^*,q_{s,k}^{(t)}),
    \qquad s=1,\ldots,K,\quad k=1,\ldots,M .
\end{equation}
Second, return the updated grid values to the set that satisfies all fairness,
ordering, and prediction-range requirements:
\begin{equation} \label{eq:weighted_projection_step}
    (\mathbf q_1^{(t+1)},\ldots,\mathbf q_K^{(t+1)})
    =\argmin_{(\mathbf v_1,\ldots,\mathbf v_K)\in
    \mathcal K_{p,M}\cap\mathcal M^K\cap\mathcal B_{\mathcal I}}
    \sum_{s=1}^K \hat r_s\|\mathbf v_s-\tilde{\mathbf q}_s^{(t+1)}\|_2^2.
\end{equation}
The two steps play different roles. The first is a loss-based gradient step.
The second enforces all constraints together. We compute this second step with
weighted Dykstra projection \citep{bauschkecombettes2017}.
Dykstra's method repeatedly
projects onto one constraint at a time and keeps a correction for the effect
of earlier projections. Pairwise grid $L_p$ balls enforce fairness. Isotonic
projection, which replaces a vector by its closest nondecreasing vector,
enforces the quantile order within each group. Coordinate-wise clipping keeps
all values inside $\mathcal I$. The method stores a separate correction for
every pairwise set, every group ordering constraint, and the box constraint.
It therefore computes a projection onto the full intersection of the
constraints.

For two groups, the fairness projection has the same average--difference
geometry as the closed-form squared-loss solution. Let
\[
    \tilde{\bar{\mathbf q}}=\hat r_1\tilde{\mathbf q}_1+\hat r_2\tilde{\mathbf q}_2,
    \qquad
    \tilde{\mathbf d}=\tilde{\mathbf q}_1-\tilde{\mathbf q}_2.
\]
The grid fairness bound requires the adjusted difference to lie in an
$\ell_p$ ball of radius $\epsilon'=M^{1/p}\epsilon$. Project
$\tilde{\mathbf d}$ onto this ball and call the result
$\mathbf d_{\mathrm{proj}}$. The projected group vectors are
\[
    \mathbf q_1=\tilde{\bar{\mathbf q}}+\hat r_2\mathbf d_{\mathrm{proj}},
    \qquad
    \mathbf q_2=\tilde{\bar{\mathbf q}}-\hat r_1\mathbf d_{\mathrm{proj}}.
\]
For $p=2$, the projection rescales the whole difference vector when necessary.
For $p=1$, it uses the standard soft-thresholding projection onto an
$\ell_1$ ball \citep{duchi2008efficient}. With $K>2$, each pair $(i,j)$ uses
the same calculation after normalizing its two weights to
$\hat r_i/(\hat r_i+\hat r_j)$ and
$\hat r_j/(\hat r_i+\hat r_j)$. Dykstra's method cycles through these
pairwise calculations and the other constraints, carrying the corresponding
correction variables from one cycle to the next.

For $p=\infty$, the fairness bound applies separately at each grid point.
One option is to solve the sample version of \eqref{eq:pointwise_convex} at
each point:
\[
    \min_{q_1,\dots,q_K\in\mathcal I}
    \sum_{s=1}^K \hat r_s\mathcal C(\hat q_{s,k}^*,q_s)
    \quad\text{s.t.}\quad |q_i-q_j|\le\epsilon,\quad i,j\in\mathcal S .
\]
The resulting values may not be ordered across grid points. Sorting each group
curve by the simultaneous increasing rearrangement restores the required
order. {This sorting cannot make any pair of curves farther apart.
Assumption~\ref{assum:cost_function2} also ensures that it does not raise the
total loss.} A second option is to use the same
full-intersection grid projection as above, replacing the finite-$p$ balls by
sets that bound the maximum gap.

\paragraph{Examples of known induced costs.}
The induced cost can often be computed from a familiar conditional model. For
squared loss, if $\ell(y,q)=(y-q)^2$ and
$f^*(X,S)=\mathbb E[Y\mid X,S]$, then
\[
    \mathcal C(z,q)=(z-q)^2+\operatorname{Var}(Y\mid f^*(X,S)=z),
    \qquad
    \partial_q\mathcal C(z,q)=2(q-z).
\]
The conditional variance does not depend on $q$, so it does not affect the
fairness adjustment. For logistic loss on the logit scale,
\[
    \ell(y,q)=-yq+\log(1+e^q),
    \qquad \tau(t)=\frac{e^t}{1+e^t},
\]
if \(Y\mid f^*(X,S)=z\sim\mathrm{Bernoulli}(\tau(z))\), then
\[
    \partial_q\mathcal C(z,q)
    =\mathbb E[\tau(q)-Y\mid f^*(X,S)=z]
    =\tau(q)-\tau(z).
\]
More generally, consider a generalized linear model (GLM) with its canonical
link.
If its negative log-likelihood is \(\ell(y,q)=b(q)-yq\), with $q$ on the
natural-parameter scale, and its conditional mean is
\(\mathbb E[Y\mid f^*(X,S)=z]=b'(z)\), then
\[
    \partial_q\mathcal C(z,q)=b'(q)-b'(z).
\]
For logistic and other generalized linear-model losses, the theory requires
the objective to rise sufficiently quickly as the prediction moves away from
its minimum. This property is called strong convexity. The conditions in
Section~\ref{sec:theory} apply on a compact natural-parameter range or after
adding a small ridge regularizer.

\begin{remark}[When the induced cost $\mathcal{C}$ is unknown]
\label{rem:unknown_cost}
{\rm
In many applications the conditional law \(Y\mid f^*(X,S)=z\) is not available, so the induced cost cannot be evaluated directly. A natural practical proxy is to keep the concomitant response
\[
    Y_{s,k}^{\mathrm{obs}}:=Y^{(s)}_{(\lceil n_su_k\rceil)}
\]
attached to the ordered fitted value and replace the known-cost gradient in \eqref{eq:known_cost_gradient_step} by
\begin{equation}\label{eq:observed_loss_gradient_proxy}
    \tilde q_{s,k}^{(t+1)}
    =
    q_{s,k}^{(t)}-\eta_t\,\partial_q\ell(Y_{s,k}^{\mathrm{obs}},q_{s,k}^{(t)}).
\end{equation}
Equivalently, this proxy optimizes the grid objective
\[
    \widehat F_n^L(\mathbf q_1,\dots,\mathbf q_K)
    :=
    \frac1M\sum_{s=1}^K \hat r_s\sum_{k=1}^M
    \ell(Y_{s,k}^{\mathrm{obs}},q_{s,k})
\]
over the same full intersection. This computational proxy is used in the
numerical section, but it is not the estimator covered by the statistical
theorem below.}
\end{remark}

\paragraph{Stage 4: Prediction for new observations.}
The closed-form rule and the full quantile-function problem return complete
adjusted quantile curves. The two grid methods return only adjusted values
$\mathbf q_s$ at the midpoint nodes. We turn these grid values into a curve
after the optimization. For $0<u_1<\cdots<u_M<1$, define
$\operatorname{Interp}_M$ as the interpolation operator that takes a vector
$\mathbf v=(v_1,\ldots,v_M)$ and returns a function on $[0,1]$. The function
$\operatorname{Interp}_M(\mathbf v)$ linearly connects $(u_k,v_k)$ and
$(u_{k+1},v_{k+1})$. It equals $v_1$ on $[0,u_1]$ and $v_M$ on
$[u_M,1]$. We therefore set
$\hat Q_s(u)=\{\operatorname{Interp}_M(\mathbf q_s)\}(u)$ for $u\in[0,1]$. This
interpolation preserves the nondecreasing order and keeps the curve in
$\mathcal I$.

This interpolation step is separate from the empirical quantile construction
in Stage 2. There, $\hat Q_s^*$ is the exact inverse empirical distribution
function. Here, $\hat Q_s$ is the interpolated output of a grid calculation.
In Section~\ref{sec:theory}, $\hat Q_s$ instead denotes the known-cost
plug-in estimator defined by \eqref{eq:known_cost_objective}.

Once an adjusted curve is available, either directly or by interpolation, map
a new score to its within-group rank. The final predictor is
\[
    \hat f_{\mathrm{fair}}(x,s)=\hat Q_s\{\hat F_s(\hat f(x,s))\},
\]
where $\hat F_s$ is the empirical distribution function of the unconstrained
calibration scores in group $s$:
\[
    \hat F_s(t)=n_s^{-1}\sum_{i:S_i=s}\mathbbm 1\{\hat f(X_i,S_i)\le t\}.
\]
We compute this function once from the group-$s$ calibration scores and then
hold it fixed for new observations. Its value can equal $0$ or $1$. The
constant pieces on $[0,u_1]$ and $[u_M,1]$ therefore make the rank map
well-defined at both endpoints and keep every prediction inside the allowed
range.

The two-group ACS implementation in Section~\ref{sec:real_data} uses one
additional convention for its $W_1$ and $W_2$ adjustments. If all adjusted
group quantile vectors are exactly equal to their original vectors, it
returns a copy of the original predictions; otherwise it applies the rank
map above. This avoids changing predictions through an empirical
distribution-function/quantile round trip when no curve adjustment is
needed. This convention is specific to the two-group ACS implementation; it
does not change the simulation, classification, or three-group results, or
the full-function known-cost estimator studied in Section~\ref{sec:theory}.

{The final prediction rule uses the person's group label. The grid
calculation guarantees only that the adjusted curves meet the tolerance at
the grid points. After we convert a fitted score to an estimated rank and
read a value from the curve, the resulting predictions need not satisfy
exactly the same Wasserstein bound. This difference can arise when groups
have different sample sizes or when fitted scores are tied.}

\section{Theoretical Properties} \label{sec:theory}
In this section, we study the statistical
behavior of the proposed estimator, which replaces the unknown population group weights and
quantile functions with estimates from the data. It then solves the same
fairness problem using these estimated inputs.
We obtain two main results. First, we show that the estimated fair quantile functions
converge to their population targets. Second, we show that the final
prediction rule converges to the population fair predictor.

The proof has three steps. Lemma 4 shows that the estimated group quantiles
converge to the true group quantiles. Theorem 2 shows that the fairness adjustment
preserves the convergence rate.
Crucially, strong convexity makes this possible because
it prevents the optimizer from changing too much when the inputs change.
Lemma 5 and Corollary 1 then show that using estimated ranks still produces
consistent predictions.

\subsection{Assumptions}

We now state the assumptions used in our analysis.


\begin{assumption}[Positive group proportions]\label{assum:group_proportions}
The number of sensitive groups \(K\) is fixed and
\[
    r_{\min}:=\min_{1\le s\le K}r_s>0.
\]
For the realized calibration sample, \(n_s=\sum_{i=1}^n\mathbbm 1\{S_i=s\}\) and \(\hat r_s=n_s/n\) are fixed inputs to the empirical optimization. Under the i.i.d. sampling, \(\max_s|\hat r_s-r_s|=O_p(n^{-1/2})\).
\end{assumption}

\begin{assumption}[Density of latent predictors] \label{assum:density}
For each group $s \in \{1, \dots, K\}$, the cumulative distribution function $F^*_s$ of the latent predictor $f^*(X, S) \mid S=s$ admits a probability density function $g_s$ that is bounded away from zero and infinity on a compact support interval $\mathcal I_s$. That is, there exist constants $\underline{c}, \bar{c} > 0$ such that $\underline{c} \le g_s(t) \le \bar{c}$ for all $t \in \mathcal I_s$.
\end{assumption}

\begin{assumption}[Conditional first-stage $L_2$ rate]\label{assum:first_step}
The estimator $\hat f_m$, trained on the independent auxiliary sample of
size $m$ and clipped to $\mathcal I$, satisfies, for a deterministic
$\rho_m\to0$ and every group $s$,
\[
\mathbb E\!\left[(\hat f_m(X,s)-f^*(X,s))^2
\mid\mathcal D_m^{\mathrm{tr}},S=s\right]=O_p(\rho_m^2).
\]
\end{assumption}

The lower density bound in Assumption \ref{assum:density} prevents the quantile curve $Q_s^*$ from changing too
quickly.
Assumption~\ref{assum:first_step}
controls the average squared difference between fitted and true scores within
each group, conditional on the training data.

\subsection{Convergence of Latent Quantiles}
We first study the estimated score quantiles before applying any fairness
adjustment.

\begin{lemma} \label{lem:latent_convergence}
Under Assumptions \ref{assum:group_proportions}, \ref{assum:density} and \ref{assum:first_step}, the empirical latent quantile functions satisfy:
$$ \|\hat{Q}_s^* - Q_s^*\|_2 = O_p\left(\rho_m+n^{-1/2}\right). $$
\end{lemma}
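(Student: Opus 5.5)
We decompose the error with an intermediate quantile function. Let $\tilde Q_s$ be the quantile function of the law of $\hat f_m(X,s)$ given $S=s$ and the training sample $\mathcal D_m^{\mathrm{tr}}$. Then
\[
\|\hat Q_s^*-Q_s^*\|_2\le \|\hat Q_s^*-\tilde Q_s\|_2+\|\tilde Q_s-Q_s^*\|_2 .
\]
The second term is a first-stage bias and the first is calibration sampling error. Throughout we condition on $\mathcal D_m^{\mathrm{tr}}$; independence of the two samples makes the group-$s$ calibration scores, given $n_s$, an i.i.d. sample from the law with quantile function $\tilde Q_s$.

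\textbf{Bias term.} By Lemma~\ref{Wsup lemma}, $\|\tilde Q_s-Q_s^*\|_2=W_2(\mathcal L(\hat f_m(X,s)\mid S=s),\mu_s)$. The coupling $(\hat f_m(X,s),f^*(X,s))$ under $X\mid S=s$ is admissible, so
\[
W_2^2\le \mathbb E[(\hat f_m(X,s)-f^*(X,s))^2\mid \mathcal D_m^{\mathrm{tr}},S=s]=O_p(\rho_m^2)
\]
by Assumption~\ref{assum:first_step}. This gives $O_p(\rho_m)$ with no regularity needed.

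\textbf{Sampling term.} Given $n_s$, $\hat Q_s^*$ is the empirical quantile function of $n_s$ i.i.d. draws from $\tilde\nu_s:=\mathcal L(\hat f_m(X,s)\mid S=s,\mathcal D^{\mathrm{tr}}_m)$. So $\|\hat Q_s^*-\tilde Q_s\|_2=W_2(\hat\nu_{s},\tilde\nu_s)$ for the empirical measure $\hat\nu_s$. Both measures live on the compact interval $\mathcal I$, hence
\[
W_2^2\le (b-a)W_1=(b-a)\int |\hat F-\tilde F|\,dt ,
\]
and $\mathbb E\int|\hat F-\tilde F|\le \int\sqrt{\tilde F(1-\tilde F)/n_s}\,dt\le (b-a)/(2\sqrt{n_s})$. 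This crude route gives only $W_2=O_p(n^{-1/4})$, which is too slow.

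To reach $n^{-1/2}$ we use Assumption~\ref{assum:density}, but it controls $Q_s^*$, not $\tilde Q_s$, so we split differently. Couple everything through the true scores. With $Z_i=f^*(X_i,s)$ and $\hat Z_i=\hat f_m(X_i,s)$ for the group-$s$ calibration points, the rearrangement inequality (the $L_2$ contraction in Lemma~\ref{lem:rearrangement}, applied to step functions on $(0,1)$) gives
\[
\|\hat Q_s^*-\hat Q^{Z}_s\|_2^2\le \frac1{n_s}\sum_i(\hat Z_i-Z_i)^2 ,
\]
where $\hat Q^Z_s$ is the empirical quantile function of the $Z_i$. Conditionally on training, the right side has expectation equal to the conditional MSE, which is $O_p(\rho_m^2)$. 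Markov's inequality applied conditionally then gives $O_p(\rho_m)$, and this replaces the bias step above. It remains to show $\|\hat Q^Z_s-Q_s^*\|_2=O_p(n_s^{-1/2})$ for an i.i.d. sample from $\mu_s$ with density bounded in $[\underline c,\bar c]$ on a compact interval. This is the classical quantile-process bound: $Q_s^*$ is $1/\underline c$-Lipschitz, and $\sqrt{n}(\hat Q^Z_s-Q^*_s)$ behaves like $-\mathbb B(u)/g_s(Q_s^*(u))$ for a Brownian bridge $\mathbb B$. For a self-contained argument we write
\[
\mathbb E\|\hat Q^Z-Q^*\|_2^2=\int_0^1\mathbb E(Z_{(\lceil n u\rceil)}-Q^*(u))^2\,du ,
\]
express $Z_{(k)}=Q^*(U_{(k)})$ with uniform order statistics, and use the Lipschitz bound together with $\mathbb E(U_{(k)}-u)^2\lesssim u(1-u)/n+n^{-2}$. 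This gives $O(1/n)$.

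Finally, Assumption~\ref{assum:group_proportions} gives $n_s/n\to r_s>0$ in probability, so $n_s^{-1/2}=O_p(n^{-1/2})$ and the event $n_s=0$ is asymptotically negligible. Combining the two terms yields $O_p(\rho_m+n^{-1/2})$.

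The main obstacle is the sampling term. A naive $W_2$ bound on a compact interval only gives $n^{-1/4}$, so the argument must route the density assumption through the true scores $Z_i$ via the coupling and rearrangement step rather than through the unknown law of $\hat f_m$.
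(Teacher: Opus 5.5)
Your final argument is correct and is essentially the paper's proof. Both pass through the empirical quantile function of the unobserved true calibration scores (your $\hat Q^Z_s$, the paper's $\tilde Q_s^*$), bound the fitted-versus-true part by the within-sample identity coupling (your rearrangement step gives the same inequality), and use the lower density bound only for the true-score sampling error. The one difference is the tool for that last step: the paper inverts the Dvoretzky--Kiefer--Wolfowitz bound to get the stronger sup-norm rate $O_p(n^{-1/2})$, whereas you compute the $L_2$ second moment directly from uniform order statistics and the $1/\underline c$-Lipschitz property of $Q_s^*$; your preliminary route through the law of $\hat f_m$ can simply be dropped.
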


The term $\rho_m$ comes from fitting the regression function, while
$n^{-1/2}$ comes from estimating the score distributions. We next study how
these errors affect the fair quantile functions.

\subsection{Convergence of the Fair Predictor}

We now compare the population fairness problem with its finite-sample version.
Both use the feasible set
\[
\mathcal K\equiv\mathcal K_{p,\epsilon,\mathcal I},
\]
where, for \(p=\infty\), distance means the largest difference after ignoring
sets of ranks with measure zero, a convention called the essential supremum.
The set is bounded and convex.
Both problems minimize a loss over this set. For a
candidate tuple of group quantile functions \(\mathbf Q=(Q_1,\ldots,Q_K)\),
the population loss is
\[
G(\mathbf Q):=\sum_{s=1}^K r_s\int_0^1 \mathcal C\!\big(Q_s^*(u),Q_s(u)\big)\,du.
\]
For each group, the integral averages the induced cost over ranks, and \(r_s\)
weights that group by its population share. To describe the optimizer, we
consider small changes that keep the quantile curves inside \(\mathcal K\).
At the minimum, no allowed direction can reduce \(G\) to first order. This
condition is called a variational inequality\citep{showalter2009hilbert}.

Our main result bounds the distance between the population fair quantiles and
the estimated fair quantiles $\hat Q_s$. Here and below, the
\emph{known-cost plug-in estimator} means a minimizer of
$\widehat G_{n,\hat r}^C$ over the full feasible set.

The next assumption has three parts. First, strong convexity means that the
objective rises sufficiently quickly as we move away from its minimum. This
makes the target unique and stable. Second, the gradient must change smoothly
with the latent score, so an error in the score quantiles produces a controlled
gradient error. Third, the latent-score and prediction ranges are compact, and
the gradient is assumed to be uniformly bounded on that domain.

\begin{assumption}[Regularity of Known Cost] \label{assum:cost_regularity}
The induced expected cost
\[
\mathcal{C}(z, q) := \mathbb{E}[\ell(Y, q) \mid f^*(X,S)=z],
\]
or its derivative \(\Psi(z,q):=\partial_q\mathcal C(z,q)\) is available analytically, up to terms that do not depend on \(q\). It satisfies the following conditions:
\begin{enumerate}
    \item \textbf{Strong Convexity of Induced Cost:}
    For every latent-score value \(z\), the map \(q\mapsto\mathcal C(z,q)\) is \(\mu\)-strongly convex on the feasible prediction range. That is, there exists $\mu > 0$ such that for all feasible $q,q'$ and all relevant $z$,
    \[
    (\partial_q \mathcal{C}(z, q) - \partial_q \mathcal{C}(z, q'))(q - q')
    \ge \mu (q-q')^2.
    \]
    Consequently, the population objective $G$ is \(\mu\)-strongly convex on \(\mathcal K\), and the known-cost plug-in objective is strongly convex on the event \(\min_s n_s>0\).

    \item \textbf{Lipschitz Continuity of the Score in $z$:}
    The partial derivative \(\Psi(z,q)=\partial_q \mathcal{C}(z, q)\) is
    \(L_z\)-Lipschitz in $z$, uniformly over feasible $q$:
    there exists $L_z < \infty$ such that for all feasible $q$ and all relevant $z,z'$,
    \[
    |\partial_q \mathcal{C}(z, q) - \partial_q \mathcal{C}(z', q)| \le L_z |z - z'|.
    \]

    \item \textbf{Bounded feasible range:}
    The feasible set $\mathcal K$ is uniformly bounded in the sense that there exists a compact interval $\mathcal I\subset\mathbb R$ with
    \[
    \{Q_s(u):\mathbf Q\in\mathcal K,\ u\in(0,1]\}\subset\mathcal I,
    \qquad s=1,\dots,K.
    \]
    In addition, \(\Psi\) is finite and uniformly bounded on the relevant compact domain:
    \[
    \max_{1\le s\le K}\ \sup_{z\in\mathcal I_s,\ q\in\mathcal I}
    |\Psi(z,q)|<\infty.
    \]
\end{enumerate}
For squared error loss under \(f^*(X,S)=\mathbb E[Y\mid X,S]\), \(\partial_q\mathcal C(z,q)=2(q-z)\), so \(\mu=2\) and \(L_z=2\). For logistic and canonical GLM losses, the assumption applies on compact natural-parameter ranges with curvature bounded away from zero, or after adding a small ridge regularizer.
\end{assumption}

Lemma~\ref{lem:latent_convergence} controls the estimated score quantiles. The
regularity conditions above then control the resulting gradients. In
particular, Lemma~\ref{lem:latent_convergence} and
Assumption~\ref{assum:cost_regularity} imply
\[
    \sum_{s=1}^K\int_0^1
    \Psi(\hat Q_s^*(u),Q_s(u))^2\,du
    =O_p(1).
\]
To see why, note that the range of \(Q_s^*\) lies in the compact support
interval \(\mathcal I_s\), and every feasible \(Q_s\) lies in
\(\mathcal I\). The Lipschitz condition in \(z\) then controls the effect of
replacing \(Q_s^*\) by \(\hat Q_s^*\).

The known-cost plug-in estimator solves
\[
\hat{\mathbf Q}\in\argmin_{\mathbf Q\in\mathcal K}\widehat G_{n,\hat r}^C(\mathbf Q),
\qquad
\widehat G_{n,\hat r}^C(\mathbf Q)
=
\sum_{s=1}^K \hat r_s\int_0^1
\mathcal C\!\big(\hat Q_s^*(u),Q_s(u)\big)\,du.
\]
This objective estimates \(G\) in two places. It replaces the population group
weights \(r_s\) by their sample versions \(\hat r_s\). It also replaces the
population score quantiles \(Q_s^*\) by the empirical score quantiles
\(\hat Q_s^*\). The group-weight error is of order \(n^{-1/2}\), the same
order as the calibration part of the quantile error.
Theorem~\ref{thm:consistency} accounts for both sources.

For a tuple \(\mathbf Q=(Q_1,\dots,Q_K)\), its population-weighted
\(L_2\) norm is
\[
\|\mathbf Q\|_2
:=
\left(
\sum_{s=1}^K r_s \int_0^1 Q_s(u)^2\,du
\right)^{1/2}.
\]
Given two tuples of square-integrable
functions, \(\mathbf A=(A_1,\ldots,A_K)\) and
\(\mathbf B=(B_1,\ldots,B_K)\), their weighted inner product is
\[
\langle \mathbf A,\mathbf B\rangle
:=
\sum_{s=1}^K r_s \int_0^1 A_s(u)B_s(u)\,du.
\]

\begin{theorem}\label{thm:consistency}
Suppose Assumptions~\ref{assum:group_proportions}, \ref{assum:density}, \ref{assum:first_step}, and \ref{assum:cost_regularity} hold.
Let $\hat{\mathbf Q}=(\hat Q_1,\dots,\hat Q_K)$ be the known-cost
plug-in minimizer of \(\widehat G_{n,\hat r}^C\) over
\(\mathcal K_{p,\epsilon,\mathcal I}\), and let
$\mathbf Q^{\mathrm{fair}}=(Q_1^{\mathrm{fair}},\dots,Q_K^{\mathrm{fair}})$ denote the population-optimal approximately fair quantile function tuple.
Equivalently, let
\[
\mathbf Q^{\mathrm{fair}}\in\argmin_{\mathbf Q\in\mathcal K}G(\mathbf Q),
\qquad
\hat{\mathbf Q}\in\argmin_{\mathbf Q\in\mathcal K}\widehat G_{n,\hat r}^C(\mathbf Q),
\]
where the same bounded convex feasible set is used in both problems.
Then
\[
\sum_{s=1}^K r_s \|\hat Q_s-Q_s^{\mathrm{fair}}\|_2^2
=
O_p\!\left(\rho_m^2+n^{-1}\right).
\]
\end{theorem}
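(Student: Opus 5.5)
The plan is a standard stability argument for strongly convex problems over a convex set: combine the first-order variational inequalities of the two problems, and control the difference of their gradients at a common point.

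Write $\Psi=\partial_q\mathcal C$. For a tuple $\mathbf Q$, define the population gradient $\nabla G(\mathbf Q)$ as the tuple with components $\Psi(Q_s^*(u),Q_s(u))$, so that the directional derivative of $G$ at $\mathbf Q$ in the direction $\mathbf H$ is $\langle\nabla G(\mathbf Q),\mathbf H\rangle$. On the event $\min_s n_s>0$, whose probability tends to one by Assumption~\ref{assum:group_proportions}, the empirical objective has directional derivative
\[
\sum_s \hat r_s\int_0^1\Psi(\hat Q_s^*(u),Q_s(u))H_s(u)\,du .
\]
Because $\mathcal K$ is convex and bounded, and $\mathcal C$ is differentiable with bounded $\Psi$ (dominated convergence), optimality of $\mathbf Q^{\mathrm{fair}}$ gives
\[
\langle\nabla G(\mathbf Q^{\mathrm{fair}}),\hat{\mathbf Q}-\mathbf Q^{\mathrm{fair}}\rangle\ge 0 .
\]
Optimality of $\hat{\mathbf Q}$ gives
\[
\sum_s\hat r_s\int_0^1\Psi(\hat Q_s^*,\hat Q_s)(Q_s^{\mathrm{fair}}-\hat Q_s)\,du\ge0 .
\]

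To combine them, set $\mathbf H=\hat{\mathbf Q}-\mathbf Q^{\mathrm{fair}}$ and add the two inequalities. Then add and subtract the term $\sum_s\hat r_s\int\Psi(\hat Q_s^*,Q_s^{\mathrm{fair}})H_s$. Strong convexity (Assumption~\ref{assum:cost_regularity}.1), applied pointwise in $u$ and weighted by $\hat r_s$, yields
\[
\mu\sum_s\hat r_s\|H_s\|_2^2\le\sum_s\int_0^1\bigl[r_s\Psi(Q_s^*,Q_s^{\mathrm{fair}})-\hat r_s\Psi(\hat Q_s^*,Q_s^{\mathrm{fair}})\bigr]H_s\,du .
\]
I would split the bracket into two pieces:
\[
\hat r_s\bigl[\Psi(Q_s^*,Q_s^{\mathrm{fair}})-\Psi(\hat Q_s^*,Q_s^{\mathrm{fair}})\bigr]
\quad\text{and}\quad
(r_s-\hat r_s)\Psi(Q_s^*,Q_s^{\mathrm{fair}}) .
\]
For the first piece, the Lipschitz condition in $z$ bounds it pointwise by $\hat r_sL_z|\hat Q_s^*-Q_s^*|$. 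For the second piece, the uniform bound $B$ on $\Psi$ from Assumption~\ref{assum:cost_regularity}.3 bounds it by $|r_s-\hat r_s|B$. Cauchy--Schwarz then gives a right-hand side of at most
\[
\Bigl(\sum_s \bigl(L_z\|\hat Q_s^*-Q_s^*\|_2+B|\hat r_s-r_s|\bigr)^2\Bigr)^{1/2}\Bigl(\sum_s\|H_s\|_2^2\Bigr)^{1/2}.
\]

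On the left, $\hat r_s\ge r_{\min}/2$ with probability tending to one, and the population-weighted and unweighted norms are equivalent up to the constants $r_{\min}$ and $1$. Dividing by $(\sum_s\|H_s\|_2^2)^{1/2}$ and squaring gives
\[
\sum_s r_s\|\hat Q_s-Q_s^{\mathrm{fair}}\|_2^2\lesssim \sum_s\|\hat Q_s^*-Q_s^*\|_2^2+\max_s|\hat r_s-r_s|^2 .
\]
Lemma~\ref{lem:latent_convergence} and Assumption~\ref{assum:group_proportions} then make this $O_p(\rho_m^2+n^{-1})$. Note that $\rho_m$ and $n^{-1/2}$ enter additively, so squaring their sum gives the stated rate up to a factor of 2.

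The main obstacle is rigor in the first-order conditions, not the algebra. Three points need care.
\begin{itemize}
\item Existence of minimizers. $\mathcal K$ is convex and closed in $L_2$, since pairwise $L_p$ balls, monotonicity and range constraints are all preserved under a.e.-convergent subsequences. It is also bounded, and the objectives are continuous, convex and coercive in $L_2$, so minimizers exist by weak lower semicontinuity.
\item Validity of the variational inequality. Differentiate $t\mapsto G(\mathbf Q+t(\mathbf Q'-\mathbf Q))$ at $t=0^+$, using the mean value theorem and bounded $\Psi$ on the compact domain to pass the derivative under the integral.
\item Domain of $\hat Q_s^*$. The empirical quantiles lie in $\mathcal I$ after clipping, not necessarily in $\mathcal I_s$, so the Lipschitz and boundedness conditions must be read on the compact domain $\mathcal I\times\mathcal I$. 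I would state this explicitly as the ``relevant'' domain in Assumption~\ref{assum:cost_regularity}.
\end{itemize}

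On the complementary event $\min_s n_s=0$, whose probability vanishes, nothing needs to be shown for an $O_p$ statement.
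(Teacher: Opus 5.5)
Your proposal is correct and follows essentially the same route as the paper: both proofs combine the two variational inequalities with strong convexity, bound the resulting gradient discrepancy by $L_z\|\hat Q_s^*-Q_s^*\|_2$ plus a group-weight term, and finish with Cauchy--Schwarz and Lemma~\ref{lem:latent_convergence}. The only difference is a mirror-image choice. The paper invokes strong convexity of $G$ and compares gradients at $\hat{\mathbf Q}$. You instead invoke strong convexity of the plug-in objective, so you need it at $z=\hat Q_s^*(u)$ and need $\hat r_s\ge r_{\min}/2$ to return to the $r$-weighted norm, and you compare gradients at $\mathbf Q^{\mathrm{fair}}$. Your choice has the small benefit that the weight-error term only uses the bound on $\Psi$ over $\mathcal I_s\times\mathcal I$, exactly as stated in Assumption~\ref{assum:cost_regularity}.
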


The squared error rate has the same two sources as
Lemma~\ref{lem:latent_convergence}. Fitting the first-stage regression
contributes \(\rho_m^2\), and using a finite calibration sample contributes
\(n^{-1}\). Strong convexity ensures that these small input errors produce a
small change in the constrained minimizer.

Theorem~\ref{thm:consistency} controls the estimated target quantile functions.
We now turn this result into a statement about predictions. Two facts are
needed. First, the rank computed from the fitted score must approach the rank
of the population score. Second, the population fair quantile must change
continuously when its rank changes slightly. For a function
\(g:(0,1)\to\mathbb R\), its Lipschitz constant is
\[
\operatorname{Lip}(g)
:=
\sup_{u\neq v,\;u,v\in(0,1)}
\frac{|g(u)-g(v)|}{|u-v|},
\]
with the convention that $\operatorname{Lip}(g)=\infty$ if the supremum is infinite.
The corollary below needs only continuity, not
a finite Lipschitz constant.

As defined in Section~\ref{sec:estimation}, the calibration sample is used to
construct \(\hat F_s\), \(\hat Q_s^*\), and \(\hat Q_s\). We collect it
together with the auxiliary training sample in
\[
\mathcal H_n
:=
(\mathcal D_m^{\mathrm{tr}},\mathcal D_n^{\mathrm{cal}})
\]
for all data used to construct the final estimator. Let $X^s$ denote an
independent test covariate drawn from the conditional distribution $X\mid S=s$.
The population and estimated ranks of this test observation are
\[
U_s^*:=F_s^*(f^*(X^s,s)),
\qquad
\hat U_s:=\hat F_s(\hat f(X^s,s)).
\]

\begin{lemma}[Stability of empirical ranks]\label{lem:rank_stability}
Suppose Assumptions~\ref{assum:density} and \ref{assum:first_step} hold, and
suppose the auxiliary training sample used to construct $\hat f$ is independent
of the calibration sample used to construct $\hat F_s$. Then, for each group
with $r_s>0$,
\[
\mathbb E\!\left[
\left\{
\hat F_s(\hat f(X^s,s))-F_s^*(f^*(X^s,s))
\right\}^2
\mid \mathcal H_n
\right]
=o_p(1).
\]
Equivalently,
\[
\max_{s:r_s>0}
\mathbb E\!\left[
\left\{
\hat U_s-U_s^*
\right\}^2
\mid \mathcal H_n
\right]
=o_p(1).
\]
Moreover, if
\[
H_{s,n}(u):=\mathbb P(\hat U_s\le u\mid\mathcal H_n),
\qquad u\in[0,1],
\]
then
\[
\max_{s:r_s>0}\sup_{u\in[0,1]}|H_{s,n}(u)-u|=o_p(1).
\]
\end{lemma}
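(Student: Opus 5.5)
Fix a group $s$ and write $Z^*:=f^*(X^s,s)$, $\hat Z:=\hat f(X^s,s)$, so that $U_s^*=F_s^*(Z^*)$ and $\hat U_s=\hat F_s(\hat Z)$. Introduce the intermediate distribution function $\tilde F_s(t):=\mathbb P(\hat f(X,s)\le t\mid \mathcal D_m^{\mathrm{tr}},S=s)$, the conditional law of fitted scores given the training sample. The plan is to split
\[
\hat U_s-U_s^*=\{\hat F_s(\hat Z)-\tilde F_s(\hat Z)\}+\{\tilde F_s(\hat Z)-F_s^*(\hat Z)\}+\{F_s^*(\hat Z)-F_s^*(Z^*)\},
\]
and to show that each piece is $o_p(1)$ in conditional $L_2$ given $\mathcal H_n$. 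Since all three pieces are bounded by $1$ in absolute value, it is enough to show that they are small uniformly or in conditional probability.

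For the first term, $\hat f$ is independent of the calibration sample. So, conditional on $\mathcal D_m^{\mathrm{tr}}$, the group-$s$ calibration scores are i.i.d.\ from $\tilde F_s$. The Dvoretzky--Kiefer--Wolfowitz inequality, applied conditionally, gives $\sup_t|\hat F_s-\tilde F_s|=O_p(n_s^{-1/2})$. Assumption~\ref{assum:group_proportions}, or simply $r_s>0$ together with the law of large numbers, gives $n_s\asymp n$.

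For the third term, Assumption~\ref{assum:density} makes $F_s^*$ $\bar c$-Lipschitz. Hence
\[
\mathbb E[\{F_s^*(\hat Z)-F_s^*(Z^*)\}^2\mid\mathcal H_n]\le \bar c^2\,\mathbb E[(\hat Z-Z^*)^2\mid\mathcal D_m^{\mathrm{tr}}]=O_p(\rho_m^2)
\]
by Assumption~\ref{assum:first_step}. Here we use that the test point is independent of $\mathcal H_n$, so conditioning on $\mathcal H_n$ reduces to conditioning on the training data.

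The middle term, $\sup_t|\tilde F_s(t)-F_s^*(t)|$, is the main obstacle: it needs a comparison of distribution functions from an $L_2$ bound on the scores. I would use the standard smoothing argument. For any $\delta>0$,
\[
\tilde F_s(t)\le F_s^*(t+\delta)+\mathbb P(|\hat Z-Z^*|>\delta\mid\mathcal D_m^{\mathrm{tr}})\le F_s^*(t)+\bar c\delta+\delta^{-2}O_p(\rho_m^2),
\]
and the symmetric lower bound holds in the same way. Choosing $\delta=\rho_m^{2/3}$ gives $\sup_t|\tilde F_s-F_s^*|=O_p(\rho_m^{2/3})=o_p(1)$. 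Combining the three bounds yields the first display, and the maximum over the finitely many groups $s$ preserves $o_p(1)$.

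For the last claim, $U_s^*\sim\mathrm{Unif}(0,1)$ because $F_s^*$ is continuous. Moreover,
\[
H_{s,n}(u)=\mathbb P(\hat F_s(\hat Z)\le u\mid\mathcal H_n).
\]
Convergence of $\hat U_s-U_s^*$ in conditional $L_2$ gives convergence in conditional probability. Then, for any $\eta>0$,
\[
u-\eta-\mathbb P(|\hat U_s-U_s^*|>\eta\mid\mathcal H_n)\le H_{s,n}(u)\le u+\eta+\mathbb P(|\hat U_s-U_s^*|>\eta\mid\mathcal H_n)
\]
uniformly in $u$. This uses the uniform law of $U_s^*$, and Markov's inequality turns the conditional $L_2$ bound into control of these tail terms. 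Letting $\eta\downarrow0$ slowly completes the proof.
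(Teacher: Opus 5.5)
Your proposal is correct and follows the paper's proof essentially step for step. It uses the same three-way split through the conditional law of the fitted score: conditional DKW for $\hat F_s$, a smoothing-plus-Markov comparison of the fitted and true score distribution functions, and the $\bar c$-Lipschitz bound on $F_s^*$. It then uses the same sandwich-plus-Markov argument for $\sup_u|H_{s,n}(u)-u|$. The only difference is cosmetic: where the paper lets its smoothing scale tend to zero ``slowly enough,'' you choose $\delta=\rho_m^{2/3}$ and obtain the explicit bound $\sup_t|\tilde F_s-F_s^*|=O_p(\rho_m^{2/3})$.
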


\begin{assumption}[Continuity of population fair quantiles]\label{assum:continuous_fair_quantiles}
For each group with \(r_s>0\), the population fair quantile function
\(Q_s^{\mathrm{fair}}\) is continuous on \([0,1]\).
\end{assumption}

\begin{remark}\label{rem:continuous_fair_quantiles}
{\rm
Lemma~\ref{lem:rank_stability} verifies the stability of the empirical rank
transformation under the sample-splitting scheme used in
Section~\ref{sec:estimation}. The corollary below does not require a uniform
Lipschitz bound on the estimated interpolants \(\hat Q_s\). It uses only the
monotonicity of the estimated quantile curves, the bounded feasible prediction
range from Assumption~\ref{assum:cost_regularity}, and the continuity of the
population fair quantile functions.
}
\end{remark}

\begin{corollary}[Consistency of the final fair predictor]\label{cor:ffair_consistency}
Suppose Assumptions~\ref{assum:group_proportions}, \ref{assum:density}, \ref{assum:first_step},
\ref{assum:cost_regularity}, and \ref{assum:continuous_fair_quantiles} hold.
Define the population approximately fair predictor
\[
f_{\mathrm{fair}}^*(x,s)
:=
Q_s^{\mathrm{fair}}\!\left(F_s^*(f^*(x,s))\right),
\]
and the plug-in fair predictor
\[
\hat f_{\mathrm{fair}}(x,s)
:=
\hat Q_s\!\left(\hat F_s(\hat f(x,s))\right).
\]
Then
\[
\mathcal E_f(\hat f_{\mathrm{fair}},f_{\mathrm{fair}}^*)
:=
\left\{
\sum_{s=1}^K r_s\,
\mathbb E\!\left[
\left\{
\hat f_{\mathrm{fair}}(X^s,s)
-
f_{\mathrm{fair}}^*(X^s,s)
\right\}^2
\mid \mathcal H_n
\right]
\right\}^{1/2}
=o_p(1).
\]
Consequently, the final fair prediction rule is consistent in integrated
squared prediction error.
\end{corollary}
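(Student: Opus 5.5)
We split the prediction error for each group $s$ with the triangle inequality, inserting $Q_s^{\mathrm{fair}}(\hat U_s)$:
\[
\hat Q_s(\hat U_s)-Q_s^{\mathrm{fair}}(U_s^*)
=\bigl\{\hat Q_s(\hat U_s)-Q_s^{\mathrm{fair}}(\hat U_s)\bigr\}
+\bigl\{Q_s^{\mathrm{fair}}(\hat U_s)-Q_s^{\mathrm{fair}}(U_s^*)\bigr\}.
\]
It then suffices to show that the conditional second moment given $\mathcal H_n$ of each bracket is $o_p(1)$. Because $K$ is fixed and $r_s\le 1$, the weighted sum is also $o_p(1)$, and taking square roots preserves this.

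\textbf{Second term (rank error).} By Assumption~\ref{assum:continuous_fair_quantiles}, $Q_s^{\mathrm{fair}}$ is continuous on the compact interval $[0,1]$, so it is uniformly continuous and bounded by $\max(|a|,|b|)$. Fix $\eta>0$ and pick $\delta$ such that $|u-v|\le\delta$ implies $|Q_s^{\mathrm{fair}}(u)-Q_s^{\mathrm{fair}}(v)|\le\eta$. Then
\[
\mathbb E[\{Q_s^{\mathrm{fair}}(\hat U_s)-Q_s^{\mathrm{fair}}(U_s^*)\}^2\mid\mathcal H_n]
\le \eta^2+(b-a)^2\,\delta^{-2}\,\mathbb E[(\hat U_s-U_s^*)^2\mid\mathcal H_n],
\]
using Markov's inequality for the event $\{|\hat U_s-U_s^*|>\delta\}$. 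Lemma~\ref{lem:rank_stability} makes the last term $o_p(1)$. Since $\eta$ is arbitrary, the second term is $o_p(1)$.

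\textbf{First term (quantile-function error at a random rank).} This is the main obstacle. The difficulty is that $\hat U_s$ is not exactly uniform given $\mathcal H_n$, and Theorem~\ref{thm:consistency} controls only $\|\hat Q_s-Q_s^{\mathrm{fair}}\|_2$ under Lebesgue measure. Moreover, $\hat Q_s$ has no Lipschitz control, so we cannot compare the law $H_{s,n}$ with Lebesgue measure by a change of variables. Instead we exploit monotonicity.

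First, we upgrade the $L_2$ convergence to uniform convergence. Theorem~\ref{thm:consistency} with $r_s\ge r_{\min}$ gives $\|\hat Q_s-Q_s^{\mathrm{fair}}\|_2=o_p(1)$. For nondecreasing $\hat Q_s$ and continuous nondecreasing $Q_s^{\mathrm{fair}}$, this implies $\sup_{u\in[\delta,1-\delta]}|\hat Q_s(u)-Q_s^{\mathrm{fair}}(u)|=o_p(1)$ for each $\delta>0$. This is the standard P\'olya-type argument. Suppose $\hat Q_s(u)>Q_s^{\mathrm{fair}}(u)+2\eta$. Monotonicity then gives $\hat Q_s(v)\ge \hat Q_s(u)$ for $v\in[u,u+\kappa]$. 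With $\kappa$ chosen by uniform continuity so that $Q_s^{\mathrm{fair}}(v)\le Q_s^{\mathrm{fair}}(u)+\eta$ on that interval, the gap exceeds $\eta$ on a set of length $\kappa$. This forces $\|\hat Q_s-Q_s^{\mathrm{fair}}\|_2^2\ge\eta^2\kappa$, and the case of a gap below is symmetric. The restriction to $[\delta,1-\delta]$ handles the case where such an interval would leave $(0,1)$.

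Second, we treat the boundary ranks using boundedness. Both curves lie in $\mathcal I$, so the squared difference is at most $(b-a)^2$. Hence
\[
\mathbb E[\{\hat Q_s(\hat U_s)-Q_s^{\mathrm{fair}}(\hat U_s)\}^2\mid\mathcal H_n]
\le \sup_{[\delta,1-\delta]}|\hat Q_s-Q_s^{\mathrm{fair}}|^2
+(b-a)^2\,\mathbb P(\hat U_s\notin[\delta,1-\delta]\mid\mathcal H_n).
\]
The last probability is at most $2\delta+2\sup_u|H_{s,n}(u)-u|$, which equals $2\delta+o_p(1)$ by the second part of Lemma~\ref{lem:rank_stability}. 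Letting $\delta\downarrow0$ after $n\to\infty$ shows the first term is $o_p(1)$.

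Combining the two terms over groups yields $\mathcal E_f(\hat f_{\mathrm{fair}},f^*_{\mathrm{fair}})=o_p(1)$. A subtlety to check is that the endpoint extension of $\hat Q_s$ at $u\in\{0,1\}$ stays in $\mathcal I$; the boundary bound above absorbs it.
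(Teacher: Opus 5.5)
Your proof is correct. The decomposition and the treatment of the rank-error term (uniform continuity of $Q_s^{\mathrm{fair}}$, Markov's inequality, and Lemma~\ref{lem:rank_stability}) match the paper's. Where you differ is the curve-estimation term $\mathbb E[\{\hat Q_s(\hat U_s)-Q_s^{\mathrm{fair}}(\hat U_s)\}^2\mid\mathcal H_n]$.

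The paper writes this term as $\int_0^1 g_s\,dH_{s,n}$ with $g_s=(\hat Q_s-Q_s^{\mathrm{fair}})^2$. Because both curves are monotone and $\mathcal I$-valued, $\|g_s\|_\infty\le D_{\mathcal I}^2$ and $\operatorname{TV}(g_s)\le 4D_{\mathcal I}^2$. Stieltjes integration by parts then bounds the term by $\|\hat Q_s-Q_s^{\mathrm{fair}}\|_2^2+6D_{\mathcal I}^2\sup_u|H_{s,n}(u)-u|$. You instead upgrade the $L_2$ convergence of Theorem~\ref{thm:consistency} to uniform convergence on $[\delta,1-\delta]$ by the P\'olya-type monotonicity argument, and control the boundary mass by $2\delta+2\sup_u|H_{s,n}(u)-u|$.

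The paper's route uses only monotonicity and boundedness for this term, so continuity of $Q_s^{\mathrm{fair}}$ enters only through the rank term. Its bound is also explicit and additive, so it would carry over rates if a rate for $\sup_u|H_{s,n}(u)-u|$ were available.

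Your route is more elementary. It avoids the jump and endpoint bookkeeping that integration by parts needs when $H_{s,n}$ is a step function with atoms in $\{0,1/n_s,\ldots,1\}$. It also gives a useful by-product: local uniform consistency of $\hat Q_s$ in probability. This matters because $\hat Q_s$ is evaluated at discrete ranks, and your bound controls a genuine supremum, not an essential one, since $\hat Q_s$ is genuinely monotone. The price is using Assumption~\ref{assum:continuous_fair_quantiles} a second time, which costs nothing here, and losing an explicit rate unless a modulus of continuity for $Q_s^{\mathrm{fair}}$ is specified.
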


\section{Numerical Experiments}\label{sec:numerical}

\subsection{Simulation setup}

We conducted a simulation study, which focuses on three key questions: (i) what shape does the fairness-accuracy
frontier have?  (ii) where along the distribution does each rule act? and (iii) how well
does a calibration constraint transfer to new data? We report test MSE for
regression and test accuracy for classification.

{During calibration, we represent each group distribution by a
fitted curve on a numerical grid. We report three quantities. The
\emph{grid gap} is the largest distance between any two fitted curves. The
\emph{grid violation} is the amount by which this gap exceeds
\(\epsilon\), with zero meaning that the numerical constraint is met. These
two quantities describe the fitted curves on the grid, not the predictions
assigned to individual observations. The \emph{test gap} is computed from
the final predictions on the held-out sample. We compare gaps only when they
use the same Wasserstein distance.}

{To put the tolerances on a common scale, we write
\(\epsilon=\tau D_{\max}\), where \(D_{\max}\) is the grid gap before any
adjustment. Most paths use \(0\le\tau\le1\). The classification plot extends
to \(\tau=1.1\) so that its last point lies near the unadjusted fit. At
\(\tau=0\), the fitted curves have zero gap on the grid.} For the two-group regression simulations, $\tau=1$
leaves the calibration quantile curves unchanged. The final test predictions can still differ
slightly from the raw predictions because they are reconstructed through the
empirical rank map. The two-group ACS implementation instead preserves the
raw predictions when the adjustment is inactive, as described in
Section~\ref{sec:real_data}. For the observed-loss grid method, $\tau=1$ only makes
the original score quantiles feasible. The two-group curves use $n=2000$ and
$R=50$ replications.

{Only the two-group regression results average over repeated
samples. The three-group, classification, and ACS examples each use one data
split, so their curves are mainly for illustrations.}

\paragraph{Data-generating process 1 (DGP1): linear mean shift.}
This process creates differences across the full
prediction distribution. The two groups are balanced:
\((\mathbb P(S=0),\mathbb P(S=1))=(0.5,0.5)\).
The feature vector $X\in\mathbb{R}^5$ is Gaussian with covariance
$\Sigma_{jk}=0.5^{|j-k|}$. This first-order autoregressive, or AR(1),
structure makes nearby features correlated. Each group has its own intercept
and slope:
\[
Y = f^*_S(X)+\varepsilon,\qquad
f^*_s(X)=\alpha_s + X^\top \beta_s,\qquad \varepsilon\sim N(0,1),
\]
with $(\alpha_0,\alpha_1)=(1,3)$,
$\beta_0=(2,1,0.5,0,0)^\top$, and
$\beta_1=(1,2,0.5,0,0)^\top$.

\paragraph{Data-generating process 2 (DGP2): tail shift.}
The second DGP uses the same basic design and adds an additional group
difference in the upper tail. In group $S=1$, the regression function receives
the bump
\[
f^*_1(X)=\alpha_1 + X^\top \beta_1 + \gamma_{\mathrm{tail}}\,
\mathbbm{1}\{X_1 > t_{\mathrm{tail}}\},
\]
with $t_{\mathrm{tail}}=1$ and $\gamma_{\mathrm{tail}}=3$.

\paragraph{Implementation.}
A stratified 40/30/30 split forms auxiliary, calibration, and test samples.
We fit separate group-wise random forests on the auxiliary sample, learn the
quantile adjustment on the calibration sample, and evaluate it on the test
sample. Each forest uses 200 trees, depth 10, and minimum leaf size 5. The
quantile calculation uses $M=2000$ midpoint nodes on $[0,1]$, with prediction
range $[-20,20]$, exact
empirical quantiles, right-continuous empirical ranks, and constant endpoint
interpolation.

For the computationally more demanding three-group optimization, we use
$M=100$, range $[-20,20]$, and 25 fairness
tolerances, step size $0.1$, projection tolerance $10^{-10}$, and outer
tolerance $10^{-8}$. Weighted Dykstra projection enforces fairness, ordering,
and range constraints together.

\subsection{Two-group regression results}

Figure~\ref{fig:pareto_combined} shows diminishing returns. A moderate
tolerance recovers much of the accuracy lost under exact parity, whereas
removing the last part of the gap is disproportionately costly. In the DGP1
$W_2$ experiment, for example, $\tau=0.5$ removes almost half of the original
test gap but incurs only about one quarter of the MSE increase seen at
{\(\tau=0\)}. DGP2 shows the same pattern. This is the practical value
of a tolerance: it avoids the steepest part of the fairness--accuracy curve.

{The numerical constraints are satisfied accurately on the calibration
quantile grid. Across
all $6{,}000$ adjusted fits, the largest grid violation is}
$1.75\times10^{-14}$. {A test gap can still be positive at
\(\tau=0\) because the adjustment is learned from the calibration sample and
evaluated on a new sample using estimated ranks. This does not indicate a
failure of the grid calculation.}

\begin{figure}[!htbp]
\centering
\includegraphics[width=1\textwidth]{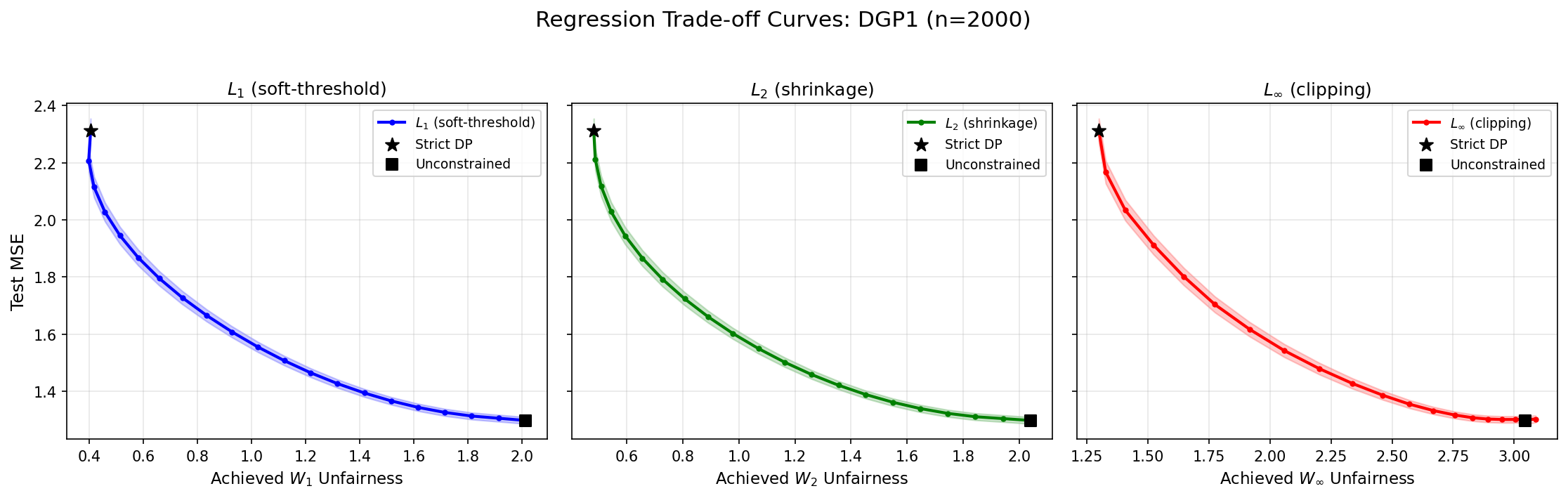}\\[0.3cm]
\includegraphics[width=1\textwidth]{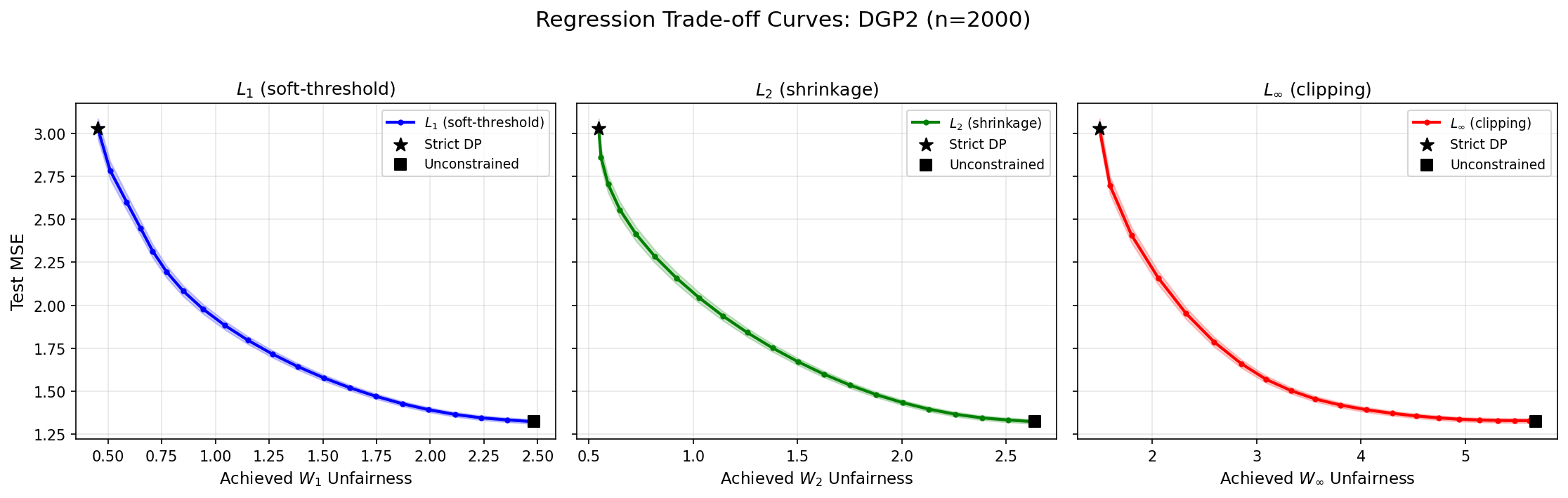}
\caption{Fairness--accuracy frontiers for DGP1 (top) and DGP2 (bottom) at
$n=2000$. The curves become steep near exact parity: the last reduction in
the held-out gap costs more MSE than earlier reductions. Each panel uses its
named Wasserstein norm, so horizontal scales should not be compared. Curves
are Monte Carlo means; ribbons show plus or minus one standard error.}
\label{fig:pareto_combined}
\end{figure}

\begin{table}[!htbp]
\centering
\caption{Simulation results at $\tau=0.5$ (that is,
$\epsilon=0.5D_{\max}$). Entries are means over $R=50$ replications, with
standard errors (SE) in parentheses. ``Test gap'' is the achieved test-set
Wasserstein distance in the norm named by that row.}
\label{tab:sim_tau_half}
\begin{tabular}{lcccc}
\toprule
& \multicolumn{2}{c}{DGP1} & \multicolumn{2}{c}{DGP2} \\
\cmidrule(lr){2-3}\cmidrule(lr){4-5}
Constraint & MSE & Test gap & MSE & Test gap \\
\midrule
$L_1$
  & $1.5550\,(0.0179)$ & $1.0236\,(0.0351)$
  & $1.7159\,(0.0204)$ & $1.2659\,(0.0394)$ \\
$L_2$
  & $1.5493\,(0.0178)$ & $1.0700\,(0.0341)$
  & $1.7519\,(0.0219)$ & $1.3802\,(0.0377)$ \\
$L_\infty$
  & $1.3862\,(0.0159)$ & $2.4619\,(0.0545)$
  & $1.4192\,(0.0147)$ & $3.8044\,(0.0743)$ \\
\bottomrule
\end{tabular}
\end{table}

The quantile plots explain why the adjustment depends on where the groups
differ. In DGP1, the gap extends across most ranks, so every rule changes a
broad part of each distribution. In DGP2, the extra separation is concentrated
in the upper tail. Clipping can therefore leave ranks already inside the
allowed band unchanged and act mainly on the largest gaps; in the
representative plot, about $62\%$ of the ranks remain unchanged.

Table~\ref{tab:sim_tau_half} gives exact results at $\tau=0.5$. The $L_1$ and
$L_2$ rules have nearly the same MSE in DGP1 and differ slightly in DGP2.
This suggests that the location of the disparity can matter, but it does not
establish a general ranking of the norms. Their test-gap values are on
different scales and should be read separately.

\begin{figure}[p]
\centering
\compactquantilegraphic{%
\includegraphics[width=0.95\textwidth]{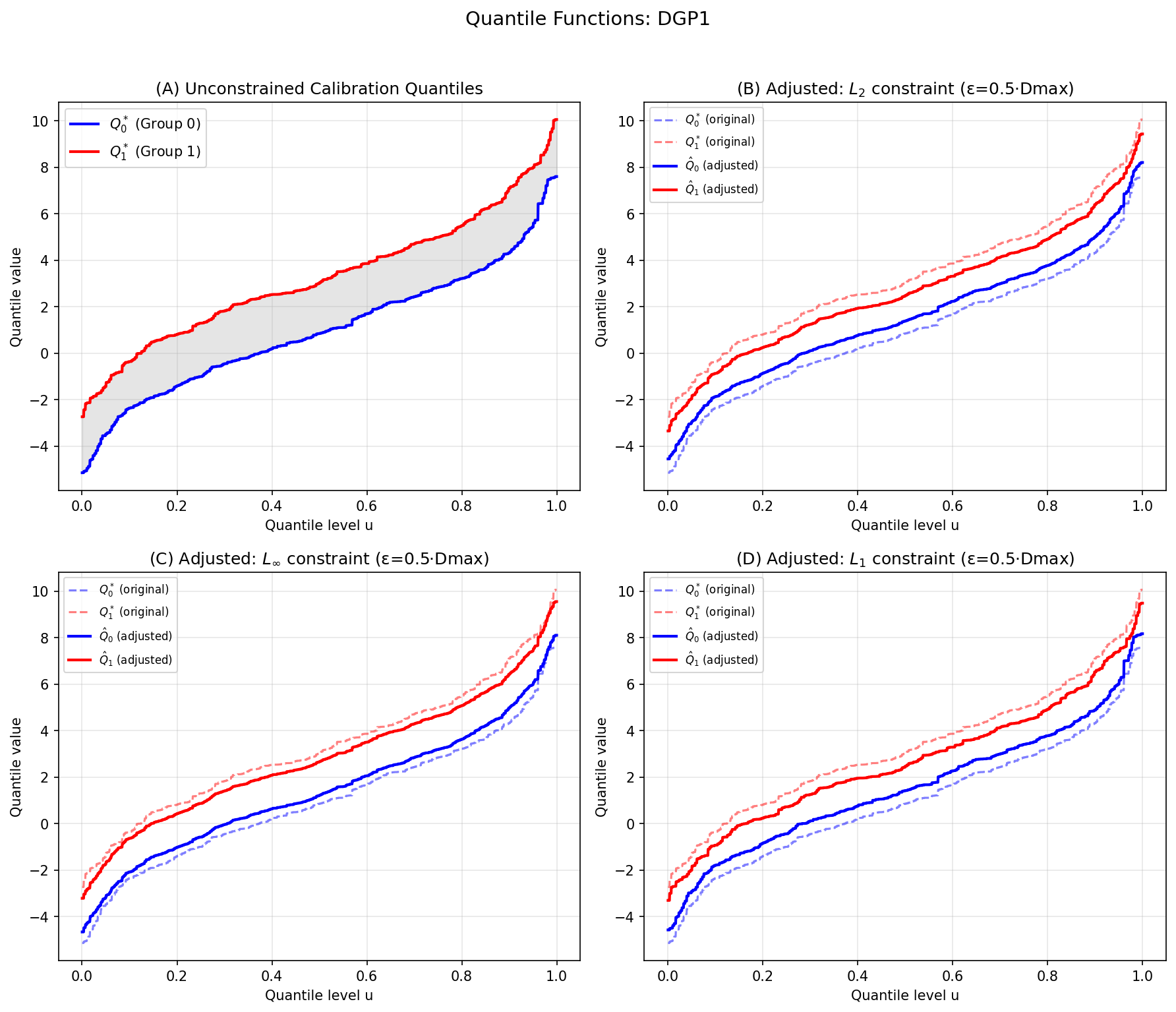}}
\caption{Calibration quantiles for the linear mean-shift process (DGP1) at
$\epsilon=0.5D_{\max}$. Because the groups differ across nearly all ranks,
each rule makes a broad adjustment. Dashed curves are original; solid curves
are adjusted.}
\label{fig:quantiles_dgp1}
\smallskip
\compactquantilegraphic{%
\includegraphics[width=0.95\textwidth]{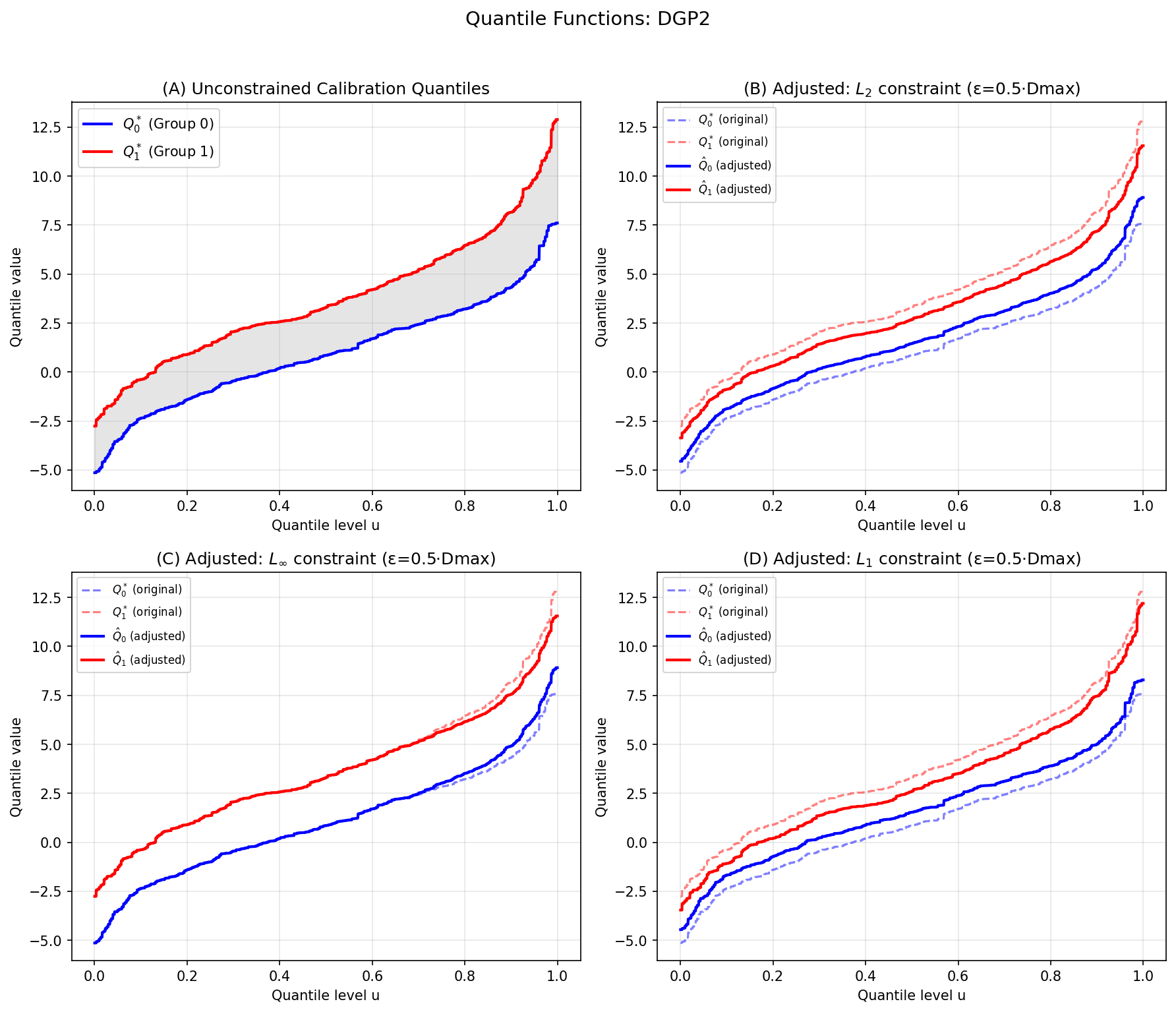}}
\caption{Calibration quantiles for the tail-shift process (DGP2) at
$\epsilon=0.5D_{\max}$. The unconstrained gap grows in the upper tail.
Clipping therefore leaves ranks already inside its band unchanged and acts
mainly where the gap is largest. Dashed curves are original; solid curves are
adjusted.}
\label{fig:quantiles_dgp2}
\end{figure}

\subsection{Multi-group regression (\texorpdfstring{$K=3$}{K=3})}

This experiment tests simultaneous pairwise constraints and unequal group
weights. We use seed 42, $n=3000$, $d=5$, and
\(\mathbb P(S=0,1,2)=(0.2,0.3,0.5)\), so the largest group is two and a
half times as common as the smallest. The outcome follows
\[
Y = \alpha_S + X^\top \beta + \varepsilon,\qquad
(\alpha_0,\alpha_1,\alpha_2)=(1,3,5),\quad
\beta=(1,1,0.5,0,0)^\top,
\]
where $X\sim N(0,I_5)$ and $\varepsilon\sim N(0,1)$.

A stratified 40/30/30 split provides auxiliary, calibration, and test data.
We fit OLS with group indicators on the auxiliary sample, learn the adjustment
on the calibration sample, and evaluate it on the test sample.

\begin{figure}[!htbp]
\centering
\includegraphics[width=0.68\textwidth]{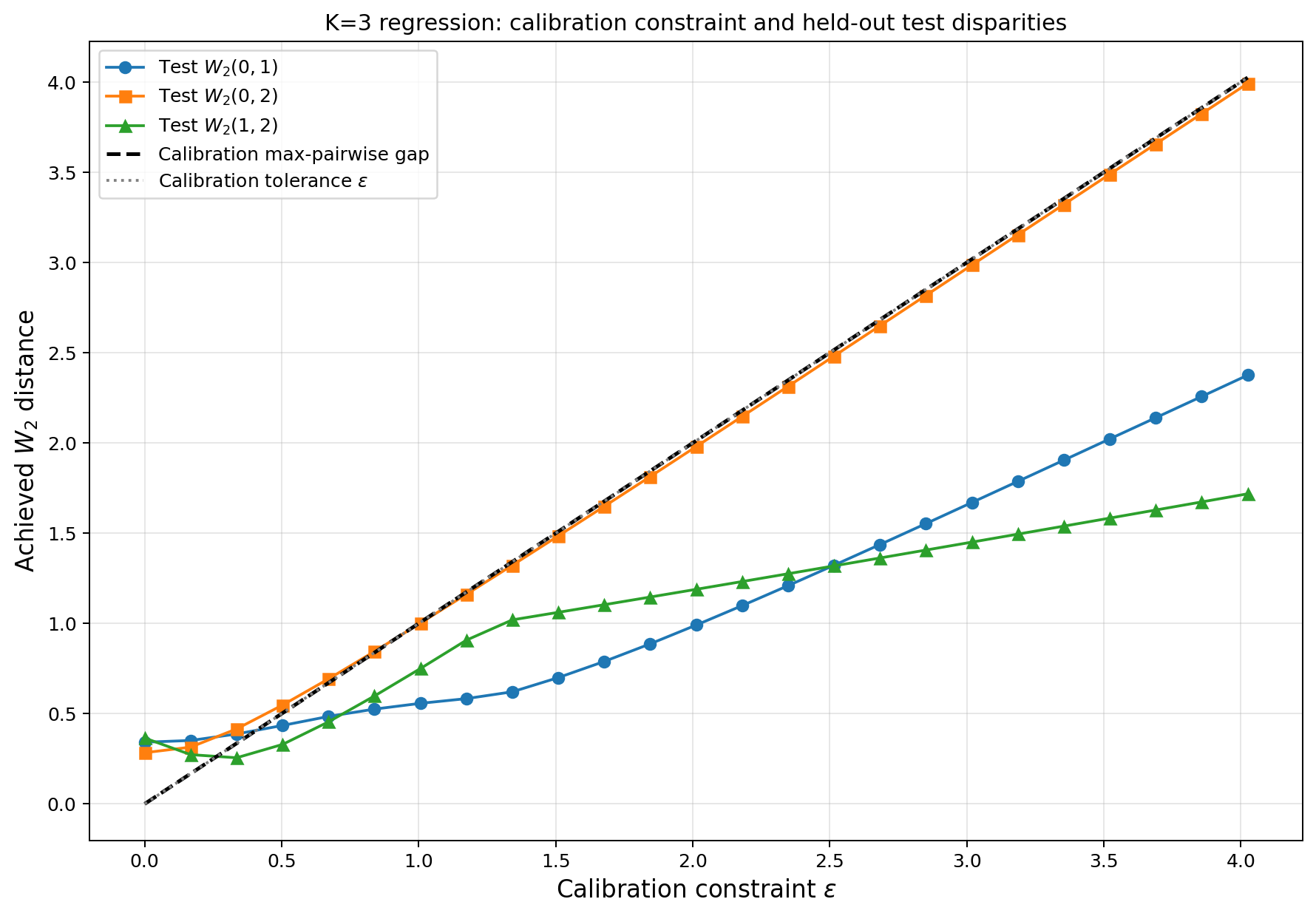}
\caption{Held-out pairwise $W_2$ gaps in the $K=3$ experiment. As the
tolerance grows, the low-versus-high pair becomes the largest held-out gap,
while the other two pairs remain closer. The dashed line is the largest
{gap between the fitted group curves on the calibration grid}; the
dotted line is the calibration tolerance.}
\label{fig:multigroup_distances}
\end{figure}

With three groups, the pair that limits fairness can change with the
tolerance. Under tight constraints, the high-intercept group's distances to
both other groups equal the allowed tolerance. From about $\tau=1/3$ onward,
only the low-versus-high pair reaches that tolerance. The middle group is
already close enough. The method therefore adjusts the pairs that need it
rather than forcing all three pairwise distances to be equal.

The unequal weights also help determine which group absorbs the adjustment. The
realized calibration proportions are $(0.1967,0.2900,0.5133)$. At
$\tau=0.5$, the smallest group's quantile curve moves almost three times as
far as the largest group's curve, as measured by root-mean-square distance
across ranks ($1.58$ versus $0.54$). The group-weighted loss gives the largest
calibration group more influence on the fitted curves. The active pairwise
constraints matter as well. The maximum-pairwise constraint limits how far
apart the groups can be, but it does not require all groups to move by the
same amount.

As in the two-group experiments, most of the predictive improvement occurs
early. The test MSEs are $3.6678$, $1.7769$, and $1.1359$ at
$\tau=0$, $0.5$, and $1$, respectively. Thus, by $\tau=0.5$, the method
recovers about $74.7\%$ of the MSE improvement along the computed path.
Because it minimizes observed calibration losses on a grid, the path need not
end at the raw OLS fit. {All 25 runs stopped only after meeting the
stated numerical criteria. The largest grid violation is}
$3.73\times10^{-11}$.
{The positive test gap at \(\tau=0\) arises because the fitted
adjustment is applied to a new sample using estimated ranks.}

\begin{figure}[!htbp]
\centering
\includegraphics[width=0.72\textwidth]{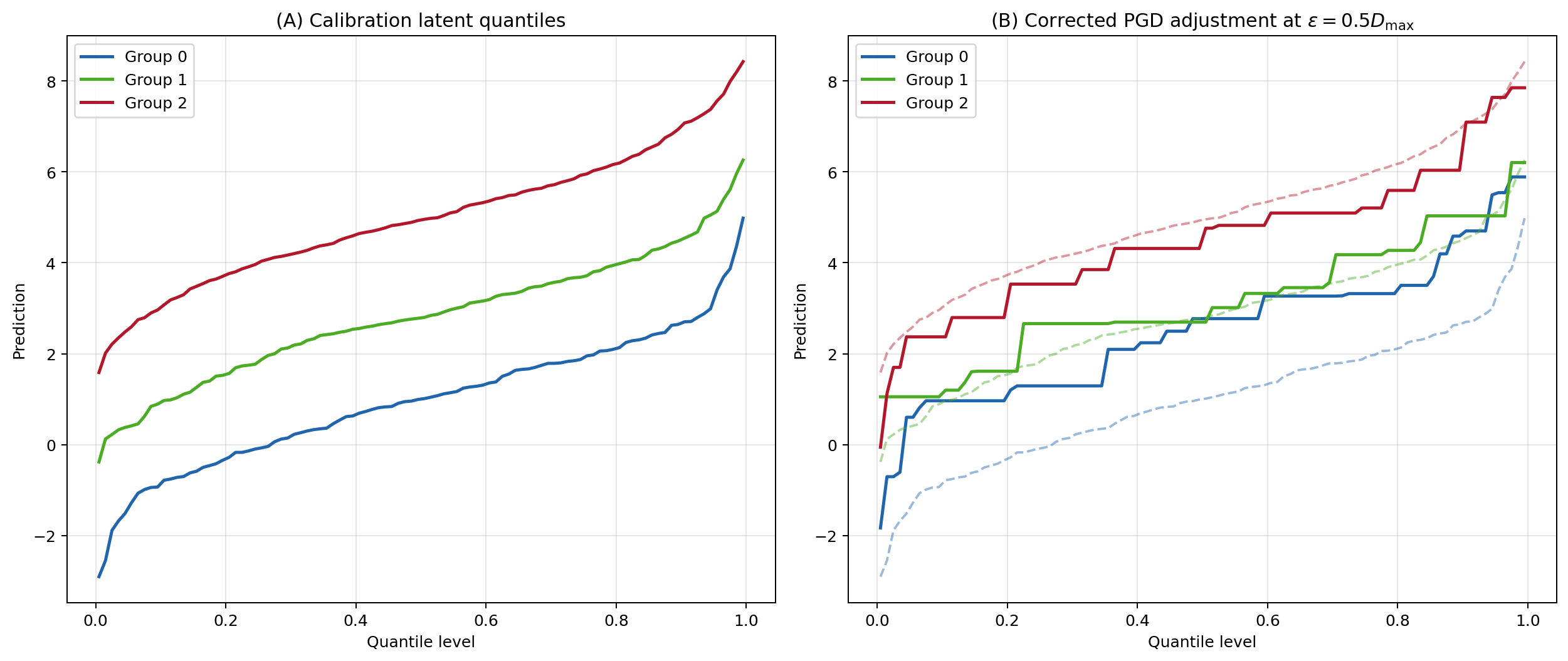}
\caption{Calibration quantiles before and after the $K=3$ adjustment at half
the baseline gap. Panel (A) shows the unadjusted calibration quantile curves.
Panel (B) overlays the unadjusted curves (dashed) and the adjusted curves
(solid).
The smallest group moves most, while the middle and largest groups move much
less. Unequal weights and the active pairwise constraints jointly shape these
changes. Flat stretches arise when the ordering projection pools neighboring
ranks.}
\label{fig:multigroup_quantiles}
\end{figure}

\FloatBarrier
\subsection{Classification illustration}

We next examine whether the same distributional adjustment can reduce
between-group differences in classification scores without a large loss in
accuracy. With seed 42, we generate
$n=4000$ observations from two equally likely groups. We use
$X\sim N(0,I_5)$ and
\[
\mathbb P(Y=1\mid X,S)
=
\operatorname{logit}^{-1}(\alpha_S+X^\top\beta),
\qquad
(\alpha_0,\alpha_1)=(0,2),\quad
\beta=(0.5,0.5,0.3,0,0)^\top.
\]
We use a stratified 40/30/30 split and $M=200$ grid points. We fit logistic
regression using $(X,S)$ on the auxiliary sample, learn a quantile map from
the calibration logits, and apply it to the test logits. We use the
$L_\infty$ clipping rule that is optimal for squared loss and then evaluate
classification accuracy. It is not a logistic-risk optimum.

Figure~\ref{fig:classification_pareto} plots the 30 held-out accuracies
against the {\(W_\infty\) gaps between the adjusted curves on the
calibration grid. At \(\epsilon=0\), this grid gap is zero.} The solid curve
is a fitted line constrained not to decrease, also called an isotonic fit. It
only summarizes the plotted points and does not imply that test accuracy must
increase with the tolerance.

Separate test-gap calculations quantify a useful middle ground. At
$\epsilon=1.1143$ ($\tau\approx0.569$), the held-out logit gap falls from
$1.9300$ to $1.2572$, a $34.9\%$ reduction, while accuracy falls from
$0.7667$ to $0.7458$, a loss of $2.08$ percentage points calculated before
rounding. At exact calibration parity ($\tau=0$),
accuracy is $0.7017$ and the held-out gap is $0.3226$, giving a smaller test
gap but more than three times the loss in accuracy. These held-out gaps are
distinct from the calibration-grid gaps on the figure's horizontal axis.
Accuracy changes in steps when adjusted logits cross the classification
threshold.

\begin{figure}[!htbp]
\centering
\compactclassificationgraphic{%
\includegraphics[width=0.95\textwidth]{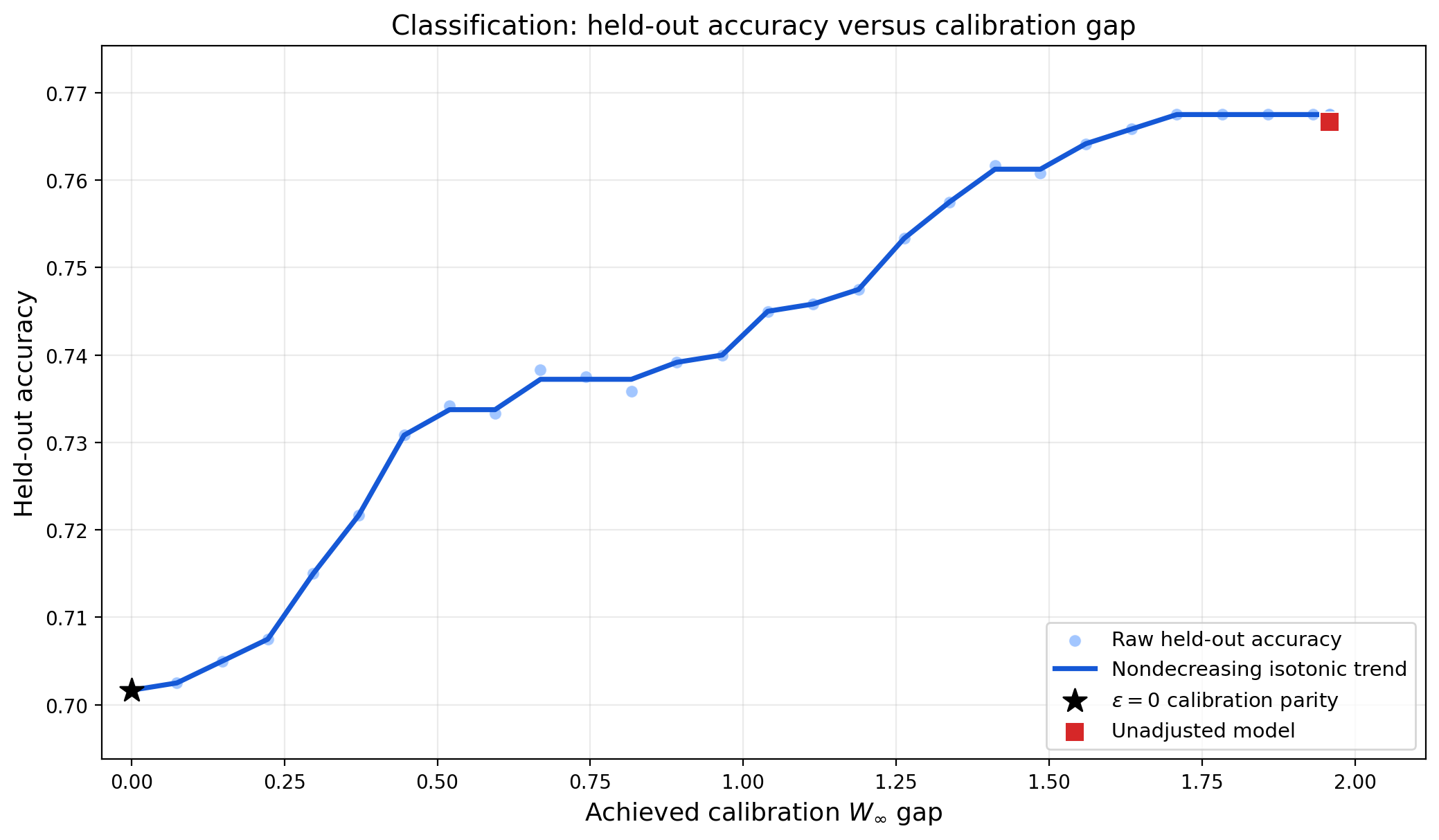}}
\caption{{Calibration-grid \(W_\infty\) gap versus held-out
accuracy. Light points show the 30 observed values. The solid line is a
fitted summary constrained not to decrease. The black star marks
\(\epsilon=0\), where the grid gap is zero, and the red square marks the
unadjusted model at its gap before adjustment. The line does not imply that
held-out accuracy is monotone.}}
\label{fig:classification_pareto}
\end{figure}

\section{Real Data Application: American Community Survey (ACS) Income Prediction}\label{sec:real_data}

\subsection{Data and protocol}
We now move from simulations to real data. This application asks how
prediction accuracy changes when group-specific income-prediction
distributions are moved closer together. We use the
California 2018 one-year ACS person file provided through Folktables
\citep{ding2021retiring}. We ask whether a moderate tolerance preserves much
of the accuracy lost under exact parity and, with three groups, which pair has
the largest held-out gap.

We keep adults older than 18 who have a positive class-of-worker code and
positive personal income. The outcome is $\log(\mathrm{PINCP}+1)$. The nine
predictors are age (AGEP), class of worker (COW), education (SCHL), marital
status (MAR), occupation (OCCP), place of birth (POBP), household relationship
(RELP), usual weekly work hours (WKHP), and sex (SEX). Race is not used as a
predictor; it only defines the groups. The $K=2$ analysis compares White and
non-White adults. The $K=3$ analysis uses values 1, 2, and 6 of the ACS race
field RAC1P for White, Black, and Asian adults. We do not use survey weights.
The results therefore describe the unweighted filtered sample.
{Some ACS predictors describe categories, such as occupation and
marital status. We enter their numerical codes directly into the model.
The trees split on those codes, so arbitrary coding of nominal categories
affects which category groupings a split can express and can affect the fit.
This is a limitation of this illustration.}

The filtered sample contains 207,019 records. Its $K=3$ subset contains
172,850 records and is split into 69,140, 51,855, and 51,855 observations for
the auxiliary, calibration, and test samples. Both analyses use a stratified
40/30/30 split with seed 42. We fit a gradient-boosting regressor on the
auxiliary sample using 200 trees, maximum depth 6, learning rate $0.1$, and
subsample fraction $0.8$. The calibration sample determines the adjustment;
the test sample provides the reported $R^2$ and gaps.

The calibration grid has $M=2000$ midpoint nodes. The prediction maps use ranks
from the right-continuous empirical distribution function and the constant
endpoint extension defined in Section~\ref{sec:estimation}. The fixed
log-prediction range is $\mathcal I=[0,20]$. For $K=3$, we use 25 tolerances,
step size $0.1$, projection tolerance $10^{-10}$, and outer tolerance
$10^{-8}$.

For $K=2$, the $W_1$ and $W_2$ implementations preserve the original
predictions whenever all adjusted quantile vectors exactly equal their
originals. In particular, their $\tau=1$ endpoints coincide with the
unadjusted model. Active adjustments still use the empirical rank map.
The $K=3$ implementation uses the rank map throughout.

For the fair model, let
$\widehat Y_i^{\mathrm{fair}}=\hat f_{\mathrm{fair}}(X_i,S_i)$ denote the
final prediction for test observation $i$. We measure test prediction quality
by $R^2$:
\[
R^2
=
1-\frac{\sum_{i\in \mathcal I_{\mathrm{test}}}(Y_i-\widehat Y_i^{\mathrm{fair}})^2}
{\sum_{i\in \mathcal I_{\mathrm{test}}}(Y_i-\bar Y_{\mathrm{test}})^2},
\qquad
\bar Y_{\mathrm{test}}
=
\frac{1}{|\mathcal I_{\mathrm{test}}|}
\sum_{i\in \mathcal I_{\mathrm{test}}} Y_i.
\]
Here $\mathcal I_{\mathrm{test}}$ indexes the test sample. For the unadjusted
model, we use the same formula with $\hat f(X_i,S_i)$ in place of
$\widehat Y_i^{\mathrm{fair}}$.
Figure~\ref{fig:acs_tradeoff_k2} uses held-out Wasserstein gaps. In
Figure~\ref{fig:acs_tradeoff_k3}, the horizontal axis is the
{largest distance between the fitted group curves on the
calibration grid}, and the vertical axis is held-out $R^2$. At
\(\epsilon=0\), the horizontal value is zero by construction. A nonzero
held-out gap can remain because the adjustment is learned from one finite
sample and applied to another using estimated ranks. It does not mean that
the numerical constraint failed.

\subsection{Results: \texorpdfstring{$K=2$}{K=2} (White vs.\ Non-White)}

The two panels in Figure~\ref{fig:acs_tradeoff_k2} tell the same practical
story. At $\tau=14/29\approx0.48$, the tolerance is about half the baseline
calibration gap and the held-out gap is also about half its baseline value.
The method has recovered $61.6\%$ ($W_1$) and $61.4\%$ ($W_2$) of the
$R^2$ lost under exact calibration parity. The
$W_1$ and $W_2$ adjustments also trace almost identical $R^2$ paths at matched
normalized tolerances. In this example, the tolerance has a clearer effect on
prediction than the choice between these two adjustment rules. Their gap
scales remain different and are not compared.

{At \(\tau=0\),} the corresponding held-out gap falls by more than
$90\%$ in either panel, at an absolute $R^2$ cost of $0.0078$
($1.34\%$ relative). {The largest grid violation is}
$1.11\times10^{-15}$. {This confirms that the fitted curves meet
the constraint on the grid. It does not imply that the predictions assigned
to individual calibration observations have exactly the same empirical
distribution.}

At the inactive endpoint, both methods reproduce the unadjusted model's
$R^2=0.5817$, with held-out gaps $0.1865$ ($W_1$) and $0.1900$ ($W_2$).
The last step of the plotted path rises from $R^2\approx0.58086$ at
$\tau=28/29$ to $0.58167$ at $\tau=1$. At this endpoint, the identity
convention bypasses empirical rank reconstruction. The final rise should
therefore not be attributed entirely to relaxing the fairness constraint.

\begin{figure}[t]
\centering
\includegraphics[width=0.95\textwidth]{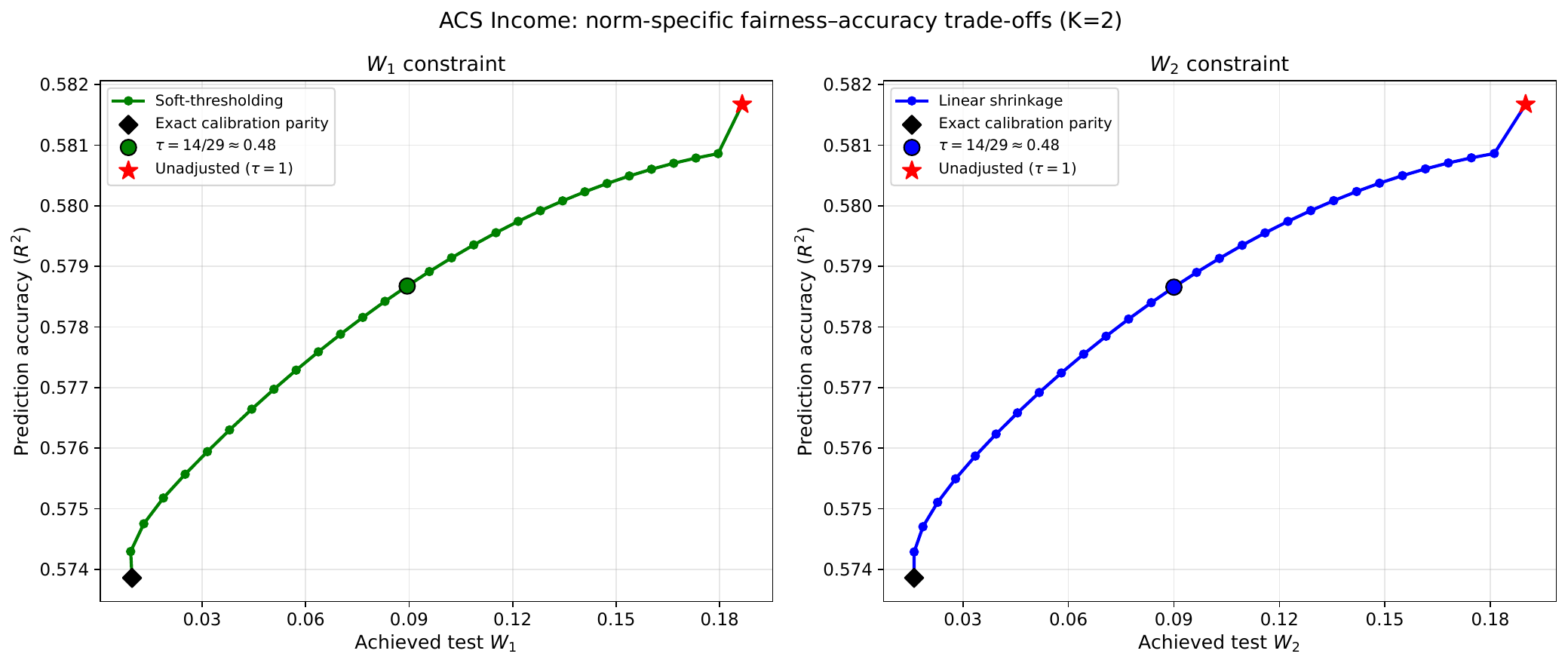}
\caption{California 2018 ACS income regression with $K=2$ groups. The
curves show all 30 observed settings per norm in tolerance order. The marked
near-half setting is $\tau=14/29$. At $\tau=1$, the identity convention
preserves the unadjusted predictions, so the final point coincides with the
unadjusted model. The final rise reflects both the tolerance change and the
bypass of empirical rank reconstruction.
The exact-calibration-parity setting can retain a nonzero held-out gap.
Each panel uses its own Wasserstein norm and should be read separately.}
\label{fig:acs_tradeoff_k2}
\end{figure}

\subsection{Results: \texorpdfstring{$K=3$}{K=3} (White vs.\ Black vs.\ Asian)}

The clearest three-group finding is which pair has the largest held-out gap.
White adults make up about three quarters of the calibration sample. Even so,
the Black--Asian pair has the largest held-out gap at every tolerance. A
method that compared each group only with White would therefore miss the pair
with the greatest separation.

{At \(\tau=0\),} the largest held-out gap falls from $0.2789$ to
$0.0817$, a $70.7\%$ reduction, while $R^2$ falls by $0.0099$
($1.73\%$ relative). Figure~\ref{fig:acs_tradeoff_k3} plots the
25 held-out $R^2$ values against the {largest distances between
the fitted group curves on the calibration grid}. Loosening the constraint
lowers the optimized calibration loss but need not raise held-out $R^2$. The
shallow dip reflects learning the adjustment on one sample and evaluating it
on another. The fitted nondecreasing line is only a visual summary, so the
tolerance should be selected using validation data.

The computed path does not end at the raw unadjusted model because the method
minimizes observed calibration losses on a grid and then applies an estimated
rank map. The distance to the red star is therefore not entirely a fairness
cost. {Across all 25 fits, the largest positive excess of a pairwise
calibration-grid $W_2$ distance above its tolerance is
\(9.46\times10^{-11}\).}

\begin{figure}[!htbp]
\centering
\includegraphics[width=0.58\textwidth]{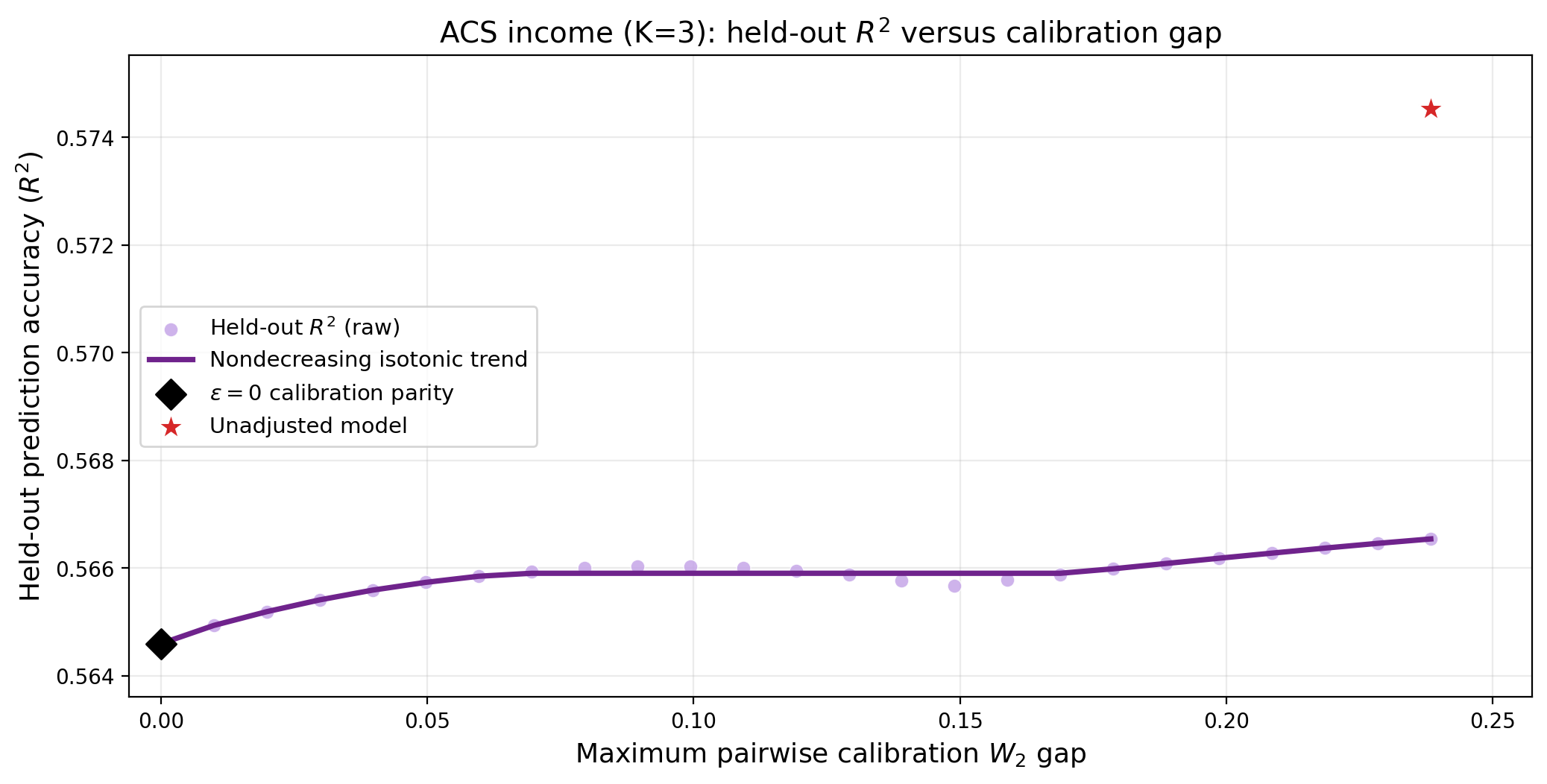}
\caption{{Maximum pairwise \(W_2\) distance between the fitted
group curves on the calibration grid versus held-out \(R^2\). Light points
show the 25 observed values. The solid line is a fitted summary constrained
not to decrease. The black diamond marks \(\epsilon=0\), where the grid gap
is zero, and the red star marks the unadjusted model at its gap before
adjustment. The line does not imply that held-out \(R^2\) is monotone.}}
\label{fig:acs_tradeoff_k3}
\end{figure}

\subsection{Distribution shift visualization}

We next look beyond the largest pairwise gap and compare the full prediction
distributions before and after adjustment. The adjustment changes the shapes
and relative locations of the calibration prediction distributions. The
smoothed group densities remain distinct at this positive tolerance.

\begin{figure}[!htbp]
\centering
\includegraphics[width=0.95\textwidth]{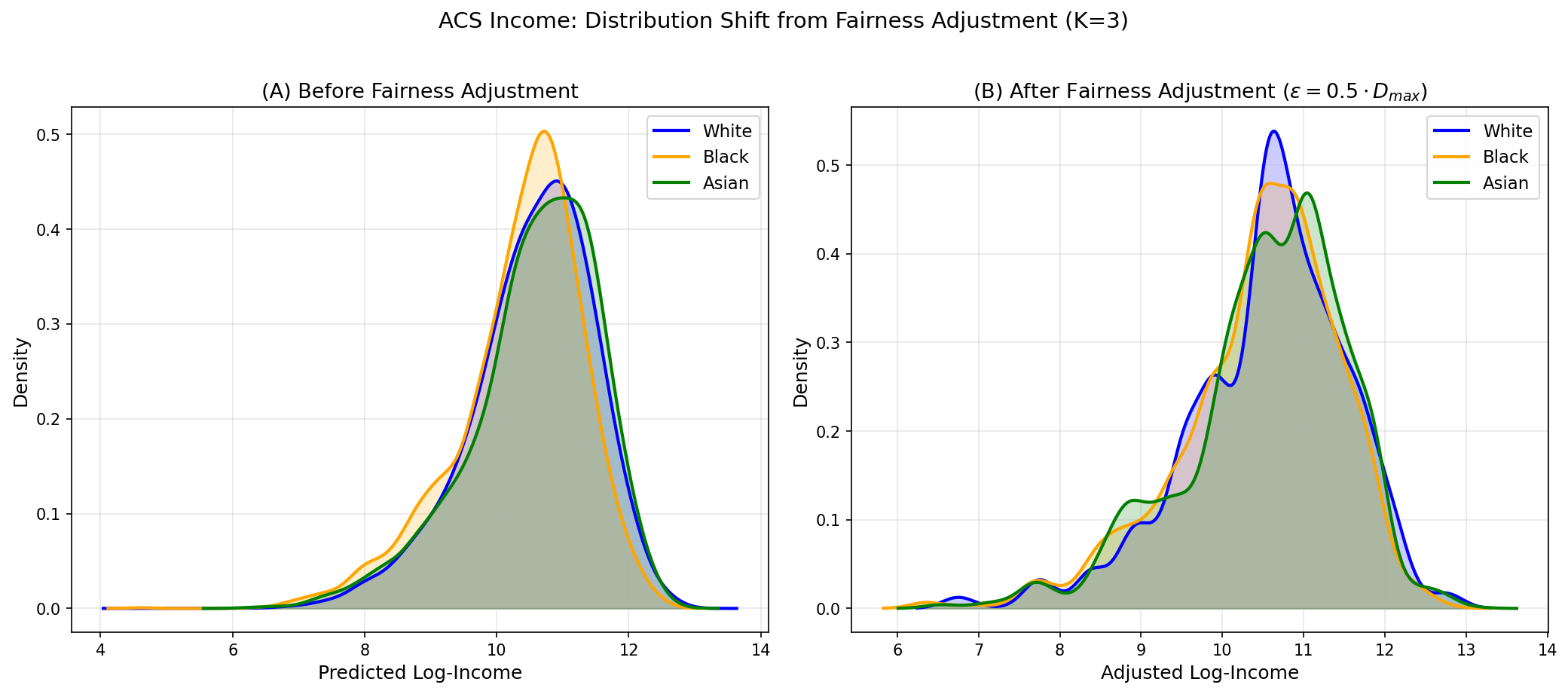}
\caption{Smoothed distributions of the California 2018 ACS calibration
predictions before and after the $K=3$ adjustment at
$\epsilon=0.5D_{\max}$. The smoothed group densities remain distinct at this
positive tolerance.}
\label{fig:income_dist_shifts}
\end{figure}

\section{Conclusion} \label{sec:conclusion}
{We developed a framework for fair regression under
\textit{Approximate Demographic Parity}. By replacing strict independence
with a distance-based constraint, the framework makes the trade-off between
predictive utility and distributional parity explicit and tunable.

Using optimal transport, we show that the prediction problem and the
corresponding problem over group distributions have the same best achievable
value. When the distributional problem has an optimal solution and the
required deterministic transport rules exist, those rules give an optimal
approximately fair predictor. Under the stated one-dimensional monotonicity
condition, the problem can also be written in terms of group-specific
quantile functions. This quantile problem is convex when the induced cost is
convex in the prediction argument. For squared error loss with two
positively weighted groups, we obtained closed-form solutions under
\(W_1\), \(W_2\), and \(W_\infty\) constraints. They correspond to
soft-thresholding, linear shrinkage, and clipping. For known induced costs
that are convex in the prediction argument, including settings with multiple
sensitive groups, we developed numerical methods based on clipping or
weighted projected-gradient updates on a finite grid.

When the induced cost is known, we establish convergence rates for the
estimated fair quantile functions and consistency of the final prediction
rule under the stated assumptions. The simulations show how predictive
accuracy and held-out fairness change as the tolerance varies. We report
satisfaction of the numerical constraints separately from fairness gaps on
held-out data. The California ACS income application further illustrates the
framework and shows that, for this prediction task, a substantial portion of
the held-out Wasserstein gap can be reduced with only a small loss in
predictive accuracy. This interpretation is descriptive for the fitted
prediction problem and should not be read as a causal claim about the sources
of disparity.

Future work includes deriving error bounds for the finite-grid numerical
method, developing analytic solutions for additional multi-group problems,
and extending the transport-based approach to other fairness criteria, such
as equalized odds.}

\section*{Acknowledgments}
The authors used OpenAI Codex with the GPT-5.6-Sol and GPT-6 models for language editing,
LaTeX preparation, code debugging, and mathematical and computational
consistency checks. All mathematical arguments, citations, code, numerical
results, and final text were reviewed and verified by the authors, who take
full responsibility for the manuscript.

\bibliographystyle{chicago}
\bibliography{references_v9}

\appendix
\section{Proofs of the Main Results}
This appendix first proves the population and statistical results. It then
derives the closed-form formulas and the multi-group squared-loss results.
Each proof starts with its main idea. The calculations then show how that idea
gives the stated result.

\begin{proof}[Proof of Theorem \ref{thm1}]
Let $A$ be the infimum of the prediction problem in \eqref{CFP_cost}, and let
$B$ be the infimum of the constrained transport problem in \eqref{KP}. We
prove the two inequalities separately.

\emph{First, \(B\le A\).}
Take any feasible predictor \(f\), and let
\(\nu_{f,s}=\mathcal L(f(X,S)\mid S=s)\). Because \(f\) is feasible, the
distributions \(\nu_{f,1},\ldots,\nu_{f,K}\) satisfy all pairwise Wasserstein
constraints. For group \(s\), define
\[
\gamma_{f,s}
:=
\mathcal L\big(f^*(X,S),f(X,S)\mid S=s\big).
\]
Its first marginal is \(\mu_s\), and its second marginal is \(\nu_{f,s}\).
Thus \(\gamma_{f,s}\in\Pi(\mu_s,\nu_{f,s})\), and
\begin{align*}
B
&\le
\sum_{s=1}^K r_s
\inf_{\gamma_s\in\Pi(\mu_s,\nu_{f,s})}
\int \mathcal C(z,q)\,d\gamma_s(z,q)\\
&\le
\sum_{s=1}^K r_s
\int \mathcal C(z,q)\,d\gamma_{f,s}(z,q)\\
&=
\mathbb E\!\left\{\mathcal C\big(f^*(X,S),f(X,S)\big)\right\}.
\end{align*}
This holds for every feasible \(f\). Taking the infimum over \(f\) gives
\(B\le A\).

\emph{Next, \(A\le B\).}
Fix \(\delta>0\). By the definitions of the outer and inner infima in
\eqref{KP}, we can choose feasible distributions
\(\nu_1,\ldots,\nu_K\) and couplings
\(\gamma_s\in\Pi(\mu_s,\nu_s)\) such that
\[
\sum_{s=1}^K r_s
\int \mathcal C(z,q)\,d\gamma_s(z,q)
\le B+\delta.
\]
Because \(\mu_s\) is non-atomic, the graph-measure approximation used in the
proof of Theorem~2.1 of \citet{ambrosio2003lecture} gives measurable maps
\(T_{s,j}\) satisfying
\[
(T_{s,j})_\#\mu_s=\nu_s,
\qquad
(\operatorname{id},T_{s,j})_\#\mu_s
\ \Longrightarrow\ \gamma_s ,
\]
where \(\Longrightarrow\) denotes weak convergence. The first equality is
important: every approximation has exactly the chosen output distribution,
so it preserves the pairwise fairness constraints. Since \(\mathcal C\) is
continuous on the compact set \(\mathcal I^2\), it is bounded, and weak
convergence gives
\[
\int \mathcal C\big(z,T_{s,j}(z)\big)\,d\mu_s(z)
\longrightarrow
\int \mathcal C(z,q)\,d\gamma_s(z,q).
\]
Define one predictor across all groups by
\[
f_j(x,s):=T_{s,j}(f^*(x,s)).
\]
Its conditional distribution in group \(s\) is \(\nu_s\), so \(f_j\) is
feasible. Because the number of groups is finite,
\begin{align*}
A
&\le
\lim_{j\to\infty}
\mathbb E\!\left\{\mathcal C\big(f^*(X,S),f_j(X,S)\big)\right\}\\
&=
\sum_{s=1}^K r_s
\int \mathcal C(z,q)\,d\gamma_s(z,q)
\le B+\delta.
\end{align*}
Letting \(\delta\downarrow0\) yields \(A\le B\). Therefore \(A=B\).

\emph{Finally, suppose a deterministic optimizer exists.}
Assume that the constrained transport problem attains \(B\) at
\((\bar\nu_1,\ldots,\bar\nu_K)\) and that, for each group, an optimal Monge
map \(T_s\) from \(\mu_s\) to \(\bar\nu_s\) exists. Define
\[
f_{\mathrm{fair}}^*(x,s):=T_s(f^*(x,s)).
\]
This predictor has conditional distribution \(\bar\nu_s\) in group \(s\), so
it is feasible. Moreover,
\begin{align*}
\mathbb E\!\left\{\mathcal C\big(f^*(X,S),
f_{\mathrm{fair}}^*(X,S)\big)\right\}
&=
\sum_{s=1}^K r_s
\int \mathcal C\big(z,T_s(z)\big)\,d\mu_s(z)\\
&=
\sum_{s=1}^K r_s
\inf_{\gamma_s\in\Pi(\mu_s,\bar\nu_s)}
\int \mathcal C(z,q)\,d\gamma_s(z,q)\\
&=B=A.
\end{align*}
Hence \(f_{\mathrm{fair}}^*\) attains the AFRP infimum.
\end{proof}

\begin{proof}[Proof of Lemma \ref{lem:rearrangement}]
Increasing rearrangement means sorting each function into quantile order. We
show that this sorting has two useful effects. It cannot enlarge the
group-to-group distances, and it cannot increase the prediction cost.

\emph{Distances between groups.}
For $p<\infty$, matching equal ranks is the least costly way to pair two
one-dimensional distributions under the cost \(|x-y|^p\). This rank matching
is the monotone coupling. It minimizes
$\mathbb E|h_i(U)-h_j(U)|^p$ among all pairings of the same marginal laws.
Therefore
$\|h_i^\uparrow-h_j^\uparrow\|_p\le\|h_i-h_j\|_p$. The same conclusion for
$p=\infty$ follows from the one-dimensional quantile representation of
$W_\infty$.

\emph{Prediction cost and quantile properties.}
The inequality $\partial^2_{zq}\mathcal C<0$ says that matching a larger score
with a larger prediction is no more costly than crossing the two matches.
This property is called submodularity, or quasi-antitonicity. The
fixed-marginal rearrangement inequality of
\citet[Theorem~2]{cambanis1976inequalities}, applied to $-\mathcal C$, then
shows that matching scores and predictions at the same percentile minimizes
the expected cost for \(\mathcal C\). This is also called a comonotone
coupling. For the marginals of $Q_s^*(U)$ and $h_s(U)$, it is
$(Q_s^*(U),h_s^\uparrow(U))$. This gives the stated cost inequality.

The rearranged functions are quantile functions, so they are nondecreasing
and left-continuous. Sorting changes only the order of the values, not their
distribution. This preservation of the value distribution, called
equimeasurability, keeps the functions integrable. It also keeps their values
in \(\mathcal I\), apart from possible changes on sets of measure zero. This
is the essential-range statement.
\end{proof}

\begin{proof}[Proof of Lemma \ref{lem:monotonicity_closed_form}]
Each closed-form rule changes only the difference between the two groups. The
key fact is that its difference map \(\psi\) preserves order, never expands a
difference, and leaves zero fixed. We use these properties first to prove that
the adjusted curves remain nondecreasing. We then verify their range and
regularity.

\emph{The adjusted curves remain ordered by rank.}
Fix $0<u<v<1$ and write
\[
    a_s=Q_s^*(v)-Q_s^*(u)\ge 0,\qquad s=1,2.
\]
Then
\[
    \Delta^*(v)-\Delta^*(u)=a_1-a_2 .
\]
For $Q_1$, we have
\[
\begin{aligned}
    Q_1(v)-Q_1(u)
    &= r_1 a_1+r_2 a_2
    + r_2\{\psi(\Delta^*(v))-\psi(\Delta^*(u))\}.
\end{aligned}
\]
If $a_1\ge a_2$, then $\Delta^*(v)\ge \Delta^*(u)$. Because $\psi$ is
nondecreasing, the last term is nonnegative. Hence
$Q_1(v)-Q_1(u)\ge 0$. Now suppose $a_1<a_2$. The $1$-Lipschitz property and
monotonicity of $\psi$ imply
\[
    \psi(\Delta^*(v))-\psi(\Delta^*(u))
    \ge \Delta^*(v)-\Delta^*(u)
    = a_1-a_2 .
\]
Therefore,
\[
    Q_1(v)-Q_1(u)
    \ge r_1a_1+r_2a_2+r_2(a_1-a_2)
    = a_1\ge 0.
\]
Thus \(Q_1\) cannot decrease as the rank moves from \(u\) to \(v\).

Similarly,
\[
\begin{aligned}
    Q_2(v)-Q_2(u)
    &= r_1 a_1+r_2 a_2
    - r_1\{\psi(\Delta^*(v))-\psi(\Delta^*(u))\}.
\end{aligned}
\]
If $a_1\le a_2$, then
$\psi(\Delta^*(v))-\psi(\Delta^*(u))\le 0$, so
$Q_2(v)-Q_2(u)\ge 0$. Now suppose $a_1>a_2$. The $1$-Lipschitz property gives
\[
    \psi(\Delta^*(v))-\psi(\Delta^*(u))
    \le \Delta^*(v)-\Delta^*(u)
    = a_1-a_2,
\]
and hence
\[
    Q_2(v)-Q_2(u)
    \ge r_1a_1+r_2a_2-r_1(a_1-a_2)
    = a_2\ge 0.
\]
Thus \(Q_2\) is also nondecreasing.

\emph{The adjusted values stay in range.}
The condition $\psi(0)=0$, together with monotonicity and the $1$-Lipschitz
property, implies that $\psi(t)=\alpha(t)t$ for some
$\alpha(t)\in[0,1]$ (take $\alpha(0)=0$). Here \(\alpha(t)\) is the fraction
of the original group difference that remains after adjustment. Pointwise,
\[
\begin{aligned}
Q_1
&=\{r_1+r_2\alpha(\Delta^*)\}Q_1^*
  +r_2\{1-\alpha(\Delta^*)\}Q_2^*,\\
Q_2
&=r_1\{1-\alpha(\Delta^*)\}Q_1^*
  +\{r_2+r_1\alpha(\Delta^*)\}Q_2^*.
\end{aligned}
\]
The coefficients in each line are nonnegative and sum to one. Both adjusted
values are therefore weighted averages of $Q_1^*(u)$ and $Q_2^*(u)$, so they
remain in $\mathcal I$. Every Lipschitz map is continuous, so composition
with \(\psi\) preserves left continuity. Boundedness then gives
integrability.

\emph{The three adjustment rules satisfy these conditions.}
Linear shrinkage uses $\psi(t)=\alpha t$ with $\alpha\in[0,1]$.
Soft-thresholding uses
$\psi(t)=\operatorname{sgn}(t)(|t|-\lambda)_+$. Clipping uses
$\psi(t)=\operatorname{clip}_{[-\epsilon,\epsilon]}(t)$. Each map is
nondecreasing, $1$-Lipschitz, and zero at zero. Therefore the resulting
$Q_1$ and $Q_2$ have all the asserted properties.
\end{proof}

\begin{proof}[Proof of Lemma \ref{lem:latent_convergence}]
The estimated quantile differs from the population quantile for two reasons.
We use fitted scores instead of true scores, and we observe only a finite
calibration sample. We control these two errors separately and then add them.

\emph{Error from fitted scores.}
Fix $s$. Let $\hat\nu_s$ be the empirical measure of the fitted calibration
scores, $\tilde\nu_s$ the empirical measure of the corresponding unobserved
true scores with quantile $\tilde Q_s^*$, and $\mu_s$ the population
true-score law. Pair each fitted score with the true score for the same
observation. This is one valid way to transport \(\hat\nu_s\) to
\(\tilde\nu_s\), so
\[
W_2^2(\hat\nu_s,\tilde\nu_s)
\le n_s^{-1}\sum_{i:S_i=s}
|\hat f_m(X_i,s)-f^*(X_i,s)|^2=O_p(\rho_m^2),
\]
by independence of training and calibration and Assumption
\ref{assum:first_step}.

\emph{Error from the finite calibration sample.}
Let $\tilde F_s$ be the empirical cumulative distribution function (CDF) of
the true scores. The
Dvoretzky--Kiefer--Wolfowitz inequality controls its largest difference from
the population distribution function. Because every group has positive
probability, $n_s$ has the same order as $n$. Therefore,
$\|\tilde F_s-F_s^*\|_\infty=O_p(n^{-1/2})$. Since the density of $F_s^*$ is
at least $\underline c$, its cumulative distribution function cannot be too
flat. Inverting the two distribution functions therefore gives
$\|\tilde Q_s^*-Q_s^*\|_\infty
\le\underline c^{-1}\|\tilde F_s-F_s^*\|_\infty
=O_p(n^{-1/2})$, using constant endpoint extensions at \(0\) and \(1\).
Combining this bound with the fitted-score bound and applying the triangle
inequality gives
\[
\|\hat Q_s^*-Q_s^*\|_2
=W_2(\hat\nu_s,\mu_s)
\le W_2(\hat\nu_s,\tilde\nu_s)
+\|\tilde Q_s^*-Q_s^*\|_2
=O_p(\rho_m+n^{-1/2}).
\]
\end{proof}

\begin{proof}[Proof of Theorem \ref{thm:consistency}]
The population and estimated problems use the same feasible set but slightly
different inputs. The proof shows that this small input change causes only a
small change in the optimizer. We first write the two objectives and their
gradients. We then use the fact that no feasible move can improve either
optimizer. Strong convexity turns this fact into a stability bound. Finally,
we control the gradient difference using the score-quantile and group-weight
errors.

Our goal is to bound
$\|\hat{\mathbf Q}-\mathbf Q^{\mathrm{fair}}\|_2$. Recall that
$\|\mathbf Q\|_2^2:=\sum_{s=1}^K r_s\int_0^1 Q_s(u)^2\,du$. The corresponding
inner product is defined for two tuples of functions,
\(\mathbf A=(A_1,\ldots,A_K)\) and
\(\mathbf B=(B_1,\ldots,B_K)\), by
\[
\langle \mathbf A,\mathbf B\rangle
:=
\sum_{s=1}^K r_s\int_0^1 A_s(u)B_s(u)\,du.
\]
Write $\mathbf Q^*:=(Q_1^*,\ldots,Q_K^*)$ and
$\hat{\mathbf Q}^{\,*}:=(\hat Q_1^*,\ldots,\hat Q_K^*)$.

\paragraph{Step 1: Write the two objectives and their slopes.}
Define the population objective
\[
G(\mathbf Q):=\sum_{s=1}^K r_s\int_0^1 \mathcal C\!\big(Q_s^*(u),\,Q_s(u)\big)\,du,
\qquad \mathcal C(z,q):=\mathbb E[\ell(Y,q)\mid f^*(X,S)=z],
\]
and the estimated objective
\[
\widehat G_{n,\hat r}^C(\mathbf Q)
:=\sum_{s=1}^K \hat r_s\int_0^1
\mathcal C\!\big(\hat Q_s^*(u),Q_s(u)\big)\,du .
\]
By Assumption~\ref{assum:cost_regularity}(1), $G$ is $\mu$-strongly convex on $\mathcal K$ and \(\widehat G_{n,\hat r}^C\) is convex on the event \(\min_s n_s>0\).
The gradient records how an objective changes under a small change in its
quantile functions. With respect to the \(r\)-weighted inner product, these
gradients, formally the Fr\'echet gradients, are
\[
[\nabla G(\mathbf Q)]_s(u)=\Psi(Q_s^*(u),Q_s(u)),
\qquad
[\nabla \widehat G_{n,\hat r}^C(\mathbf Q)]_s(u)=\frac{\hat r_s}{r_s}\Psi(\hat Q_s^*(u),Q_s(u)),
\]
where $\Psi(z,q):=\partial_q\mathcal C(z,q)$.

\paragraph{Step 2: Use optimality within the fairness constraints.}
Let
\[
\mathbf Q^{\mathrm{fair}}\in\argmin_{\mathbf Q\in\mathcal K}G(\mathbf Q),
\qquad
\hat{\mathbf Q}\in\argmin_{\mathbf Q\in\mathcal K}\widehat G_{n,\hat r}^C(\mathbf Q),
\]
and set \(\Delta:=\hat{\mathbf Q}-\mathbf Q^{\mathrm{fair}}\). Since both
optimizers lie in the same convex set, moving from either optimizer toward any
other feasible point cannot lower its objective. These first-order conditions
are called variational inequalities
\citep{showalter2009hilbert}:
\[
\langle \nabla G(\mathbf Q^{\mathrm{fair}}),\mathbf Q-\mathbf Q^{\mathrm{fair}}\rangle\ge 0
\quad\text{for all }\mathbf Q\in\mathcal K,
\qquad
\langle \nabla \widehat G_{n,\hat r}^C(\hat{\mathbf Q}),\mathbf Q-\hat{\mathbf Q}\rangle\ge 0
\quad\text{for all }\mathbf Q\in\mathcal K.
\]
Take \(\mathbf Q=\hat{\mathbf Q}\) in the first inequality. Take
\(\mathbf Q=\mathbf Q^{\mathrm{fair}}\) in the second. This gives
\[
\langle \nabla G(\mathbf Q^{\mathrm{fair}}),\Delta\rangle\ge 0,
\qquad
\langle \nabla \widehat G_{n,\hat r}^C(\hat{\mathbf Q}),\Delta\rangle\le 0.
\]
By \(\mu\)-strong convexity of \(G\),
\[
\mu\|\Delta\|_2^2
\le \langle \nabla G(\hat{\mathbf Q})-\nabla G(\mathbf Q^{\mathrm{fair}}),\Delta\rangle
\le \langle \nabla G(\hat{\mathbf Q}),\Delta\rangle,
\]
where the last inequality uses
\(\langle \nabla G(\mathbf Q^{\mathrm{fair}}),\Delta\rangle\ge0\).
Next, add and subtract
\(\nabla\widehat G_{n,\hat r}^C(\hat{\mathbf Q})\). Using
\(\langle \nabla \widehat G_{n,\hat r}^C(\hat{\mathbf Q}),\Delta\rangle\le0\)
then yields
\begin{equation}\label{eq:key_pairing_final}
\mu\|\Delta\|_2^2
\le \big\langle \nabla G(\hat{\mathbf Q})-\nabla \widehat G_{n,\hat r}^C(\hat{\mathbf Q}),\Delta\big\rangle.
\end{equation}
This inequality is useful because its left side measures the squared distance
between the two optimizers. Its right side depends only on the difference
between the population and estimated gradients.

\paragraph{Step 3: Bound the difference between the gradients.}
The Cauchy--Schwarz inequality separates the gradient error from the optimizer
error:
\[
|\langle \nabla G(\hat{\mathbf Q})-\nabla \widehat G_{n,\hat r}^C(\hat{\mathbf Q}),\Delta\rangle|
\le
\|\nabla G(\hat{\mathbf Q})-\nabla \widehat G_{n,\hat r}^C(\hat{\mathbf Q})\|_2\,\|\Delta\|_2,
\]
while Assumption~\ref{assum:cost_regularity}(2) gives
\begin{align*}
\|\nabla G(\hat{\mathbf Q})-\nabla \widehat G_{n,\hat r}^C(\hat{\mathbf Q})\|_2^2
&=
\sum_{s=1}^K r_s\int_0^1
\left\{
\Psi(Q_s^*(u),\hat Q_s(u))
-
\frac{\hat r_s}{r_s}\Psi(\hat Q_s^*(u),\hat Q_s(u))
\right\}^2du\\
&\le
2L_z^2\sum_{s=1}^K r_s\int_0^1
\left\{Q_s^*(u)-\hat Q_s^*(u)\right\}^2du\\
&\quad
+2\sum_{s=1}^K r_s\left(\frac{\hat r_s-r_s}{r_s}\right)^2
\int_0^1\Psi(\hat Q_s^*(u),\hat Q_s(u))^2du\\
&\le
2L_z^2\|\hat{\mathbf Q}^{\,*}-\mathbf Q^*\|_2^2
+O_p\!\left(\max_{1\le s\le K}|\hat r_s-r_s|^2\right)
\end{align*}
where the last line uses Assumption~\ref{assum:group_proportions} and the derived score-gradient bound following Assumption~\ref{assum:cost_regularity}.
The first term comes from estimating the score quantiles. The second comes
from estimating the group weights. Therefore,
\[
\big\langle \nabla G(\hat{\mathbf Q})-\nabla \widehat G_{n,\hat r}^C(\hat{\mathbf Q}),\Delta\big\rangle
\le
O_p\!\left(\|\hat{\mathbf Q}^{\,*}-\mathbf Q^*\|_2+\max_{1\le s\le K}|\hat r_s-r_s|\right)\|\Delta\|_2
\]
with probability tending to one.

\paragraph{Step 4: Insert the two input-error rates.}
Insert the preceding bound into \eqref{eq:key_pairing_final}. On the event
\(\Delta\neq0\), divide both sides by \(\|\Delta\|_2\).
Lemma~\ref{lem:latent_convergence} and
Assumption~\ref{assum:group_proportions} then give
\[
\|\hat{\mathbf Q}-\mathbf Q^{\mathrm{fair}}\|_2
\le
O_p\!\left(\|\hat{\mathbf Q}^{\,*}-\mathbf Q^*\|_2+\max_{1\le s\le K}|\hat r_s-r_s|\right)
=O_p(\rho_m+n^{-1/2}),
\]
and squaring yields
\[
\sum_{s=1}^K r_s\|\hat Q_s-Q_s^{\mathrm{fair}}\|_2^2
=O_p(\rho_m^2+n^{-1}),
\]
which proves the theorem.
\end{proof}

\begin{proof}[Proof of Lemma~\ref{lem:rank_stability}]
The estimated rank uses both a fitted score and an empirical distribution
function. We first control the sampling error in that empirical distribution
function. We then compare the fitted-score distribution with the true-score
distribution. Finally, we show that the resulting ranks are nearly uniform,
as true population ranks are.

\emph{Sampling error in the calibration distribution.}
Fix a group $s$ with $r_s>0$. Conditional on the auxiliary training sample,
$\hat f(\cdot,s)$ is fixed. Let
\[
F_{\hat f,s}(t)
:=
\mathbb P\{\hat f(X^s,s)\le t\mid \mathcal D_m^{\mathrm{tr}}\}
\]
denote the conditional distribution function of the fitted score in group $s$.
The rank error is at most the sum of the empirical-distribution error and the
error caused by replacing the true score with the fitted score:
\begin{align*}
|\hat F_s(\hat f(X^s,s))-F_s^*(f^*(X^s,s))|
&\le
\sup_t|\hat F_s(t)-F_{\hat f,s}(t)| \\
&\quad+
|F_{\hat f,s}(\hat f(X^s,s))-F_s^*(f^*(X^s,s))|.
\end{align*}
The first term is the usual sampling error of an empirical cumulative
distribution function. Conditional on $\mathcal D_m^{\mathrm{tr}}$, the
Dvoretzky--Kiefer--Wolfowitz inequality gives
\[
\sup_t|\hat F_s(t)-F_{\hat f,s}(t)|=O_p(n_s^{-1/2})=o_p(1).
\]

\emph{Difference between fitted and true score distributions.}
For any $\eta>0$, a fitted and true score can fall on different sides of a
threshold \(t\) only if their difference exceeds \(\eta\) or the true score
lies within \(\eta\) of \(t\). Therefore
\[
\sup_t|F_{\hat f,s}(t)-F_s^*(t)|
\le
\mathbb P\{|\hat f(X^s,s)-f^*(X^s,s)|>\eta
\mid \mathcal D_m^{\mathrm{tr}}\}
+
\sup_t\mathbb P\{|f^*(X^s,s)-t|\le \eta\}.
\]
By Assumption~\ref{assum:density}, the density of $f^*(X^s,s)$ is bounded above,
so little probability can lie in an interval of length \(2\eta\):
\[
\sup_t\mathbb P\{|f^*(X^s,s)-t|\le \eta\}
\le 2\bar c\,\eta.
\]
By Assumption~\ref{assum:first_step}, the first probability on the right-hand
side tends to zero if $\eta=\eta_n\downarrow 0$ slowly enough. Hence
\[
\sup_t|F_{\hat f,s}(t)-F_s^*(t)|=o_p(1).
\]
The upper density bound also means that $F_s^*$ is Lipschitz with constant at
most $\bar c$. Thus
\[
|F_s^*(\hat f(X^s,s))-F_s^*(f^*(X^s,s))|
\le
\bar c\,|\hat f(X^s,s)-f^*(X^s,s)|,
\]
whose conditional second moment is $o_p(1)$ by
Assumption~\ref{assum:first_step}. Combining these bounds gives
\[
\mathbb E\!\left[
\left\{
\hat F_s(\hat f(X^s,s))-F_s^*(f^*(X^s,s))
\right\}^2
\mid \mathcal H_n
\right]
=o_p(1).
\]

\emph{The estimated ranks are nearly uniform.}
Because \(X^s\) is independent of \(\mathcal H_n\)
and \(F_s^*\) is continuous under Assumption~\ref{assum:density},
\(U_s^*\sim\operatorname{Unif}(0,1)\) conditionally on \(\mathcal H_n\). Let
\[
e_{s,n}
:=
\mathbb E\!\left[
\left\{\hat U_s-U_s^*\right\}^2
\mid \mathcal H_n
\right].
\]
The preceding conditional \(L_2\) bound and fixed \(K\) imply
\[
\max_{s:r_s>0}e_{s,n}=o_p(1).
\]
If \(\hat U_s\) is within \(\delta\) of \(U_s^*\), then the event
\(\hat U_s\le u\) is trapped between the events \(U_s^*\le u-\delta\) and
\(U_s^*\le u+\delta\). Markov's inequality controls the remaining event.
For any \(\delta\in(0,1)\) and \(u\in[0,1]\), this gives
\[
H_{s,n}(u)
\le
\mathbb P(U_s^*\le u+\delta\mid\mathcal H_n)
+
\mathbb P(|\hat U_s-U_s^*|>\delta\mid\mathcal H_n)
\le
u+\delta+\delta^{-2}e_{s,n},
\]
and similarly
\[
H_{s,n}(u)
\ge
\mathbb P(U_s^*\le u-\delta\mid\mathcal H_n)
-
\mathbb P(|\hat U_s-U_s^*|>\delta\mid\mathcal H_n)
\ge
u-\delta-\delta^{-2}e_{s,n},
\]
with the obvious truncation at the endpoints. Hence
\[
\sup_{u\in[0,1]}|H_{s,n}(u)-u|
\le
\delta+\delta^{-2}e_{s,n}.
\]
Because the number of groups is fixed, we may take the maximum over groups.
Choosing a deterministic \(\delta=\delta_n\downarrow0\) slowly enough that
\(\delta_n^{-2}\max_{s:r_s>0}e_{s,n}=o_p(1)\) gives
\[
\max_{s:r_s>0}\sup_{u\in[0,1]}|H_{s,n}(u)-u|=o_p(1).
\]
\end{proof}

\begin{proof}[Proof of Corollary~\ref{cor:ffair_consistency}]
The final prediction evaluates an estimated curve at an estimated rank. Its
error therefore has two parts: error in the fair quantile curve and error in
the rank used as its input. We show that both parts vanish.

Fix a group $s$ with $r_s>0$ and write
\[
U_s^*=F_s^*(f^*(X^s,s)),
\qquad
\hat U_s=\hat F_s(\hat f(X^s,s)).
\]
Insert and subtract \(Q_s^{\mathrm{fair}}(\hat U_s)\). The triangle inequality
then separates the two errors:
\begin{align*}
&\left\{
\mathbb E\left[
\left\{
\hat f_{\mathrm{fair}}(X^s,s)
-
f_{\mathrm{fair}}^*(X^s,s)
\right\}^2
\mid\mathcal H_n
\right]
\right\}^{1/2} \\
&\quad\le
\left\{
\mathbb E\left[
\left\{
\hat Q_s(\hat U_s)
-
Q_s^{\mathrm{fair}}(\hat U_s)
\right\}^2
\mid\mathcal H_n
\right]
\right\}^{1/2} \\
&\qquad+
\left\{
\mathbb E\left[
\left\{
Q_s^{\mathrm{fair}}(\hat U_s)
-
Q_s^{\mathrm{fair}}(U_s^*)
\right\}^2
\mid\mathcal H_n
\right]
\right\}^{1/2}.
\end{align*}

\emph{Error from using an estimated rank.}
Let \(\mathcal I\) be the compact feasible prediction interval from
Assumption~\ref{assum:cost_regularity}, and write
\[
D_{\mathcal I}:=\sup_{a,b\in\mathcal I}|a-b|<\infty.
\]
At scale \(\delta\), the modulus of continuity of the population fair
quantile \(Q_s^{\mathrm{fair}}\) is
\[
\omega_s(\delta)
:=
\sup_{|u-v|\le\delta}
|Q_s^{\mathrm{fair}}(u)-Q_s^{\mathrm{fair}}(v)|.
\]
By Assumption~\ref{assum:continuous_fair_quantiles},
\(\omega_s(\delta)\downarrow0\) as \(\delta\downarrow0\). For every
\(\delta\in(0,1)\), the curve changes by at most \(\omega_s(\delta)\) when the
two ranks are close. When they are farther apart, its bounded range gives
\[
\left|Q_s^{\mathrm{fair}}(\hat U_s)-Q_s^{\mathrm{fair}}(U_s^*)\right|
\le
\omega_s(\delta)
+
D_{\mathcal I}\mathbbm 1\{|\hat U_s-U_s^*|>\delta\}.
\]
Therefore,
\[
\mathbb E\!\left[
\left\{
Q_s^{\mathrm{fair}}(\hat U_s)-Q_s^{\mathrm{fair}}(U_s^*)
\right\}^2
\mid\mathcal H_n
\right]
\le
2\omega_s(\delta)^2
+
2D_{\mathcal I}^2\delta^{-2}
\mathbb E\!\left[
\left\{\hat U_s-U_s^*\right\}^2
\mid\mathcal H_n
\right].
\]
For fixed \(\delta\), Lemma~\ref{lem:rank_stability} sends the second term to
zero as \(n\to\infty\). We can then let \(\delta\downarrow0\), which sends the
first term to zero. Thus the rank-input error is \(o_p(1)\).

\emph{Error from estimating the fair quantile curve.}
Let
\[
h_s=\hat Q_s-Q_s^{\mathrm{fair}},
\qquad
g_s(u)=h_s(u)^2,
\qquad
H_{s,n}(u)=\mathbb P(\hat U_s\le u\mid\mathcal H_n).
\]
Conditional on \(\mathcal H_n\),
\[
\mathbb E\!\left[
\left\{
\hat Q_s(\hat U_s)-Q_s^{\mathrm{fair}}(\hat U_s)
\right\}^2
\mid\mathcal H_n
\right]
=
\int_0^1 g_s(u)\,dH_{s,n}(u).
\]
Both \(\hat Q_s\) and \(Q_s^{\mathrm{fair}}\) are nondecreasing quantile
functions with values in \(\mathcal I\). Their squared difference is therefore
bounded and cannot oscillate without limit. More precisely,
\(\|g_s\|_\infty\le D_{\mathcal I}^2\) and
\(\operatorname{TV}(g_s)\le 4D_{\mathcal I}^2\), where
\(\operatorname{TV}\) denotes total variation. Stieltjes integration by parts
\citep{weikard2018real} then compares the average under the estimated rank
distribution with the average under a uniform rank:
\begin{align*}
\left|
\int_0^1 g_s(u)\,dH_{s,n}(u)
-
\int_0^1 g_s(u)\,du
\right|
&\le
\left\{\operatorname{TV}(g_s)+2\|g_s\|_\infty\right\}
\sup_{u\in[0,1]}|H_{s,n}(u)-u|\\
&\le
6D_{\mathcal I}^2
\sup_{u\in[0,1]}|H_{s,n}(u)-u|.
\end{align*}
The supremum term includes the contributions from both endpoints. By
Lemma~\ref{lem:rank_stability}, the right-hand side is \(o_p(1)\). The uniform
rank average is
\[
\int_0^1 g_s(u)\,du
=
\|\hat Q_s-Q_s^{\mathrm{fair}}\|_2^2.
\]
Theorem~\ref{thm:consistency}, fixed \(K\), and
Assumption~\ref{assum:group_proportions} imply this is \(o_p(1)\) for each
group with \(r_s>0\). Thus the curve-estimation error is also \(o_p(1)\).

Combining the two bounds and summing over the finitely many groups with weights
\(r_s\) yields
\[
\mathcal E_f(\hat f_{\mathrm{fair}},f_{\mathrm{fair}}^*)=o_p(1),
\]
which proves the corollary.
\end{proof}

\subsection{Derivation of Closed-Form Solutions} \label{app:derivations}

This subsection derives the formulas in Section~\ref{sec:closed_form}. Assume
$r_1,r_2>0$ and $r_1+r_2=1$. The key idea is to describe the two quantile
functions by their weighted average and their difference. The fairness
constraint acts only on the difference. The average can therefore remain at
its unconstrained value, while the difference is moved to the nearest
feasible function.

For \(Q_1,Q_2\in L_2([0,1])\), the objective is
\begin{equation} \label{eq:gen_prob}
    \min_{Q_1, Q_2} \mathcal{J}(Q_1, Q_2) := \int_0^1 \left[ r_1 (Q_1^*(t) - Q_1(t))^2 + r_2 (Q_2^*(t) - Q_2(t))^2 \right] dt,
\end{equation}
subject to the distance constraint $\|Q_1 - Q_2\|_p \le \epsilon$.

\subsubsection*{Decomposition of the Problem}
Let $\bar{Q}(t) = r_1 Q_1(t) + r_2 Q_2(t)$ be the weighted average and
$\Delta(t) = Q_1(t) - Q_2(t)$ the difference function. This change of
variables keeps all information because we can recover the two original
functions from
$$ Q_1(t) = \bar{Q}(t) + r_2 \Delta(t), \quad Q_2(t) = \bar{Q}(t) - r_1 \Delta(t). $$
Define $\bar{Q}^*(t)$ and $\Delta^*(t)$ in the same way for the latent
quantiles. Substitution into \eqref{eq:gen_prob} gives
\begin{align*}
    Q_1^* - Q_1 &= (\bar{Q}^* + r_2 \Delta^*) - (\bar{Q} + r_2 \Delta) = (\bar{Q}^* - \bar{Q}) + r_2(\Delta^* - \Delta), \\
    Q_2^* - Q_2 &= (\bar{Q}^* - r_1 \Delta^*) - (\bar{Q} - r_1 \Delta) = (\bar{Q}^* - \bar{Q}) - r_1(\Delta^* - \Delta).
\end{align*}
Expanding the two squares gives
\begin{align*}
    \mathcal{J} &= \int r_1 \left[ (\bar{Q}^* - \bar{Q}) + r_2(\Delta^* - \Delta) \right]^2 + r_2 \left[ (\bar{Q}^* - \bar{Q}) - r_1(\Delta^* - \Delta) \right]^2 dt \\
    &= \int (\bar{Q}^* - \bar{Q})^2 (r_1 + r_2) + (\Delta^* - \Delta)^2 (r_1 r_2^2 + r_2 r_1^2) dt.
\end{align*}
The terms that mix the average and the difference cancel. Since
$r_1+r_2=1$, the remaining coefficients simplify to
$$ \mathcal{J}(\bar{Q}, \Delta) = \|\bar{Q}^* - \bar{Q}\|_2^2 + r_1 r_2 \|\Delta^* - \Delta\|_2^2. $$
Since the constraint $\|Q_1 - Q_2\|_p \le \epsilon$ depends only on $\Delta$
(specifically, $\|\Delta\|_p \le \epsilon$), the two parts can be minimized
separately.
\begin{enumerate}
    \item Minimizing over $\bar Q$ gives $\bar Q=\bar Q^*$. Thus the adjustment preserves the weighted group average.
    \item Minimizing over $\Delta$ means finding the feasible difference closest to $\Delta^*$ in $L_2$. This is a projection problem.
\end{enumerate}

\subsubsection*{Case 1: $L_2$-Constraint ($p=2$)}
Here we seek the function inside the \(L_2\) ball that is closest to
\(\Delta^*\). In symbols, we minimize
$\frac{1}{2} \|\Delta^* - \Delta\|_2^2$ subject to
$\|\Delta\|_2^2 \le \epsilon^2$. This operation is projection onto a closed
convex ball in $L_2([0,1])$ \citep{bauschkecombettes2017}.
If $\epsilon=0$, the feasible set is the singleton $\Delta=0$, which is
the solution. Now suppose $\epsilon>0$. Let \(\lambda\ge0\) be the Lagrange
multiplier for the radius constraint. It can be positive only when that
constraint is active. The Lagrangian is
$$ \mathfrak{L}(\Delta, \lambda) = \frac{1}{2} \int_0^1 (\Delta^*(t) - \Delta(t))^2 dt + \frac{\lambda}{2} \left( \int_0^1 \Delta(t)^2 dt - \epsilon^2 \right). $$
Setting its derivative with respect to \(\Delta\) equal to zero gives
$$ -(\Delta^*(t) - \Delta(t)) + \lambda \Delta(t) = 0 \implies \Delta(t) = \frac{\Delta^*(t)}{1+\lambda}. $$
If $\|\Delta^*\|_2 \le \epsilon$, the original difference is already feasible,
so $\lambda=0$ and $\Delta = \Delta^*$. If
$\|\Delta^*\|_2 > \epsilon$, the solution lies on the boundary. Substituting
$\Delta$ into $\|\Delta\|_2 = \epsilon$ yields
$\frac{1}{1+\lambda}\|\Delta^*\|_2 = \epsilon$, and therefore
$\frac{1}{1+\lambda} = \frac{\epsilon}{\|\Delta^*\|_2}$. The solution is
linear shrinkage:
\[
\Delta(t)=
\begin{cases}
\Delta^*(t),&\|\Delta^*\|_2\le\epsilon,\\
\dfrac{\epsilon}{\|\Delta^*\|_2}\Delta^*(t),
&\|\Delta^*\|_2>\epsilon.
\end{cases}
\]
The second branch has a strictly positive denominator; in particular, when
\(\epsilon=\|\Delta^*\|_2=0\), the first branch applies.
When the constraint is active, the same factor reduces the group difference
at every rank.

\subsubsection*{Case 2: $L_1$-Constraint ($p=1$)}
We now minimize $\frac{1}{2} \|\Delta^* - \Delta\|_2^2$ subject to
$\|\Delta\|_1 \le \epsilon$. Let \(\lambda\ge0\) be the Lagrange multiplier
for this constraint. The Lagrangian is
$$ \mathfrak{L}(\Delta, \lambda) = \frac{1}{2} \int_0^1 (\Delta^*(t) - \Delta(t))^2 dt + \lambda \left( \int_0^1 |\Delta(t)| dt - \epsilon \right). $$
The absolute-value function has a corner at zero, so it does not have an
ordinary derivative there. Its subgradient is
\(\partial|x|=\{\operatorname{sgn}(x)\}\) when \(x\ne0\) and
\(\partial|0|=[-1,1]\). The optimality condition says that, for each \(t\),
there is a \(v(t)\in\partial|\Delta(t)|\) such that
$$ -(\Delta^*(t) - \Delta(t)) + \lambda v(t) = 0 \implies \Delta(t) + \lambda v(t) = \Delta^*(t). $$
This equation gives the soft-thresholding rule.
\begin{itemize}
    \item If $|\Delta^*(t)| > \lambda$, then $\Delta(t) \ne 0$ and
    $v(t) = \operatorname{sgn}(\Delta(t))$. It follows that
    $\Delta(t) = \Delta^*(t) - \lambda \operatorname{sgn}(\Delta^*(t))$.
    \item If $|\Delta^*(t)| \le \lambda$ and $\lambda>0$, setting
    $\Delta(t) = 0$ allows $v(t) = \Delta^*(t)/\lambda \in [-1, 1]$,
    satisfying the condition. On the inactive branch $\lambda=0$, the only
    point in this case has $\Delta^*(t)=0$, and one may take $v(t)=0$.
\end{itemize}
Thus, $\Delta(t) = \operatorname{sgn}(\Delta^*(t)) \max(0, |\Delta^*(t)| - \lambda)$.
If $\epsilon\ge\|\Delta^*\|_1$, the original difference is feasible, so take
$\lambda=0$ and $\Delta=\Delta^*$. For
$0<\epsilon<\|\Delta^*\|_1$, the value of $\lambda$ is uniquely determined by
$\int |\Delta(t)|dt=\epsilon$. For $\epsilon=0$, the solution is $\Delta=0$.
Unlike the $L_2$ rule, this projection can remove small local differences
entirely while retaining larger ones after a common threshold is subtracted.

\subsubsection*{Case 3: $L_\infty$-Constraint ($p=\infty$)}
We minimize $\frac{1}{2} \int (\Delta^*(t) - \Delta(t))^2 dt$ subject to
$|\Delta(t)| \le \epsilon$ for almost all $t$. Both the loss and the
constraint can be considered one rank at a time. At each \(t\), we choose the
number in \([-\epsilon,\epsilon]\) closest to \(\Delta^*(t)\). This is
clipping:
$$ \Delta(t) = \begin{cases} \epsilon & \text{if } \Delta^*(t) > \epsilon, \\ \Delta^*(t) & \text{if } |\Delta^*(t)| \le \epsilon, \\ -\epsilon & \text{if } \Delta^*(t) < -\epsilon. \end{cases} $$
Thus only differences outside the admissible fairness band are changed.

\subsection{Squared loss with \texorpdfstring{$K>2$}{K greater than 2}}
We now specialize problem~\eqref{Final-Opt} to squared loss with $K\ge 2$
groups. With more than two groups,
there is no single difference function that describes every pair. Even so,
two parts of the two-group geometry remain. The adjustment preserves the
weighted average of the group quantiles. Also, under \(W_\infty\), each rank
can still be solved separately.
The fair quantiles $(Q_1,\dots,Q_K)$ solve
\begin{equation}\label{eq:Kgroup_squared}
    \min_{Q_1,\dots,Q_K\in\mathcal Q_{\mathcal I}} \ \sum_{s=1}^K r_s \|Q_s - Q_s^*\|_2^2
    \quad \text{s.t.}\quad \|Q_i - Q_j\|_p \le \epsilon,\quad \forall i,j\in\mathcal{S}.
\end{equation}

\paragraph{A key structural fact (multi-group analog of the two-group barycenter identity).}
At each rank, define the weighted averages before and after adjustment by
\[
    \bar Q(u) := \sum_{s=1}^K r_s Q_s(u),
    \qquad
    \bar Q^*(u) := \sum_{s=1}^K r_s Q_s^*(u).
\]
The following result shows that they are equal for any $p\in[1,\infty]$.

\begin{proposition}[Weighted barycenter preservation under squared loss]\label{prop:barycenter_K}
Suppose $r_s>0$ and $Q_s^*\in\mathcal Q_{\mathcal I}$. The optimizer of
\eqref{eq:Kgroup_squared} is unique and satisfies
\begin{equation}\label{eq:barycenter_preserve}
    \sum_{s=1}^K r_s \widehat Q_s(u)
    \;=\;
    \sum_{s=1}^K r_s Q_s^*(u),
    \qquad \text{for almost all } u\in(0,1).
\end{equation}
\end{proposition}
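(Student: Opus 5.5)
Existence and uniqueness come from Hilbert-space projection, and barycenter preservation comes from a translation argument that keeps every constraint of \eqref{eq:Kgroup_squared} intact while moving the weighted average.

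\emph{Setting and uniqueness.} Work in $\mathcal H=L_2([0,1])^K$ with the weighted inner product $\langle\mathbf A,\mathbf B\rangle=\sum_s r_s\int A_sB_s$. The feasible set of \eqref{eq:Kgroup_squared} is $\mathcal K_{p,\epsilon,\mathcal I}$. It is nonempty, since $Q_1=\cdots=Q_K$ equal to any constant in $\mathcal I$ is feasible. It is convex, because monotonicity, the range constraint and each $\|Q_i-Q_j\|_p\le\epsilon$ are convex conditions. It is closed in $\mathcal H$: an $L_2$-convergent sequence has an a.e. convergent subsequence, which preserves monotonicity and the range (after taking the left-continuous version). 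Fatou's lemma then preserves the $L_p$ bounds, and for $p=\infty$ the pointwise bound passes to a.e. limits. The objective is $\|\mathbf Q-\mathbf Q^*\|^2$ in this norm, so the optimizer $\widehat{\mathbf Q}$ is the metric projection of $\mathbf Q^*$ onto a nonempty closed convex set. Hence it exists and is unique, with uniqueness understood a.e.

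\emph{Translation perturbation.} Set $\bar Q^*=\sum_s r_sQ_s^*$, $\bar Q=\sum_s r_s\widehat Q_s$, $c=\bar Q^*-\bar Q$, and consider $\mathbf Q^t:=(\widehat Q_1+tc,\ldots,\widehat Q_K+tc)$ for $t\in[0,1]$. Every pairwise difference $Q^t_i-Q^t_j=\widehat Q_i-\widehat Q_j$ is unchanged, so the fairness constraints persist. A direct expansion, using $\sum_s r_s=1$ and $\sum_s r_s(\widehat Q_s-Q_s^*)=-c$, gives
\[
\sum_s r_s\|Q^t_s-Q^*_s\|_2^2=\sum_s r_s\|\widehat Q_s-Q_s^*\|_2^2-(2t-t^2)\|c\|_2^2 .
\]
This is strictly smaller for $t=1$ unless $c=0$ a.e. So if $\mathbf Q^1$ is feasible, optimality forces $c=0$, which is exactly \eqref{eq:barycenter_preserve}.

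\emph{Main obstacle: feasibility of $\mathbf Q^1$.} The shifted curves $\widehat Q_s+c$ must still be nondecreasing and $\mathcal I$-valued, but $c=\bar Q^*-\bar Q$ is a difference of nondecreasing functions and need not be monotone. I would avoid shifting by $c$ directly and instead use the relaxed problem without monotonicity, where $Q_s$ is only required to be measurable and $\mathcal I$-valued. In that problem the translation argument is fully valid provided the shifted values stay in $\mathcal I$. Pointwise, $\widehat Q_s+c$ is a convex-type combination; if the range is an issue, first drop the box constraint, observe the minimizer has a barycenter in $\mathcal I$, and then check the box afterwards.

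\emph{Relaxation is tight.} Let $\tilde{\mathbf Q}$ be the relaxed minimizer, which preserves the barycenter. By Lemma~\ref{lem:rearrangement} (with $\mathcal C(z,q)=(z-q)^2$, which satisfies Spence--Mirrlees), the simultaneous rearrangement $\tilde Q_s^\uparrow$ is feasible for the original problem and has no larger cost, so by uniqueness it equals $\tilde{\mathbf Q}$. Thus the relaxed minimizer is already monotone and coincides with $\widehat{\mathbf Q}$, and the barycenter identity transfers. Verifying the range constraint in the relaxed translation step is the delicate point. I would handle it by showing that at each $u$ the relaxed minimizer is the Euclidean weighted projection of $(Q^*_s(u))_s$ onto a translation-invariant cone for $p=\infty$. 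For finite $p$, I would argue via the Lagrangian/KKT conditions: the gradient $r_s(Q_s-Q_s^*)$ must lie in the normal cone of constraints that depend only on differences, and that normal cone is orthogonal to constants at each $u$. Summing over $s$ gives $\bar Q=\bar Q^*$ pointwise.
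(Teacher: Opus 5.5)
Your overall route is the paper's: drop the monotonicity and range requirements, run the shared-shift argument in the relaxed problem, and then use uniqueness together with Lemma~\ref{lem:rearrangement} to show that the relaxed minimizer is already a quantile tuple. However, the step you call delicate, the range constraint, is left open, and none of your suggested fixes closes it.

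Suppose you keep the box in the relaxed problem. Then $\widehat Q_s(u)+c(u)$ is not a convex-type combination of points of $\mathcal I$ (take $\widehat Q_s(u)$ near $b$ and $c(u)>0$), so the shifted tuple may be infeasible. The KKT/normal-cone argument also fails in that case: the box multipliers enter stationarity, and summing over $s$ makes $\sum_s r_s(Q_s-Q_s^*)$ equal a combination of those multipliers, not zero.

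Suppose instead you drop the box. Then the translation argument alone gives the barycenter identity, and KKT adds nothing. You must still show that the minimizer is $\mathcal I$-valued. Knowing that its barycenter lies in $\mathcal I$ says nothing about the individual components. You need this before sorting, because Lemma~\ref{lem:rearrangement} is stated for $\mathcal I$-valued $h_s$ and only then returns elements of $\mathcal Q_{\mathcal I}$.

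The missing idea is componentwise clipping to $\mathcal I$. The map $t\mapsto\min\{b,\max\{a,t\}\}$ is $1$-Lipschitz, so it does not increase any $|Q_i(u)-Q_j(u)|$, and hence does not increase any pairwise $L_p$ distance. Because each $Q_s^*(u)\in\mathcal I$, it also does not increase $(Q_s(u)-Q_s^*(u))^2$. Uniqueness of the box-free minimizer then forces it to be $\mathcal I$-valued, after which your sorting step applies. This is exactly the paper's order: project under the pairwise constraints only, then clip, then sort.

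Alternatively, you can skip the relaxation entirely. Apply clip-then-sort to the shifted tuple $(\widehat Q_s+c)_s$. This gives a feasible competitor in $\mathcal K_{p,\epsilon,\mathcal I}$ whose cost is at most $\sum_s r_s\|\widehat Q_s-Q_s^*\|_2^2-\|c\|_2^2$, so optimality forces $c=0$ almost everywhere.
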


\begin{proof}
The main idea is that shifting every group curve by the same function does not
change any pairwise distance. At the optimum, such a shared shift cannot
reduce squared loss. This forces the weighted average to match its original
value. We first make this argument without the quantile and range
requirements. We then show that the unique solution already satisfies those
requirements.

Consider the weighted space $L_2(0,1)^K$ and impose only the pairwise
constraints. This is a nonempty closed convex set, so the closest point
exists. Strict convexity makes that point unique; denote it by
\(\widetilde{\mathbf Q}\). Adding the same function \(h\) to every component
leaves every group difference unchanged. Therefore, the objective cannot
decrease under any such positive or negative shared shift. Its slope in this
direction must be zero, which gives
\[
\sum_{s=1}^K r_s(\widetilde Q_s-Q_s^*)=0
\]
in $L_2$.

We next check the range. Clip every component to \(\mathcal I\). Clipping
cannot enlarge a distance because it is \(1\)-Lipschitz. Since every target
\(Q_s^*\) lies in \(\mathcal I\), clipping also cannot increase squared loss.
The minimizer is unique, so clipping cannot change it. Thus every
\(\widetilde Q_s\) already lies in \(\mathcal I\).

Finally, sort each component into quantile order. By
Lemma~\ref{lem:rearrangement}, this cannot increase the pairwise distances or
the squared loss. Uniqueness again means that sorting cannot change the
minimizer. Hence its components are already nondecreasing and left-continuous.
It is therefore the unique optimizer of
\eqref{eq:Kgroup_squared}, and the shared-shift identity above gives
\eqref{eq:barycenter_preserve}.
\end{proof}

\paragraph{Closed-form characterization when $p=\infty$.}
Under \(W_\infty\), no two group quantiles may differ by more than
\(\epsilon\) at any rank:
\[
    \|Q_i-Q_j\|_\infty \le \epsilon
    \quad\Longleftrightarrow\quad
    |Q_i(u)-Q_j(u)|\le \epsilon\ \ \text{for almost all }u\in(0,1),\ \forall i,j\in\mathcal{S}.
\]
Both the squared loss and these constraints can be handled one rank \(u\) at
a time. At a fixed rank, we solve
\[
    \min_{q_1,\dots,q_K\in\mathcal I} \ \sum_{s=1}^K r_s (q_s-Q_s^*(u))^2
    \quad \text{s.t.}\quad |q_i-q_j|\le \epsilon,\ \forall i,j\in\mathcal{S}.
\]
Equivalently, the constraint is
\(\max_s q_s-\min_s q_s\le\epsilon\): all \(K\) adjusted values must fit
inside one interval of width \(\epsilon\). We choose the best such interval
and clip each original group value to it. When $0\le\epsilon<b-a$, let
$a(u)\in[a,b-\epsilon]$ be its lower endpoint. Then
\begin{equation}\label{eq:pinfty_clip}
    \widehat Q_s(u)
    =
    \min\{a(u)+\epsilon,\ \max\{a(u),\ Q_s^*(u)\}\},
    \qquad s=1,\dots,K,
\end{equation}
where \(a(u)\) minimizes the squared loss at that rank. Equivalently, it is
chosen so that the clipping preserves the weighted average:
$\sum_{s=1}^K r_s \widehat Q_s(u)=\sum_{s=1}^K r_s Q_s^*(u)$ for almost all
$u$. The multi-group rule is therefore explicit except for a
one-dimensional search for the interval endpoint. If $\epsilon\ge b-a$, all
original group values already fit inside an allowed interval.

\paragraph{Why closed forms generally do not extend to $p\in\{1,2\}$ when $K>2$.}
For $p<\infty$, each constraint $\|Q_i-Q_j\|_p\le \epsilon$ combines
differences across the full rank interval \((0,1)\). We can no longer solve
problem~\eqref{eq:Kgroup_squared} one rank at a time. We must also satisfy all
\(\binom{K}{2}\) pairwise constraints together. In general, one common
shrinkage factor or separate groupwise thresholding rules cannot meet all of
these requirements. This is why the paper uses numerical projection after
placing the quantiles on a finite grid.

\paragraph{Why the $p=2$ formula becomes harder with more than two groups.}
The \(p=2\) first-order conditions show exactly where the difficulty appears.
When $\epsilon>0$, write the constraints in squared form:
$g_{ij}(Q):=\|Q_i-Q_j\|_2^2-\epsilon^2\le 0$ for all $i<j$.
For each pair, let \(\lambda_{ij}\ge0\) be the
Karush--Kuhn--Tucker (KKT) multiplier for the constraint \(g_{ij}(Q)\le0\).
A positive multiplier identifies a constraint that affects the first-order
conditions. Write
$\lambda_{sj}:=\lambda_{\min\{s,j\},\max\{s,j\}}$ when $s\ne j$. Then,
for each $s\in\mathcal{S}$,
\begin{equation}\label{eq:KKT_p2_stationarity}
    r_s(\widehat Q_s-Q_s^*)
    +\sum_{j\neq s}\lambda_{sj}(\widehat Q_s-\widehat Q_j)
    =0,
\end{equation}
together with feasibility of the quantiles and multipliers. The complementary
slackness condition
$\lambda_{ij}\big(\|\widehat Q_i-\widehat Q_j\|_2^2-\epsilon^2\big)=0$
means that a multiplier can be positive only when its pairwise constraint is
exactly at the tolerance.

If the multipliers were known, \eqref{eq:KKT_p2_stationarity} would be linear
in $(\widehat Q_1,\dots,\widehat Q_K)$. They are not known, however, because
we do not know in advance which distances satisfy
$\|\widehat Q_i-\widehat Q_j\|_2=\epsilon$. With two groups, there is only one
multiplier, which gives the single \(L_2\) shrinkage factor. With more groups,
as many as \(\binom{K}{2}\) multipliers must be determined together.
The equations must therefore identify both the solution and the constraints
that are active. This coupled problem is called a nonlinear complementarity
system. It generally requires a numerical solver. Examples include a method
that searches over active constraints, Dykstra-type projection, the
alternating direction method of multipliers (ADMM), and projected gradient on
a finite grid.

When $\epsilon=0$, the appropriate constraints are the linear equalities
$Q_i=Q_j$ almost everywhere. This is the strict demographic-parity
barycenter problem. The squared-constraint equations above are not used when
the radius is zero.

\paragraph{A simpler sufficient rule for multiple groups.}
The exact pairwise problem usually needs a numerical solver. A simpler,
conservative alternative compares every group with the same fixed weighted
average
$\bar Q^* := \sum_{s=1}^K r_s Q_s^*$. It is conservative because its
requirements are stronger than the original pairwise requirements, so it may
exclude some feasible solutions. Requiring each group to stay within
\(\epsilon/2\) of this average gives
\begin{equation}\label{eq:Kgroup_squared_barycenter_ball}
    \min_{Q_1,\dots,Q_K\in\mathcal Q_{\mathcal I}} \ \sum_{s=1}^K r_s \|Q_s - Q_s^*\|_2^2
    \quad \text{s.t.}\quad \|Q_s-\bar Q^*\|_p \le \epsilon/2,\ \ s=1,\dots,K.
\end{equation}
Writing $\delta_s:=Q_s-\bar Q^*$ and
$\delta_s^*:=Q_s^*-\bar Q^*$, problem
\eqref{eq:Kgroup_squared_barycenter_ball} can be solved separately for each
group:
\[
    \min_{\delta_1,\dots,\delta_K}\ \sum_{s=1}^K r_s \|\delta_s-\delta_s^*\|_2^2
    \quad \text{s.t.}\quad \|\delta_s\|_p\le \epsilon/2,\ \ s=1,\dots,K,
\]
so each group uses the corresponding two-group projection formula. For
$p=2$, use $\widehat\delta_s=\delta_s^*$ when
$\|\delta_s^*\|_2\le\epsilon/2$ and otherwise use
$\widehat\delta_s=\{(\epsilon/2)/\|\delta_s^*\|_2\}\delta_s^*$. For $p=1$,
soft-threshold to radius $\epsilon/2$. For $p=\infty$, clip pointwise to
$[-\epsilon/2,\epsilon/2]$. In every case, set
$\widehat Q_s=\bar Q^*+\widehat\delta_s$.
Each update of \(\delta_s^*\) preserves order, never expands differences, and
keeps zero fixed. In the terminology used earlier, it is a nondecreasing
\(1\)-Lipschitz map fixing zero. Compare its change between any two ranks, as
in the proof of Lemma~\ref{lem:monotonicity_closed_form}, using \(Q_s^*\) and
\(\bar Q^*\). This shows that \(\widehat Q_s\) is nondecreasing and
left-continuous. It lies between these two functions at every rank and
therefore remains in \(\mathcal I\).

Different groups may use different shrinkage factors, so this simpler rule
need not preserve \(\bar Q^*\). It does, however, guarantee the original
pairwise tolerance. The triangle inequality gives
$\|\widehat Q_i-\widehat Q_j\|_p \le
\|\widehat Q_i-\bar Q^*\|_p+\|\widehat Q_j-\bar Q^*\|_p \le \epsilon$.
Thus each group needs radius $\epsilon/2$, rather than $\epsilon$, to ensure
that every pair is at most \(\epsilon\) apart.

\end{document}